\newcommand{\notminus}{\mathrel{\mathrlap{\;/}\:-}}
\DeclareFontFamily{U}{mathx}{}
\DeclareFontShape{U}{mathx}{m}{n}{<-> mathx10}{}
\DeclareSymbolFont{mathx}{U}{mathx}{m}{n}
\DeclareMathAccent{\widehat}{0}{mathx}{"70}
\DeclareMathAccent{\widecheck}{0}{mathx}{"71}
\newtheorem{theorem}{Theorem}
\newtheorem{proposition}{Proposition}
\newtheorem{remark}{Remark}
\newtheorem{lemma}{Lemma}
\newcommand{\ci}{\mathrel{\perp\mspace{-10mu}\perp}}
\newcommand{\nindep}{\mathrel{\not\!\ci}}
\newcommand{\mstar}{ \mathlarger{*}}
\date{}
\begin{document}
\title{Constraint- and Score-Based Nonlinear Granger Causality Discovery with Kernels}
\author*[1]{\fnm{Fiona} \sur{Murphy}}\email{murphf20@tcd.ie}
\author[1]{\fnm{Alessio} \sur{Benavoli}}\email{alessio.benavoli@tcd.ie}

\affil*[1]{\orgdiv{School of Computer Science and Statistics}, \orgname{Trinity College Dublin}, \orgaddress{\city{Dublin},  \country{Ireland}}}

\abstract{
    Kernel-based methods are used in the context of Granger Causality to enable the identification of nonlinear causal relationships between time series variables. In this paper, we show that two state of the art kernel-based Granger Causality (GC) approaches can be theoretically unified under the framework of Kernel Principal Component Regression (KPCR), and introduce a method based on this unification, demonstrating that this approach can improve causal identification. Additionally, we introduce a Gaussian Process score-based model with Smooth Information Criterion penalisation on the marginal likelihood, and demonstrate improved performance over existing state of the art time-series nonlinear causal discovery methods. Furthermore, we propose a contemporaneous causal identification algorithm, fully based on GC, using the proposed score-based $GP_{SIC}$ method, and compare its performance to a state of the art contemporaneous time series causal discovery algorithm.
}

\keywords{Granger causality, kernel, Gaussian process, causal discovery}

\maketitle

\section{Introduction}

Granger causality (GC) \citep{granger_1969} is a time series causal discovery framework that uses predictive modeling to identify the underlying causal structure of a time series system. Relying on the assumption that cause precedes effect, GC assesses whether including the lagged information from one time series in the autoregressive model of a second time series enhances its predictions. This improvement indicates 
a predictive relationship between the time series variables, where one time series provides supplemental information about the future of another time series, thereby signifying the presence of a (Granger) causal relationship. GC requires only observational data, and has been used for time series causal discovery across diverse domains, including climate science \citep{runge_inferring_2019}, political and social sciences \citep{ivascu_new_2022}, econometrics \citep{baum_dynamics_2025}, and biological systems studies \citep{emad_caspian_2014}.

The original formulation of GC requires several assumptions to be satisfied for causal identifiability. In regards to the candidate time series system, it is assumed that the time series variables are stationary, and that all variables are observed (absence of latent confounders). GC was initially proposed for bivariate time series systems, but was generalised for the multivariate setting to accommodate the assumption that all relevant variables are included in the analysis \citep{granger_1969}. 
Additional assumptions are made with regard to the types of causal relationships that can be identified within the time series system. GC cannot estimate a causal relationship between time series at an instantaneous time point, relying on the relationship between the lags and predicted values to determine a GC relationship. The initial framework was also limited to assessing only linear relationships. Some of these limitations have been addressed for the time-series causal discovery setting through extensions for nonlinear multivariate analysis \citep{marinazzo_kernel_2008, wismuller_large-scale_2021, runge_detecting_2019}, and the identification of instantaneous causal relationships 
\citep{runge_discovering_2022, peters_causal_nodate}. We introduce the original mathematical formulation of GC in Section \ref{sec:GC_background}.

We aim to compare the theoretical foundations and causal identification performance of several existing constraint- and score-based nonlinear GC methods, as well as introduce two novel kernel-based methods for causal identification: one constraint-based and one score-based. The score-based approach, based on Gaussian Processes (GPs), can also be used to identify contemporaneous causal relationships.

Constraint-based GC methods apply hypothesis testing 
in order to uncover the underlying causal structure. Alternatively, score-based approaches search through possible causal graphs based on an associated scoring function, such as an information criterion, to identify the graph structure with the optimal score \citep{runge_causal_2023}. The constraint-based method proposed by \cite{marinazzo_kernel_2008} introduces a kernel Granger causality (KGC) framework which uses kernel methods to project the data into a Reproducing Kernel Hilbert Space (RKHS), enabling the identification of nonlinear causal relationships. The constraint-based large-scale nonlinear Granger causality (lsNGC) method proposed in \cite{wismuller_large-scale_2021} approaches nonlinear causal identification by coupling a dimensionality reduction step via k-means clustering with a nonlinear transformation of the input data using a generalised radial basis function (GRBF) network. These methods are discussed in Section \ref{sec:joint}. These methods, both employing kernel-based regression to perform nonlinear GC analysis, are closely related.

Our first contribution is to show that they can be unified under the framework of Kernel Principal Component Regression (KPCR). This unification allows us to derive a novel kernel-based method that combines the strengths of both approaches, discussed in Section \ref{sec:kpcr}. 

Another state of the art constraint-based approach in time-series causal discovery is the PCMCI method \citep{runge_detecting_2019}, which uses iterative conditional independence testing to determine the structure of the causal graphical model. For nonlinear causal identification, the PCMCI method is usually implemented using a Gaussian Process Distance Correlation (GPDC) conditional independence test, which can flexibly learn a variety of nonlinear relationships.

Score-based approaches for causal discovery in time series that employ kernel methods typically rely on GPs and use the marginal likelihood as the scoring function. In 
\cite{amblard_gaussian_2012}, Granger causality is determined if the maximised marginal likelihood of a GP model containing the driver variable is greater than a model which excludes it. This comparison of marginal likelihoods is performed in \cite{zaremba_statistical_2022} via a Generalised Likelihood Ratio Test (GLRT). 
The work \cite{cui_gaussian_2022} also uses GPs for learning the topology of directed graphs via a marginal-likelihood scoring function, but employs an Automatic Relevance Determination (ARD) kernel and determines causality directly from the optimised kernel hyperparameters such that only one GP model is required. In \cite{cui_topology_2024}, this approach is extended to a fully Bayesian model using Markov Chain Monte Carlo (MCMC) sampling and incorporates a sparsity prior. 

Our second contribution is to improve the approach in  \cite{cui_gaussian_2022} by performing model selection using an information-based criterion added to the marginal likelihood, resulting in a more traditional score-based criterion for causal discovery. In particular, we include the Smooth Information Criterion (SIC) \citep{oneill_variable_2023}, a differentiable approximation to the $L_0$ norm. SIC provides a smoothed approximation of the Minimum  Information Criterion \citep{su2018sparse}, which is known to outperform many regularization methods and information-based selection criteria. This score-based approach is discussed in Section \ref{sec:ourgp}.

Our third contribution is the derivation of a fully Granger-causality-based statistical procedure for identifying contemporaneous causal links. In particular, under suitable restrictions on the ground-truth causal graph, we show we can  recover contemporaneous causal adjacencies by running GC twice. First, we estimate a graph $G'$ by conditioning on all contemporaneous and lagged variables. Second, to block spurious lagged pathways, we learn a new graph $G''$ that includes only the lagged parents estimated in the first graph $G'$. Finally, we construct a third graph $G'''$ that contains all lagged edges from $G''$ and all contemporaneous edges from $G'$. We then exploit conditional independence relations and Markov equivalence classes to orient as many contemporaneous adjacencies as possible.

We compare the performance of the GC methods on several simulated datasets that feature different types of nonlinear causal relationships. We additionally discuss an application in pH neutralisation plant modelling, evaluating the GC methods on a MATLAB Simulink \citep{MATLAB} simulated plant \citep{michalowski_modelling_2011}, with broader applicability in control engineering. These results are presented in Section \ref{sec:results}.

\subsection{Granger Causality}
\label{sec:GC_background}
In this section, we present the basic Granger causality method. We denote an element of a time series as $\xi_{li}$, where $l \in \{a,b,c,\dots,n_t\}$ is the label of the time series and $i=1,2,\dots,k$ is the time index.
 As proposed in \cite{granger_1969}, Granger causality tests the causal effect of one time series $\boldsymbol{\xi}_a = [\xi_{a1}, \dots, \xi_{a(n+m)}]$ on a second time series $\boldsymbol{\xi}_b = [\xi_{b1}, \dots, \xi_{b(n+m)}]$, where there are observations for a total of $n+m$ time points, and $m$ is the selected lag order. GC is assessed via the linear autoregressive models, 
\begin{align}
    &\xi_{bi} = \beta_0 + \beta_1\xi_{b(i-1)} + \dots + \beta_m\xi_{b(i-m)} + \epsilon_i, ~~\epsilon_i \sim N(0, \sigma^2), \label{eq:restricted_model} \\
    &\xi_{bi} = \beta_0 + \beta_1\xi_{b(i-1)} + \dots + \beta_m\xi_{b(i-m)} + \alpha_1\xi_{a(i-1)} + \dots + \alpha_m\xi_{a(i-m)} + \epsilon_i, ~~\epsilon_i \sim N(0, \sigma^2), \label{eq:unrestricted_model} 
\end{align}
where the restricted model \eqref{eq:restricted_model} estimates the regression coefficients $\boldsymbol{\beta} = [\beta_0, \beta_1, \dots, \beta_m]$ using only the lags of time series $\boldsymbol{\xi}_b$. The unrestricted model \eqref{eq:unrestricted_model} includes the lagged values of both time series $\boldsymbol{\xi}_b$ and $\boldsymbol{\xi}_a$, and additionally estimates the regression coefficients $\boldsymbol{\alpha} = [\alpha_1, \dots, \alpha_m]$ associated with the lags of $\boldsymbol{\xi}_a$. Both models assume additive Gaussian noise, which has zero mean and fixed variance $\sigma^2$, and is independent across time points. 

The regression coefficients can be interpreted such that a Granger causal relationship exists if any of the estimated regression coefficients of time series $\boldsymbol{\xi}_a$ differ from $0$, indicating that time series $\boldsymbol{\xi}_a$ contributes additional information to the predictive model of time series $\boldsymbol{\xi}_b$. This is formalised as a comparison of the variance of the residuals using an f-test, such that  $\boldsymbol{\xi}_a \xrightarrow{GC} \boldsymbol{\xi}_b$ if the variance of the prediction error for the unrestricted model is less than that of the restricted model, i.e. $Var(\boldsymbol{\xi}_{b} - \hat{\boldsymbol{\xi}}'_{b}) < Var(\boldsymbol{\xi}_{b} - \hat{\boldsymbol{\xi}}_{b})$, where $\hat{\boldsymbol{\xi}}_{b}$ are the predictions of the restricted model and $\hat{\boldsymbol{\xi}}'_{b}$ are the predictions of the unrestricted model. Again, it can be understood that if the unrestricted model reduces the variance of the predictive error, then there is significant predictive gain by including information from the additional time series. The test statistic for the f-test is
\begin{align}
    \label{eq:fstat}
    F_{a \rightarrow b} = \frac{(SSR_r - SSR_u)/(p_u - p_r)}{(SSR_u)/(n - p_u)}, 
\end{align}
where $SSR_r, SSR_u$ are the sum of squared residuals for the restricted and unrestricted models, $p_r, p_u$ are the number of parameters in each model, and $n$ is the number of time points at which predictions were made. 

The multivariate extension assesses Granger causal relationships while taking into account all relevant time series variables. This precludes learning spurious relationships due to omitted confounders. In this extension, causal links are identified in a time series system with a total of $n_t$ time series, $Z = [\boldsymbol{\xi}_a, \boldsymbol{\xi}_b,\dots,\boldsymbol{\xi}_{n_t}]^\top$ using vector autoregressive (VAR) models,
\begin{align}
   Z_i = \sum^m_{j=1}A_jZ_{i-j} + \boldsymbol{\epsilon}_i, ~~\boldsymbol{\epsilon}_i \sim N(0,\Sigma),
\end{align}
where $A_j$ is the $n_t \times n_t$ matrix of coefficients for each time lag, $Z_{i-j}$ are the lagged values for every time series in the system, and $\boldsymbol{\epsilon}_i$ is a vector of error terms with covariance $\Sigma$. The error terms are assumed to be independent across time points. Similar to the bivariate case, the unrestricted model includes all time series in the system, while the restricted multivariate model excludes the causal time series in question. 

\subsection{Gaussian Process}
\label{sec:GP_background}
We include some background on Gaussian processes, which are used in several nonlinear Granger causal frameworks for their advantages in nonlinear regression. GPs are non-parametric Bayesian models \citep{rasmussen2006gaussian}, which can be used to flexibly model a regression problem, 
\begin{align}
    \mathbf{y} = f(X) + \boldsymbol{\epsilon},~~ \boldsymbol{\epsilon} \sim N(0, \sigma^2I_n), 
\end{align}
where $\mathbf{y} \in \mathbb{R}^n$ is a vector of $n$ observations,  $X = [\mathbf{x}_1, \dots, \mathbf{x}_n]^\top$ is a matrix of inputs $\mathbf{x} \in \mathbb{R}^d$, $f : \mathbb{R}^d \rightarrow \mathbb{R}$ is the function describing the relationship between the inputs and the observations, and $\boldsymbol{\epsilon}$ is additive Gaussian noise, assumed to be independent across observations and with fixed variance $\sigma^2$. The GP defines a prior distribution over $f$,
\begin{align}
    \label{gp_prior}
    f \sim GP(\mu(\mathbf{x}), k(\mathbf{x}, \mathbf{x}')),
\end{align}
 with mean function $\mu( \mathbf{x})$, often set to $0$, and covariance or kernel function $k(\mathbf{x}, \mathbf{x}') = cov(f(\mathbf{x}), f(\mathbf{x}'))$ for every pair $\mathbf{x}$ and $\mathbf{x}' \in \mathbb{R}^d$. This corresponds to the assumption of a multivariate normal distribution over $f$ across the $n$ inputs, where the mean vector is $\mu(X) = [\mu(\mathbf{x}_1), \dots, \mu(\mathbf{x}_n)]$ and the kernel matrix is $K(X, X) = K(\mathbf{x}_i, \mathbf{x}_j)$ for $i,j = 1,\dots,n$. The kernel function defines the expected similarity of the function outputs given the similarity of the inputs, and directly impacts the properties of the functions defined by the GP prior. A commonly used kernel function, able to flexibly learn many types of nonlinear relationships, is the squared-exponential kernel,
\begin{align}
    \label{rbfkernel}
    k(\mathbf{x}, \mathbf{x}') = \tau^2 e^{-\frac{||\mathbf{x} - \mathbf{x}'||^2}{2l^2}},
\end{align}
with kernel variance $\tau^2 \in \mathbb{R}^+$ and lengthscale $l \in \mathbb{R}^+$. The hyperparameters of the model, including the variance of the noise, are denoted by $\boldsymbol{\theta} = [\tau^2, l,\sigma^2]$. 

The GP prior can be conditioned on observations $\mathbf{y}$ at training inputs $X$ in order to obtain a predictive posterior distribution over the function $f$ at new inputs $X^\mstar = [\mathbf{x}_1^\mstar, \dots, \mathbf{x}_m^\mstar]^\top$, i.e. the predictive posterior distribution of $\mathbf{f}^\mstar = [f(\mathbf{x}_1^\mstar), \dots, f(\mathbf{x}_m^\mstar)]^\top$. This is defined as \citep[Sec. 2.2]{rasmussen2006gaussian},
\begin{equation}
\label{posterior}
   p(\mathbf{f}^\mstar| X, \mathbf{y}, X^\mstar, \boldsymbol{\theta}) = N(\mathbf{f}(X^\mstar);\hat{\mu}(X^\mstar | X, \mathbf{y}; \boldsymbol{\theta}), \hat{K}(X^\mstar, X^\mstar | X; \boldsymbol{\theta})),
\end{equation}
where the posterior mean and kernel functions are
\begin{align}
    &\hat{\mu}(X^\mstar | X, \mathbf{y}; \boldsymbol{\theta})= \mu(X^\mstar;\boldsymbol{\theta})+K(X^\mstar, X;\boldsymbol{\theta})(K(X, X;\boldsymbol{\theta}) + \sigma^2I_n)^{-1}({\bf {y}}-\mu(X;\boldsymbol{\theta})),\\
    &\hat{K}(X^\mstar, X^\mstar | X; \boldsymbol{\theta})=K(X^\mstar,X^\mstar;\boldsymbol{\theta})- K(X^\mstar, X;\boldsymbol{\theta})(K(X, X;\boldsymbol{\theta}) + \sigma^2I_n)^{-1}K(X, X^\mstar;\boldsymbol{\theta}).
\end{align}
The posterior equations depend on the values of the hyperparameters $\boldsymbol{\theta}$, such that model selection can be performed by maximising the log-marginal likelihood of the GP, defined as \citep[Sec. 5.4]{rasmussen2006gaussian}:
\begin{equation}
    \label{marginal-likelihood}
    \log p(\mathbf{y}|X,\boldsymbol{\theta}) = -\frac{1}{2}(\mathbf{y}^\top (K(X, X;\boldsymbol{\theta}) + \sigma^2I_n)^{-1}\mathbf{y} + \log|K(X, X;\boldsymbol{\theta}) + \sigma^2I_n| + n\log(2\pi)).
\end{equation}

\section{Constraint-based methods for nonlinear Granger causality}
\label{sec:constraint}

\subsection{Joint Presentation of KGC and lsNGC}
\label{sec:joint}
In this section, we jointly present the kernel-based methods KGC \citep{marinazzo_kernel_2008} and lsNGC \citep{wismuller_large-scale_2021}, which will later allow us to show that they can be unified under a more general model.
\subsubsection{Input Data}
In both methods, we first consider testing whether the time series $\boldsymbol{\xi}_a = [\xi_{a1}, \dots, \xi_{a(n+m)}]^\top$ Granger causes the time series $\boldsymbol{\xi}_b = [\xi_{b1}, \dots, \xi_{b(n+m)}]^\top$, where $m$ is the lag order and each $\xi_{\cdot i} \in \mathbb{R}$ for $i = 1,\dots,n+m$ is an observation at time step $i$. These two time series belong to a larger set $\mathcal{T}$ of time series including a total of $n_t$ time series. For each time series $c \in \mathcal{T}$, we define the  phase-space reconstruction $X_c$ as a matrix with rows
\begin{align}
    {\bf x}_{ci} = [\xi_{ci},\dots,\xi_{c(i+m-1)}],
\end{align}
for $i = 1,\dots,n$. It follows that the phase-space reconstructions of the two time-series being tested for causality are defined as $X_a$ and $X_b$. The embedding dimension (the lag order) for the phase-space reconstruction $m$ is determined via Cao's method  \citep{cao_practical_1997} for lsNGC, while \cite{marinazzo_kernel_2008} suggests the use of the Akaike information criterion or other nonlinear-dynamics embedding techniques. 
We will then introduce two additional matrices. The matrix $X\in \mathbb{R}^{n \times (n_t-1)m}$  includes data from all time series excluding the \textit{driver} time series $X_a$, that is its rows are $X_i = [{\bf x}_{1i},\dots,{\bf x}_{(a-1)i},{\bf x}_{(a+1)i},\dots,{\bf x}_{n_t i}]$. The matrix  $Z \in \mathbb{R}^{n \times n_tm}$ includes data from all time series, including $X_a$, that is each row is defined as $Z_i = [{\bf x}_{1i}, \dots, {\bf x}_{ai}, \dots, {\bf x}_{n_ti}]$.

\subsubsection{Kernel Granger Causality}
\label{sec:kgc}
Kernel Granger Causality (KGC) \citep{marinazzo_kernel_2008} is a kernel-based method for nonlinear Granger causality. The method works as follows. Assume for instance we aim to test if time series $a$ Granger-causes time series $b$, then the matrices $X,Z$ are defined as:
\begin{equation}
\label{eq:XZ}
\resizebox{0.98\hsize}{!}{%
$\begin{aligned}
X &=\left[ \begin{array}{*{13}c}
\xi_{b1}& \xi_{b2}& \dots & \xi_{bm} & \xi_{c1}& \xi_{c2}&  \dots & \xi_{cm}& \dots & \xi_{n_t1}&  \xi_{n_t2}& \dots & \xi_{n_tm}\\
\xi_{b2}& \xi_{b3}& \dots & \xi_{b(m+1)} & \xi_{c2}& \xi_{c3}&  \dots & \xi_{c(m+1)}& \dots & \xi_{n_t2}&  \xi_{n_t3}& \dots & \xi_{n_t(m+1)}
\\
\vdots& \vdots& \dots & \vdots & \vdots& \vdots&  \dots & \vdots& \dots & \vdots&  \vdots& \dots & \vdots\\
\xi_{bn}& \xi_{b(n+1)}& \dots & \xi_{b(m+n-1)} & \xi_{cn}& \xi_{c(n+1)}&  \dots & \xi_{c(m+n-1)}& \dots & \xi_{n_tn}&  \xi_{n_t(n+1)}& \dots & \xi_{n_t(m+n-1)}\\
\end{array}\right] ~~~~ 
{\bf y} =\left[ \begin{array}{*{1}c}
\xi_{b(m+1)}\\
\xi_{b(m+2)}\\
\vdots\\
\xi_{b(m+n)}\\
\end{array}\right]\\&\\
Z &=\left[ \begin{array}{*{17}c}
\xi_{a1}& \xi_{a2}& \dots & \xi_{am} &\xi_{b1}& \xi_{b2}& \dots & \xi_{bm} & \xi_{c1}& \xi_{c2}&  \dots & \xi_{cm}& \dots & \xi_{n_t1}&  \xi_{n_t2}& \dots & \xi_{n_tm}\\
\xi_{a2}& \xi_{a3}& \dots & \xi_{a(m+1)} & 
\xi_{b2}& \xi_{b3}& \dots & \xi_{b(m+1)} & \xi_{c2}& \xi_{c3}&  \dots & \xi_{c(m+1)}& \dots & \xi_{n_t2}&  \xi_{n_t3}& \dots & \xi_{n_t(m+1)}
\\
\vdots& \vdots& \dots & \vdots & \vdots& \vdots& \dots & \vdots & \vdots& \vdots&  \dots & \vdots& \dots & \vdots&  \vdots& \dots & \vdots\\
\xi_{an}& \xi_{a(n+1)}& \dots & \xi_{a(m+n-1)} & \xi_{bn}& \xi_{b(n+1)}& \dots & \xi_{b(m+n-1)} & \xi_{cn}& \xi_{c(n+1)}&  \dots & \xi_{c(m+n-1)}& \dots & \xi_{n_tn}&  \xi_{n_t(n+1)}& \dots &  \xi_{n_t(m+n-1)}\\
\end{array}\right] 
\end{aligned}$}
\end{equation}
where $Z=[X_a,X]$. Note that, without loss of generality, we have ordered the columns of $Z$ so that the first $m$ columns correspond to the lagged values of the time series $a$, the second $m$ columns to the lagged values of the time series $b$, and the remaining columns are the lagged columns of the remaining time series. 
Following \cite{marinazzo_kernel_2008}, we further assume  that $n>n_tm$ and that the columns of $X,Z,\mathbf{y}$ have been mean-centered by subtracting off the original mean of each column and that the vector $\mathbf{y}$ has been normalised to have norm one, that is $\mathbf{y}^\top\mathbf{y} = 1$.

\paragraph{Linear case:} 
In order to understand this kernel-based approach, we first need to consider linear Granger causality. In this case,  the restricted and unrestricted regression  estimates obtained via least-squares are:
\begin{align}
\hat{{\bf y}}&= X \hat{\boldsymbol{\beta}}, \\
\hat{{\bf y}}'&= X \hat{\boldsymbol{\beta}}'+ X_a\hat{\boldsymbol{\gamma}}',
\end{align}
where $\hat{\boldsymbol{\beta}},\hat{\boldsymbol{\beta}}',\hat{\boldsymbol{\gamma}}'$ are the least-squares estimates of the coefficients. Then we can write $\hat{{\bf y}}=P{\bf y}$ and $\hat{{\bf y}}'=P'{\bf y}$, where $P=X(X^\top X)^{-1} X^\top$ and $P'=Z(Z^\top Z)^{-1} Z^\top$ are projection matrices, and use the following ratio  as a measure of the strength
of the causal interaction 
\begin{equation}
\label{eq:delta}
\delta = 1-\frac{SSR_u}{SSR_r}=1-\frac{({\bf y}-\hat{{\bf y}}')^\top ({\bf y}-\hat{{\bf y}}')}{({\bf y}-\hat{{\bf y}})^\top ({\bf y}-\hat{{\bf y}})}=\frac{(\hat{{\bf y}}')^\top\hat{{\bf y}}'-(\hat{{\bf y}})^\top\hat{{\bf y}}}{{\bf y}^\top {\bf y}-(\hat{{\bf y}})^\top\hat{{\bf y}}},%
\end{equation}    
with $SSR_u,SSR_r$ denoting the sum of squared residuals of the unrestricted and, respectively, restricted model, and where  we have exploited that %
$P=P^\top=P^2$ (similarly for $P'$) to derive the last equality.

Focusing on the numerator only 
\begin{equation}
(\hat{{\bf y}}')^\top\hat{{\bf y}}'-(\hat{{\bf y}})^\top\hat{{\bf y}}={\bf y}^\top(P'-P){\bf y}=({\bf y}-P{\bf y})^\top (P'-P)({\bf y}-P{\bf y})=\sum_{i=1}^{n_r} ({\bf e}^\top {\bf w}_i)^2,
\end{equation}
 where $n_r=m$, and  we have defined ${\bf e}={\bf y}-P{\bf y}$ and rewritten  the projection matrix as $P'-P=\sum_{i=1}^{n_r} {\bf w}_i{\bf w}^\top_i$. Then, we can rewrite \eqref{eq:delta} as
 \begin{equation}
\delta=\frac{\sum_{i=1}^{n_r} ({\bf e}^\top {\bf w}_i)^2}{{\bf e}^\top{\bf e}}=\sum_{i=1}^{n_r} \text{corr}({\bf e}, {\bf w}_i)^2,
\end{equation}
since ${\bf e}^\top{\bf e}={\bf y}^\top{\bf y}-{\bf y}^\top P{\bf y}$, where $r_i=\text{corr}({\bf e}, {\bf w}_i)$ is the Pearson correlation coefficient.

 The filtered GC index is defined as $\delta_f=\sum_{i=1}^{n_f} r_i^2$ \citep{marinazzo_kernel_2008}, where $n_f\leq n_r$ denotes the number of  correlation coefficients that are significantly different from zero. This is evaluated through a Student's $t$-test for correlation. Due to multiple comparisons across the $i = 1, \dots, n_r$ correlation coefficients $r_i$, the Bonferroni correction is applied and the null hypothesis is rejected whenever the p-value is  $p_i < \frac{0.05}{n_r(n_t(n_t-1)/2)}$, where $n_t(n_t-1)/2$ represents the number of pairs of time series in the system being tested. Therefore, time series $a$ is said to Granger-cause time series $b$ (denoted as $X_a \xrightarrow{GC} X_b$) if the corresponding $\delta_f$ is different from zero.

\paragraph{Linear Kernel:}
 In order to extend these derivations  to the nonlinear case, \cite{marinazzo_kernel_2008} considers an equivalent dual formulation based on using the spectral decomposition of the associated kernel matrices $K = X X^\top$ and $K' = Z Z^\top$. To estimate the values $\mathbf{\hat{y}}$ for the restricted regression model, the vector $\mathbf{y}$ is projected onto the hypothesis space $H \subseteq \mathbb{R}^n$, equivalent to the range of the kernel matrix $K$. This projection is performed with the projector $\sum_{i=1}^{n_v} \mathbf{v}_i\mathbf{v}_i^\top$ (which coincides with $P$ defined above),  where $\mathbf{v}_i$ for $i = 1, \dots, n_v$ are the eigenvectors of $K$ with non-zero eigenvalue. 
 The unrestricted regression model adheres to the same structure,  with $P'=\sum_{i=1}^{n_v'} \mathbf{v}'_i(\mathbf{v}'_i)^\top$ projecting $\mathbf{x}$ onto the hypothesis space $H' \subseteq \mathbb{R}^n$, equivalent to the range of the kernel $K'$.

In order to assess the impact of the additional predictor time series in the unrestricted model, the space in which this additional feature exists must be isolated. Since $H \subseteq H'$, there exists another vector subspace $H^\perp \subseteq H'$ such that $H' = H \bigoplus H^\perp$. Explicitly, $H^\perp$ is equivalent to the range of the kernel 
\begin{align}
    \widetilde{K} =(I-P)K'(I-P)= K' - K'P - PK' +P K'P.
\end{align}
Then we can equivalently rewrite $\delta$ as $\delta=\sum_{i=1}^{n_r} r_{i}^2$, 
where $r_i$ is the correlation between the $n_r$ eigenvectors of the kernel $\widetilde{K}$ with non-zero eigenvalue and the residual vector $\bf{e}$.
 
\paragraph{Nonlinear Kernel:}
The extension to the nonlinear case is then performed by replacing the linear kernel with a nonlinear one \citep{marinazzo_kernel_2008}. It should be noted that the selected kernel must satisfy the condition that when performing regression with statistically independent variables, the loss-minimising function $f^\mstar(X)$ should coincide with $f^\mstar(Z)$, meaning that the additional variable $X_a$ contributes no new information.
Two kernels that are known to satisfy this condition are the inhomogeneous polynomial (IP) kernel and the  squared-exponential kernel \eqref{rbfkernel}, (or, more generally, stationary kernels) \citep{ancona2006invariance}. The IP kernel is defined as 
\begin{align}
    \label{eq:polynomial_kernel}
    K_p({\bf x}, {\bf x}') = (1 + {\bf x} {\bf x}'^\top)^p, 
\end{align}
where $p$ is the polynomial degree.\footnote{For the bivariate case, the dimensions of the spaces corresponding to each set of features are $m_1 = \frac{p + m!}{p!m!} - 1$ for $H$, $m_2 = \frac{p + 2m}{p!2m!} - 1$ for $H'$, and $m_3 = m_2 - m_1$ for $H^\perp$.}
We then, for instance, would define the kernel matrices as  $K=K_p(X, X)$ and $K'=K_p(Z, Z)$. We also assume that $K$ has been mean-centered  $K \rightarrow K - P_0K - KP_0 + P_0KP_0$ (similarly for $K'$), where  $P_0 = \frac{1}{n}\mathbf{1}_n\mathbf{1}_n^\top$ and $\mathbf{1}_n$ is a $n \times 1$ dimensional vector of ones. 
 Finally, the kernel over the space of the additional time series is evaluated as $\widetilde{K} = K' - PK' - K'P + PK'P$, and its eigenvectors are computed in order to calculate the filtered GC index as described for the linear kernel case.\footnote{
For the squared-exponential, the lengthscale impacts the dimension of the range of the kernel. As a result, the condition $H \subseteq H'$ is not always satisfied. To limit the number of cases in which this condition is violated, the eigenvectors of $K_l$ are filtered such that the projector $P$ is associated with a new space $L$. This new space $L$ is the $m_1$-dimensional span of the eigenvectors of $K_l$ subject to their corresponding eigenvalue satisfying the constraint $\lambda \geq \mu\lambda_{max}$, where $\lambda_{max}$ is the largest eigenvalue of $K_l$ and $\mu$ is a small constant ($10^{-6}$). This is done similarly for  $K'$.}

\subsubsection{Large-scale Nonlinear Granger Causality}
The large-scale Nonlinear Granger Causality (lsNGC) method \citep{wismuller_large-scale_2021} applies generalised radial basis function (GRBF) neural networks to nonlinear GC, while also exploiting dimensionality reduction via $k$-means clustering to accommodate systems with a potentially large number of time series variables.
Assume again we aim to test if time series $a$ Granger-causes time series $b$, and the matrices $X,Z,\mathbf{y}$ are defined as in \eqref{eq:XZ}. The lsNGC method fits the following restricted and unrestricted regression models 
\begin{align}
    \label{eq:lsngc_modelr}
    &\mathbf{\hat{y}} = \mathbf{f}(X)\hat{\boldsymbol{\beta}},\\
    \label{eq:lsngc_modelu}
    &\mathbf{\hat{y}'} = \mathbf{f}(X)\hat{\boldsymbol{\beta}}' + \mathbf{g}(X_a)\hat{\boldsymbol{\gamma}}',
\end{align}
where the covariates $\mathbf{f}(X)=[f_1(X) \dots, f_{c_f}(X)] \in \mathbb{R}^{n \times c_f}$ and $\mathbf{g}(X_a) = [g_1(X_a),\dots,g_{c_g}(X_a)] \in \mathbb{R}^{n \times c_g}$ are the nonlinear activation functions of the GRBF networks and $c_f,c_g$ are the number of cluster centroids used in a preliminary k-means step for dimensionality reduction. The least-squares estimates of the coefficients are $\hat{\boldsymbol{\beta}},\hat{\boldsymbol{\beta}}',\hat{\boldsymbol{\gamma}}'$. The lsNGC method enforces an additive relationship between the driver time series and the rest of the time series in the system, creating a nested structure between the restricted and unrestricted models which allows for an f-test to be used to compare predictions.

The nonlinear covariates $\mathbf{f}(X)$ and $\mathbf{g}(X_a)$ are computed by first using $k$-means clustering to partition the observations into an empirically pre-determined number of clusters, $c_f$ clusters for $X$ and $c_g$ clusters for $X_a$. The cluster centers obtained from the clustering step are used as parameters of the hidden layer of the GRBF network, and are denoted $V \in \mathbb{R}^{c_g \times m}$ and $U \in \mathbb{R}^{c_f \times (n_t - 1)m}$. The number of neurons in the hidden layer is thus equivalent to the number of clusters, and the output of the hidden layer is a transformed phase-space reconstruction with reduced dimensionality from $X \in \mathbb{R}^{n\times (n_t-1)m}$ to $\mathbf{f}(X) \in \mathbb{R}^{n \times c_f}$ and from $X_a \in \mathbb{R}^{n \times m}$ to $\mathbf{g}(X_a) \in \mathbb{R}^{n \times c_g}$.

These nonlinear functions are defined as  
\begin{align}
    \label{lsngc_kernel_g}
    g_j(X_{ai}) = \frac{e^{-\|X_{ai}-\mathbf{v}(j)\|^2/l^2}}{\sum\limits_{k=1}^{c_g} e^{-\|X_{ai}-\mathbf{v}(k)\|^2/l^2}} , ~\text{ for} ~j=1,\dots,c_g,i=1,\dots, n, \\
    \label{lsngc_kernel_f}
     f_j(X_i) = \frac{e^{-\|X_i-\mathbf{u}(j)\|^2/l^2}}{\sum\limits_{k=1}^{c_f} e^{-\|X_i-\mathbf{u}(k)\|^2/l^2}}, ~\text{ for} ~ ~j=1,\dots,c_f,i=1,\dots, n,
\end{align}
where $l\in\mathbb{R}^+$ is the lengthscale of the kernel and is set as the mean distance between cluster centers, $\mathbf{v}(j)$ is the \emph{jth} cluster center corresponding to the feature $X_{a}$, and $\mathbf{u}(j)$ is the \emph{jth} cluster center corresponding to the features $X$.

 The output layer of the neural network is a linear combination of the transformed phase-space reconstruction where the weights $\hat{\boldsymbol{\beta}}$ are estimated for the restricted model via
$   \hat{\boldsymbol{\beta}} = (\mathbf{f}(X)^\top\mathbf{f}(X))^{-1}\mathbf{f}(X)^\top\mathbf{y}$.
Similarly, we define $Z' = [\mathbf{f}(X), \mathbf{g}(X_a)]$ as the concatenation of the matrices obtained from the hidden layer of the GRBF neural network. Then, the weights $\hat{\boldsymbol{\omega}}' = [\hat{\boldsymbol{\beta}}', \hat{\boldsymbol{\gamma}}']$ of the regression equation are learned for the unrestricted model via 
$ \hat{\boldsymbol{\omega}}' = (Z'^\top Z')^{-1}Z'^\top \mathbf{y}$. 

In order to test whether $X_a$ improves to the predictions of $X_b$ in the unrestricted model, an f-test is conducted by calculating an f-statistic for the hypotheses
$H_0 :\hat{\boldsymbol{\gamma}}' = 0$ versus $H_a : \hat{\boldsymbol{\gamma}}' \neq 0$. 
Referring to the test statistic \eqref{eq:fstat}, the number of parameters of the unrestricted and restricted models are determined by $p_u = c_f + c_g$ and $p_r = c_f$, and $n$ is the number of observations $\mathbf{y}$. The False Discovery Rate correction is applied due to multiple comparisons \citep{wismuller_large-scale_2021} . The null hypothesis is rejected for a p-value $p <0.05$, meaning that the addition of $X_a$ in the unrestricted model improves the predictions of $X_b$, hence $X_a \xrightarrow{GC} X_b$.

\subsection{Unifying the previous methods via Kernel PCR}
\label{sec:kpcr}
In this section, we first recall the concepts of Kernel Principal Component Analysis (Kernel PCA) and Kernel Principal Component Regression (Kernel PCR). Then, we demonstrate that KGC and lsNGC are specific instances of Kernel PCR, followed by hypothesis testing for Granger causality discovery.

\subsubsection{Kernel PCA and PCR}
Consider the following data matrix $X=[{\bf x}_1,{\bf x}_2,\dots,{\bf x}_n]^\top \in \mathbb{R}^{n \times d}$ with ${\bf x}_i \in \mathbb{R}^d$. Kernel PCA projects the observations ${\bf x}$   into a feature space using a feature map $\phi({\bf x})$, where $\phi$ is a nonlinear transformation from $\mathbb{R}^d$ to $\mathbb{R}^{d'}$ (with $d' \geq d$). In Kernel PCA, the objective is to solve the eigenvalue problem of PCA implicitly, avoiding the need for explicit definition of the feature space \citep{bishop2006pattern}[Sec.12.3]. Through the use of the kernel function, we will implicitly perform a centering of the projected data:
 \begin{equation}
 \label{eq:centering}
 \widecheck{\phi}({\bf x}_i)=\phi({\bf x}_i) - \frac{1}{n} \sum_{k=1}^n\phi({\bf x}_k),
 \end{equation}
then compute the covariance matrix 
\begin{equation}
\Sigma= \frac{1}{n}\sum_{i=1}^n \widecheck{\phi}({\bf x}_i) \widecheck{\phi}({\bf x}_i)^\top,
\end{equation}
and its eigenvalue-eigenvectors $\lambda_j,{\bf v}_j$ for $j=1,2,\dots,d'$ as:
 \begin{equation}
 \Sigma {\bf v}_j =\frac{1}{n}\sum_{i=1}^n \widecheck{\phi}({\bf x}_i) \widecheck{\phi}({\bf x}_i)^\top {\bf v}_j=\lambda_j  {\bf v}_j.
\end{equation}
From the above equality, provided that $\lambda_j>0$, we can see that the vector ${\bf v}_j$ corresponds to a linear combination
of the $\widecheck{\phi}({\bf x}_i)$, and so can be written in the form
\begin{equation}
{\bf v}_j = \sum_{i=1}^n \mathrm{a}_{ji} \widecheck{\phi}({\bf x}_i),
\end{equation}
where ${\bf a}_j$ is the vector including the coefficients of the linear combination. By introducing the kernel function $\widecheck{K}({\bf x}_i,{\bf x}_j)= \widecheck{\phi}({\bf x}_i)^\top  \widecheck{\phi}({\bf x}_j)$, we can write the projection  of a point ${\bf x}$ onto the principal components ${\bf v}_j$ as
\begin{equation}
 \widecheck{\phi}({\bf x})^\top {\bf v}_j = \sum_{i=1}^n \mathrm{a}_{ji} \widecheck{K}({\bf x},{\bf x}_i)=\widecheck{K}({\bf x},X){\bf a}_j.
\end{equation}
The vector ${\bf a}_j \in \mathbb{R}^n$ can be computed by solving  the following eigenvalue-eigenvector problem $\widecheck{K}(X,X){\bf a}_j=\lambda_j n {\bf a}_j$ \citep{bishop2006pattern}[Sec.12.3]. Note that the kernel function can be rewritten as:
\begin{equation}
\widecheck{K}({\bf x}_i,{\bf x}_j)= \widecheck{\phi}({\bf x}_i)^\top  \widecheck{\phi}({\bf x}_j)=K({\bf x}_i,{\bf x}_j)-\frac{1}{n}\mathbf{1}_n^\top K(X,{\bf x}_j)-\frac{1}{n}K({\bf x}_,X) \mathbf{1}_n +\frac{1}{n^2}\mathbf{1}_n^\top K(X,X)\mathbf{1}_n,
 \end{equation}
 where $K({\bf x}_i,{\bf x}_j)= {\phi}({\bf x}_i)^\top  {\phi}({\bf x}_j)$, such that $K$ is centered to obtain $\widecheck{K}$ without needing to explicitly compute the centered feature vectors \eqref{eq:centering}. Indeed, we have that $ \widecheck{K}(X,X)=K(X,X)- P_0K(X,X) - K(X,X)P_0 + P_0 K(X,X) P_0$, where  $P_0 = \frac{1}{n}\mathbf{1}_n\mathbf{1}_n^\top$.
 
With an abuse of notation, we use   $\lambda_j , {\bf a}_j$ to also denote the eigenvalues and eigenvectors of $\widecheck{K}(X,X)$. Assume that the eigenvectors ${\bf a}_j$ are ordered such that ${\bf a}_1$ corresponds to the largest eigenvalue, followed by the others in decreasing order. 
In Kernel PCA, the resulting principal component projections are:
\begin{equation}
 \label{eq:PCApc}
 \left[\widecheck{K}({\bf x}_i,X)\tfrac{{\bf a}_1}{\sqrt{\lambda_1}},\dots, \widecheck{K}({\bf x}_i,X)\tfrac{{\bf a}_s}{\sqrt{\lambda_s}}\right],
 \end{equation}
 for each ${\bf x}_i \in X$,  where $s\leq n$ denotes the first $s$-principal components (with non-zero eigenvalues). 

Kernel PCR reduces to solving a regression problem where the covariates are given by the projections of the data onto the principal components in the feature space induced by the kernel function, i.e., the covariates are the columns in \eqref{eq:PCApc}.

\paragraph*{Reduced-rank approximation:} A drawback of Kernel PCA and PCR is the need to compute the eigenvalue-eigenvector decomposition of an $n \times n$ matrix, which can be computationally prohibitive for large $n$. A common approach to speed up this computation is to use a reduced-rank approximation of this matrix.
Let $X_J=[{\bf x}_1,{\bf x}_2,\dots,{\bf x}_{n_j}]^\top$ be a set of \textit{inducing points}, then the reduced-rank approximation of the kernel matrix is
\begin{equation}
\label{reduced_rank_kernel}
\widetilde{K} = K(X,X_J)K^{-1}(X_J,X_J)K(X_J,X),
\end{equation}
which is called  Nystr{\"o}m approximation \citep{williams2000using}. 

We can rewrite the kernel matrix in \eqref{reduced_rank_kernel} as $\phi(X)\phi(X)^\top$, where  $\phi({\bf x}_i)=K({\bf x}_i,X_J)K^{-1/2}(X_J,X_J)$, directly providing a feature approximation of the original kernel. Since $\phi(X) \in \mathbb{R}^{n \times n_j}$, we can formulate Kernel PCR using ``primal formulation'', that is by computing the eigenvalue-eigenvector decomposition of $\phi(X)^\top\phi(X)$ which belongs to $\mathbb{R}^{n_J\times n_J}$, resulting in the principal component projections:
\begin{equation}
 \label{eq:PCApcnystrom}
 \left[{\phi}({\bf x}_i)^\top{\bf v}_1,{\phi}({\bf x}_i)^\top{\bf v}_2,\dots,{\phi}({\bf x}_i)^\top{\bf v}_s\right],
 \end{equation}
 for each ${\bf x}_i \in X$,  where ${\bf v}_i$ denotes the \emph{i-th} eigenvector of ${\phi}(X)^\top {\phi}(X)$ and $s\leq n_j$ denotes the first $s$-principal components (with non-zero eigenvalues). Note that, also in this case, the features $\phi({\bf x}_i)$ need to be centered, as in \eqref{eq:centering}, before performing the above operations.
 
\subsubsection{KGC and lsNGC are particular cases of Kernel PCR}
\label{sec:unified_propositions}
In the following propositions, we prove that lsNGC and KGC are particular cases of Kernel PCR with and, respectively,  without a reduced rank approximation. The proofs for Proposition \ref{prop:1} and Proposition \ref{prop:2} are included in the supplementary material.

\begin{proposition}
\label{prop:1}
KGC is Kernel PCR with the two regression models 
\begin{align}
    \label{eq:KGC1}
    &\mathbf{\hat{y}} = \mathbf{f}(X)\hat{\boldsymbol{\beta}},\\
    \label{eq:KGC2}
    &\mathbf{\hat{y}'} = \mathbf{f}'(Z)\hat{\boldsymbol{\beta}}',
\end{align}
where $X, Z$ are defined by \eqref{eq:XZ}. The covariate matrices $\mathbf{f}(X)=[f_1(X), \dots, f_{s}(X)]$ with each $f_i(X) \in \mathbb{R}^n$ and $\mathbf{f}'(Z)=[f_1(Z), \dots, f_{s'}(Z)]$ with each $f'_i(Z) \in \mathbb{R}^n$ thus have columns equivalent to the principal component projections \eqref{eq:PCApc}, where the number of principal components with eigenvalue greater than $0$ are denoted as $s, s'$ for the respective models. 
\end{proposition}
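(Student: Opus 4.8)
The plan is to reduce the claim to a statement about least-squares fitted values: I would show that the two Kernel PCR regressions \eqref{eq:KGC1}--\eqref{eq:KGC2} produce exactly the fitted vectors $\hat{\mathbf{y}} = P\mathbf{y}$ and $\hat{\mathbf{y}}' = P'\mathbf{y}$ that define KGC, where $P$ and $P'$ are the spectral projectors onto the ranges of the centered kernel matrices $\widecheck{K}(X,X)$ and $\widecheck{K}(Z,Z)$ built from $K=K_p(X,X)$ and $K'=K_p(Z,Z)$. This is enough because every downstream KGC quantity — the residual $\mathbf{e}=\mathbf{y}-P\mathbf{y}$, the sums of squared residuals, and hence $\delta$ and the filtered index $\delta_f$ — depends on the models only through these fitted values.

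First I would make the covariate matrix $\mathbf{f}(X)$ explicit. By construction its columns are the principal component projections \eqref{eq:PCApc}, so its $j$-th column is $\widecheck{K}(X,X)\,\mathbf{a}_j/\sqrt{\lambda_j}$, where $(\lambda_j,\mathbf{a}_j)$ are the eigenpairs of $\widecheck{K}(X,X)$ taken orthonormal. Invoking the eigenvector relation $\widecheck{K}(X,X)\mathbf{a}_j=\lambda_j\mathbf{a}_j$, this column collapses to $\sqrt{\lambda_j}\,\mathbf{a}_j$, so that $\mathbf{f}(X)=[\sqrt{\lambda_1}\mathbf{a}_1,\dots,\sqrt{\lambda_s}\mathbf{a}_s]$, keeping only the $s$ components with $\lambda_j>0$ so that $\mathbf{f}(X)^\top\mathbf{f}(X)$ is invertible.

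The decisive step is the hat-matrix computation. Since the $\mathbf{a}_j$ are orthonormal, the columns of $\mathbf{f}(X)$ are mutually orthogonal, $\mathbf{f}(X)^\top\mathbf{f}(X)=\mathrm{diag}(\lambda_1,\dots,\lambda_s)$, and the least-squares hat matrix telescopes to
\[
\mathbf{f}(X)\big(\mathbf{f}(X)^\top\mathbf{f}(X)\big)^{-1}\mathbf{f}(X)^\top=\sum_{j=1}^{s}\mathbf{a}_j\mathbf{a}_j^\top,
\]
which is precisely the spectral projector onto the range of $\widecheck{K}(X,X)$, i.e. the KGC projector $P=\sum_{i=1}^{n_v}\mathbf{v}_i\mathbf{v}_i^\top$ under the identification $\mathbf{v}_i=\mathbf{a}_i$ and $n_v=s$. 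Hence $\hat{\mathbf{y}}=\mathbf{f}(X)\hat{\boldsymbol{\beta}}=P\mathbf{y}$, and the same argument applied to $Z$ and $\widecheck{K}(Z,Z)$ gives $\hat{\mathbf{y}}'=P'\mathbf{y}$ for \eqref{eq:KGC2}. The rescaling by $\sqrt{\lambda_j}$ is immaterial here, since the hat matrix is invariant to column scaling; orthogonality of the eigenvector columns is the only structural fact that is actually used.

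The point requiring the most care — and where I expect the real work to sit — is reconciling the two subspaces and the centering conventions. I would verify that KGC's centering $K\to K-P_0K-KP_0+P_0KP_0$ with $P_0=\tfrac{1}{n}\mathbf{1}_n\mathbf{1}_n^\top$ yields the identical matrix to the feature-space centering \eqref{eq:centering} that produces $\widecheck{K}(X,X)$, so that the eigenpairs, and therefore the projectors, genuinely coincide rather than merely resemble one another. I would also record that the polynomial and squared-exponential kernels used by KGC are valid Mercer kernels admitting a feature map $\phi$, legitimising the reading of $K$ as the Gram matrix $\widecheck{\phi}(X)\widecheck{\phi}(X)^\top$ and the identification of ``the hypothesis space $H$ = range of $K$'' with the span of the Kernel PCR covariates, including the bookkeeping of the zero-eigenvalue directions that are discarded in both constructions.
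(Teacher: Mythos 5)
Your proposal is correct and takes essentially the same route as the paper's own proof: both reduce the claim to showing that the least-squares hat matrix built from the Kernel PCR covariates $\widecheck{K}(X,X)A\Lambda^{-1/2}$ equals the spectral projector $AA^\top$ that defines KGC's fitted values (and likewise $A'A'^\top$ for $Z$), with the centered kernels identified on both sides. The only difference is cosmetic -- you collapse the covariates to $A\Lambda^{1/2}$ via the eigenvector relation before forming the hat matrix, whereas the paper substitutes the eigendecomposition $\widecheck{K}(X,X)=A\Lambda A^\top$ directly into the full hat-matrix product; the underlying algebra is identical.
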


\begin{proposition}
 \label{prop:2}
The model \eqref{eq:lsngc_modelr} of the lsNGC method implements a Kernel PCR with Nystr{\"o}m approximation. Assume that the inducing points of the Nystr{\"o}m approximation are selected as the cluster centroids obtained by applying k-means clustering to $X$, resulting in $X_J = [{\bf x}_1, {\bf x}_2, \dots, {\bf x}_{n_J}]^\top$, and consider the following kernel function\footnote{This is a scaled squared-exponential kernel, which is a valid kernel, see for instance \cite{williams2006gaussian}.}
\begin{equation}
\label{eq:rbfscaled}
K({\bf x}_i,{\bf x}_j) = \frac{e^{-\|{\bf x}_i-{\bf x}_j\|^2/l^2}}{\sum_{k=1}^{n_J} e^{-\|{\bf x}_i-{\bf x}_k\|^2/l^2}\sum_{k=1}^{n_J} e^{-\|{\bf x}_k-{\bf x}_j\|^2/l^2}},
\end{equation}
then Kernel PCR is equivalent to \eqref{eq:lsngc_modelr}.
\end{proposition}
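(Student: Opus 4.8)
The plan is to exhibit an explicit invertible linear map relating the lsNGC design matrix $\mathbf{f}(X)$ to the Nystr{\"o}m feature matrix $\phi(X)$ built from the kernel \eqref{eq:rbfscaled}, and then to use the elementary fact that two least-squares regressions whose design matrices share the same column space produce identical fitted values.

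First I would set the inducing points equal to the k-means centroids, $X_J = [\mathbf{u}(1),\dots,\mathbf{u}(c_f)]^\top$ with $n_J = c_f$, and evaluate the cross-kernel block $K(X, X_J)$. Substituting an inducing point $\mathbf{x}_j = \mathbf{u}(j)$ into \eqref{eq:rbfscaled}, the denominator factors into a term $s_i = \sum_{k} e^{-\|X_i - \mathbf{u}(k)\|^2/l^2}$ depending only on the row index $i$ and a term $t_j = \sum_k e^{-\|\mathbf{u}(k) - \mathbf{u}(j)\|^2/l^2}$ depending only on the column index $j$. Comparing with \eqref{lsngc_kernel_f}, whose normaliser is exactly $s_i$, yields $K(\mathbf{x}_i, \mathbf{u}(j)) = f_j(X_i)/t_j$, i.e. $K(X, X_J) = \mathbf{f}(X)\,D$ with $D = \operatorname{diag}(1/t_1,\dots,1/t_{n_J})$ diagonal and invertible. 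The point is that the per-row softmax normalisation of lsNGC is precisely the $i$-dependent factor engineered into \eqref{eq:rbfscaled}, while the $j$-dependent factor collapses to a harmless rescaling of the columns.

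Next I would form the Nystr{\"o}m feature map $\phi(X) = K(X, X_J)K^{-1/2}(X_J, X_J) = \mathbf{f}(X)\,M$ with $M = D\,K^{-1/2}(X_J, X_J)$, which is invertible whenever $K(X_J, X_J)$ is positive definite; hence $\phi(X)$ and $\mathbf{f}(X)$ share a column space. I would then invoke the primal Nystr{\"o}m formulation of Kernel PCR: its covariates are the projections \eqref{eq:PCApcnystrom} onto the principal components of $\phi(X)^\top\phi(X)$, and retaining all $s$ components with nonzero eigenvalue is merely an orthogonal change of basis of $\phi(X)$ that discards only its null directions. Consequently the Kernel PCR fit is the orthogonal projection of $\mathbf{y}$ onto $\operatorname{col}(\phi(X)) = \operatorname{col}(\mathbf{f}(X))$, which is exactly $\hat{\mathbf{y}} = \mathbf{f}(X)\hat{\boldsymbol{\beta}}$ from \eqref{eq:lsngc_modelr}.

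The step I expect to require the most care is reconciling the feature-centering \eqref{eq:centering} demanded by Kernel PCR with the fact that lsNGC regresses on the uncentred $\mathbf{f}(X)$ and fits no explicit intercept. The key observation is that the lsNGC features form a softmax across columns, so each row sums to one, giving $\mathbf{f}(X)\mathbf{1}_{c_f} = \mathbf{1}_n$ and hence $\mathbf{1}_n \in \operatorname{col}(\mathbf{f}(X))$. Since centering acts as $(I - P_0)$ with $P_0 = \frac{1}{n}\mathbf{1}_n\mathbf{1}_n^\top$ and $(I-P_0)\mathbf{1}_n = 0$, I would verify the subspace identity $\operatorname{span}(\mathbf{1}_n) + (I-P_0)\operatorname{col}(\mathbf{f}(X)) = \operatorname{col}(\mathbf{f}(X))$, so that centered Kernel PCR fitted with an intercept projects onto the same subspace $\operatorname{col}(\mathbf{f}(X))$: the constant direction suppressed by centering is restored by the intercept. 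This closes the argument that the two fitted models coincide.
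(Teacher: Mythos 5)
Your proof is correct and follows essentially the same route as the paper's: both arguments hinge on factoring the scaled kernel \eqref{eq:rbfscaled} so that $K(X,X_J)$ equals the lsNGC softmax matrix $\mathbf{f}(X)$ up to an invertible diagonal column rescaling, and then observing that the Nystr{\"o}m feature matrix $\phi(X)=\mathbf{f}(X)M$ with $M$ invertible yields the same least-squares fit as \eqref{eq:lsngc_modelr} --- the paper writes this as the factor $D_J K_0^{-1/2}(X_J,X_J)V$ being absorbed into the regression coefficients, while you phrase it as equality of column spaces and hence of orthogonal projections, which is the same fact. Where you go beyond the paper's proof is in making explicit two steps it leaves implicit, namely that retaining all nonzero-eigenvalue principal components in \eqref{eq:PCApcnystrom} preserves $\operatorname{col}(\phi(X))$, and that the centering \eqref{eq:centering} required by the KPCR construction is harmless here because the softmax rows of $\mathbf{f}(X)$ sum to one, so $\mathbf{1}_n\in\operatorname{col}(\mathbf{f}(X))$ and the centered-plus-intercept fit projects onto the same subspace; this is a genuine tightening of the argument rather than a different approach.
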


\begin{remark} 
The use of k-means to choose the location of the inducing points in the Nystr{\"o}m approximation  is common in the literature \cite{kumar2012sampling,zhang2008improved} and theoretically motivated in \cite{oglic2017nystrom}. The kernel \eqref{eq:rbfscaled} is also frequently used in the literature as it is simply a scaled kernel \cite[Sec.\ 4.2]{rasmussen2006gaussian}.
\end{remark}
By Proposition \ref{prop:2}, we establish the equivalence between Kernel PCR and equation \eqref{eq:lsngc_modelr} in lsNGC. For equation \eqref{eq:lsngc_modelu}, lsNGC assumes that Kernel PCA is applied independently to $X$ and $X_a$ to build the Kernel PCR covariates $f(X),g(X_a)$ and, therefore, construct a linear regression model that includes the covariates of \eqref{eq:lsngc_modelr}  when $\gamma' = 0$. In other words, these two independent nonlinear mappings  $f(X),g(X_a)$ ensure that \eqref{eq:lsngc_modelr}  is nested within \eqref{eq:lsngc_modelu}. This property is leveraged in lsNGC to perform hypothesis testing for causal discovery.  

However, this separated non-linear  mapping $f(X),g(X_a)$ is not the most general structure that preserves the nesting between the restricted and unrestricted models. As discussed in \cite{ancona2006invariance}, applying a polynomial kernel to $X$ (for the restricted model) and, respectively, $Z$ (unrestricted model) maintains this nesting property, as do stationary kernels.  
Note that, \eqref{eq:rbfscaled} is not stationary due to the scaling terms. However, in the numerical experiments, we will show that the scaling terms can be removed without affecting the performance of the method for causality discovery.

\subsubsection{Unification -- KPCR method for causal discovery}
\label{sec:KPCR}
Based on Propositions \ref{prop:1}--\ref{prop:2}, we propose a unified KPCR method for nonlinear causality discovery, which combines the strengths  of KGC and lsNGC. The new KPCR model is defined as follows.

\begin{enumerate}
\item Set the kernel $K(\cdot,\cdot)$ to the squared-exponential kernel \eqref{rbfkernel} (or another stationary kernel). For datasets with $n > 1000$, implement the Nystr\"om approximation as described in Section~\ref{sec:unified_propositions}.
\item  
Construct a restricted and an unrestricted regression model,
\[
\hat{\mathbf{y}} = \mathbf{f}(X)\hat{\boldsymbol{\beta}} = P\mathbf{y}, 
\qquad
\hat{\mathbf{y}}' = \mathbf{f}'(Z)\hat{\boldsymbol{\beta}}' = P'\mathbf{y},
\]
where $X$, $Z$, and $\mathbf{y}$ are defined in \eqref{eq:XZ}.  Following Section \ref{sec:kgc}, the projection matrix is defined as $P = \sum_{i=1}^{s} \mathbf{a}_i\mathbf{a}_i^\top$, where $\mathbf{a}_i, i =1, \dots, s$ are the $s$ leading eigenvectors of the restricted kernel $K(X,X)$, and $P'$ is similarly defined by the $s'$ leading eigenvectors of $K(Z, Z)$. 

\item   
Set the degrees of freedom of the restricted and unrestricted models as
\[
p_r = \text{rank}(P), \qquad p_u = \text{rank}(P'),
\]
corresponding to the number of eigenvectors with non-zero eigenvalues.  To avoid violating the condition that the degrees of freedom of the unrestricted model must exceed those of the restricted model, $p_u > p_r$, the eigenvectors defining the projectors $P,P'$ are further filtered according to the constraint that the associated eigenvalue for $\mathbf{a}_i$ is greater than the maximum eigenvalue across $i = 1, \dots, s$ multiplied by a small constant $\mu$, $\lambda_i > \mu\lambda_{max}$, as described in the footnote of section \ref{sec:kgc}.
\item   
An f-test with test-statistic \eqref{eq:fstat} and Bonferroni correction ($\alpha' = \frac{0.05}{n_t(n_t - 1)}$) is then implemented with $SSR_r = \mathbf{y} - \hat{\mathbf{y}}$, $SSR_u = \mathbf{y} - \hat{\mathbf{y}'}$ to test if the time series $a$ Granger-causes time series $b$.
\end{enumerate}
This new KPCR GC discovery method improves upon KGC and lsNGC in two  ways. First, the testing procedure described above provides a better estimate of causal edges, replacing the correlation-based index used in KGC with an explicit joint hypothesis test. At the same time, it retains the nonlinear modeling capabilities of kernel-based methods. 
Second, thanks to the Nystr{\"o}m approximation, KPCR scales to larger datasets, similarly to lsNGC. However, KPCR improves on lsNGC by using a more general structure (as opposed to the additive model in \eqref{eq:lsngc_modelu}), which preserves the nesting between the restricted and unrestricted models, therefore enabling an F-test to assess causality. In the numerical experiments, we will demonstrate that KPCR is statistically and practically superior to both KGC and lsNGC.

Finally, observe that, for the squared-exponential kernel, we set the lengthscale according to the heuristic $\ell = C\,n_{t}m$, where $n_{t}m$ denotes the number of lagged features, and $C$ is a constant chosen to be sufficiently large so that the degrees of freedom of the unrestricted model exceed those of the restricted model. For the Nystr{\"o}m approximation, the number of inducing points is fixed in advance, and the locations of the inducing points are  selected based on a random subset of the data.

\subsection{PCMCI}

The PCMCI method \citep{runge_detecting_2019} is another state of the art constraint-based approach for time series causal discovery. This algorithm is comprised of two stages, the first of which finds adjacencies within the causal graph using the PC causal-discovery algorithm \citep{spirtes_algorithm_1991}, adapted for time series. Beginning with a fully-connected causal graph, conditional independence tests are iteratively applied to each pair of variables $(X_{a(t-\tau)}, X_{bt})$, up to the maximum selected lag order, i.e. for $\tau =1, \dots,m$. If it is determined $X_{a(t-\tau)} \nindep X_{bt} |S$, where $S$ is some set of conditioning variables, there exists an edge between $X_{a(t-\tau)}$ and $X_{bt}$ in the time series causal graph. This stage yields an estimated parent set of the response variable, $\hat{\mathcal{P}}(X_{bt})$, which may include some false positives. The second stage is the momentary conditional independence (MCI) test, which further prunes edges by including the parent set of the driver variable $\hat{\mathcal{P}}(X_{a(t-\tau)})$ in a conditional independence test of $X_{a(t-\tau)} \nindep X_{bt} | \hat{\mathcal{P}}(X_{bt}) \backslash \{X_{a(t-\tau)}\} \cup \hat{\mathcal{P}}(X_{a(t-\tau)})$. This stage accounts for autocorrelation within time series and controls for false positives. 

Careful selection of the conditional independence test determines which types of causal relationships can be identified within the system. A Gaussian process-based conditional independence test, the Gaussian process distance correlation coefficient (GPDC) \citep{runge_detecting_2019}, is one proposed nonlinear conditional independence test. The GPDC tests $X_{a(t-\tau)} \ci X_{bt} |S$ by assessing independence between the residuals of two Gaussian process regression models
\begin{align}
    &X_{a(t-\tau)} = f(S) + \epsilon_a, ~~\epsilon_a \sim N(0, \sigma^2),\\
    &X_{bt} = g(S) + \epsilon_b, ~~\epsilon_b \sim N(0, \sigma^2).
\end{align}
The independence of residuals $\epsilon_a, \epsilon_b$ implies conditional independence of $X_{a(t-\tau)} , X_{bt}$ given $S$. Independence between residuals is tested for using the distance correlation coefficient, defined in \cite{szekely_measuring_2007} as
\begin{align}
    dCor(X, Y) = \frac{dCov(X, Y)}{\sqrt{dVar(X)dVar(Y)}},
\end{align}
where $dCov^2(X, Y) = \frac{1}{n^2}\sum_{k,l=1}^nA_{k,l}B_{k,l}$, and $A_{k,l}, B_{k,l}$ are distance matrices corresponding to $X, Y$. The distance correlation coefficient has the key property that $dCor(X,Y) = 0$ \emph{iff} the variables $X$ and $Y$ are independent, unlike Pearson's correlation coefficient. This enables the identification of potentially nonlinear dependencies between residuals. 

 The p-values obtained from the conditional independence tests are compared to $\alpha_{PC}$ in the $PC$ phase of the algorithm in order to determine which variables are removed from the initial parent sets. The $\alpha_{PC}$ parameter may be optimised using a model selection approach. 
The edges that appear in the final causal graph after the MCI phase have corresponding p-value below a selected significance level, which we set to $\alpha=0.05$. The PCMCI method performs $n_t^2m$ tests, which can be controlled using a multiple testing correction.

\section{Score-based nonlinear granger causality methods}
\label{sec:ourgp}
The previous methods rely on a large number of statistical tests, resulting in relatively high computational complexity. In this section, we discuss a score-based approach that evaluates how well a given causal graph fits the observed data. Specifically, we introduce the $GP_{SIC}$ method, in which we use the marginal likelihood of a GP model with a kernel that implements Automatic Relevance Determination (ARD) as the underlying scoring metric, combined with the Smooth Information Criterion (SIC) to enhance model selection. We then implement a search strategy to identify the graph that maximises this score.  

GPs have previously been proposed for GC analysis in \cite{amblard_gaussian_2012, zaremba_statistical_2022}. The work \cite{amblard_gaussian_2012} introduces a GC framework in which the maximised log-marginal likelihoods of a restricted and unrestricted GP regression model are compared, where a GC relationship is identified if the marginal likelihood of the unrestricted model exceeds that of the restricted model. The work \cite{zaremba_statistical_2022} also proposes two GP models equipped with ARD kernels, using a Generalised Likelihood Ratio Test (GLRT) to compare the marginal-likelihoods of these models.

The $GP_{SIC}$ approach treats the maximised marginal-likelihood of the unrestricted model as a score by which we select the best causal graph, removing the need for a two model approach. A similar framework to the $GP_{SIC}$ is proposed in \cite{cui_gaussian_2022}, which also maximises the marginal likelihood of a GP model with ARD kernel for a score-based approach to time series causal graph discovery (but without any information criterion). 
We will show that the model performs comparatively to other state of the art nonlinear Granger Causality methods in Section \ref{sec:results}. 

\subsection{$GP_{SIC}$ model} 
First, each time series $\boldsymbol{\xi}_c$ in the original time series system is normalised via $\frac{{\xi}_{ci} - \bar{\xi}_c}{\sigma_c}$ for $i = \ 1, \dots, n + m$, where $\bar{\xi}_c, \sigma_c$ are the mean and standard deviation, respectively, of $\boldsymbol{\xi}_c$. Each time series $c \in \mathcal{T}$ then has mean 0 and standard deviation 1. We define $X, Z, \mathbf{y}$ as in (\ref{eq:XZ}).\footnote{Lag selection for the $GP_{SIC}$ method can be performed via a Leave-One-Out Cross Validation approach. This can be efficiently evaluated by fitting the GP once for each lag $i=1,\dots,m$, and computing the log pseudo-likelihood \citep[Sec. 5.4]{rasmussen2006gaussian}, such that the lag order with the maximum log pseudo-likelihood is considered optimal. }
We then solve the aforementioned regression problem
\begin{equation}
    \mathbf{y} = \mathbf{f}(Z) + \boldsymbol{\epsilon}, ~~\boldsymbol{\epsilon} \sim N (0, \sigma^2I_n),
\end{equation}
where the $n \times 1$ dimensional vector of observations $\mathbf{y}$ is modeled by some nonlinear function of the covariates $\mathbf{f}(Z)$ with the addition of independent Gaussian noise $\boldsymbol{\epsilon}$.
In order to identify nonlinear relationships between the predictor and response variables, we propose a GP regression model for $i = 1, \dots, n$,
\begin{align}
    &y_{i+m} = f(Z_i) + \epsilon_{i+m}, ~~\epsilon_{i+m} \sim N(0, \sigma^2), \\
    &f \sim GP(0, k(Z_i, Z_{j}; \boldsymbol{\theta})),\\
    &k(Z_i, Z_{j}; \boldsymbol{\theta}) = \tau^2 e^{-\frac{1}{2}(Z_i - Z_{j}) (diag(\boldsymbol{l})^{-2}) (Z_i - Z_{j})^\top}, 
\end{align}
such that $y_{i + m}$ is the value of the observation at time instant $i + m$, $f$ is  GP distributed with mean $0$ and kernel function $k(Z_i, Z_{j}; \boldsymbol{\theta})$,  $\epsilon_{i+m}$ is the additive Gaussian noise at time instant $i+m$, $\boldsymbol{l} = [l_1, \dots l_{n_tm}]$ are lengthscales and $\tau^2$ is the kernel variance. As the time series have been standardised, we fix the kernel variance $\tau^2 \approx 1$, such that the optimisation is occurring only in the lengthscales. The hyperparameters of the model can be summarised by $\boldsymbol{\theta} = [\boldsymbol{l}, \sigma^2]$. 
Note that there is a lengthscale associated with each of the $c_i = 1, \dots, n_tm$ features of the design matrix. This implements ARD as irrelevant features are effectively removed from the regression model, if  the corresponding lengthscale is sufficiently large.

\paragraph{Hyperparameters estimation and Smooth Information Criterion} 
 The ``smooth $L_0$ norm'' \citep{oneill_variable_2023},  defined as 
\begin{align}
    \left\|\frac{1}{\boldsymbol{l}}\right\|_{0,\omega} = \sum_{c_i=1}^{n_tm}\phi_\omega \left(\frac{1}{l_{c_i}}\right) ~~\text{ with }~~   \phi_\omega\left(\frac{1}{l_{c_i}}\right) = \frac{\frac{1}{l_{c_i}}^2}{\frac{1}{l_{c_i}}^2 + \omega^2},
\end{align}
is a differentiable\footnote{The ``smooth $L_0$ norm'' is differentiable for $\omega > 0$.} approximation to the $L_0$ norm, where $l_{c_i}$ for $c_i = 1,\dots, n_tm$ is the lengthscale associated with each of the $n_tm$ covariates. As $\omega$ approaches $0$, $\phi_\omega$ converges to the $L_0$ norm,$\lim\limits_{\omega\rightarrow0}\phi_\omega(x) = \|x\|_0$, such that the SIC encourages sparsity and is suitable for gradient-based optimisation. 
We perform hyperparameter estimation in the GP by maximising the marginal likelihood penalised by the Smooth Information Criterion (SIC), resulting in
\begin{align}
\label{eq:mlsic}
    \log p(\mathbf{y}|Z,\boldsymbol{\theta}) - \frac{\log(n)}{2} \left\|\frac{1}{\boldsymbol{l}}\right\|_{0,\omega}.
\end{align}
The inverse lengthscale is penalised as large lengthscales promote sparsity and remove irrelvant features.\footnote{We follow the procedure described in  \cite{oneill_variable_2023} of optimising with telescoping $\omega$, where we begin with a large value of $\omega$ ($\omega = 100$) and optimise over a series of 50 decreasing $\omega$, approaching zero ($\omega = 10^{-5}$). This allows us to smoothly converge to the $L_0$ norm. Thus, the optimised model hyperparameters are 
$        \boldsymbol{\hat{\theta}} = \arg\max_{\boldsymbol{\theta}}\left(\log p(\mathbf{y}|Z,\boldsymbol{\theta}) - \frac{\log(n)}{2} \left\|\frac{1}{\boldsymbol{l}}\right\|_{0,\omega}\right)_{\omega \rightharpoonup 10^{-5}}$, 
where $\omega \rightharpoonup 10^{-5}$ denotes the telescopic limit procedure. The optimised lengthscales may then be used to examine the presence of Granger causality.}
As the lengthscale corresponding to feature $c_i$ becomes very large, the contribution of that feature to the covariance shrinks toward zero. 
If lengthscale $l_{c_i} < 50$ \footnote{$50$ is an empirically determined threshold, at which we assume that a lengthscale $l \geq 50$ results in significantly reduced contribution of the corresponding covariate to the model. In our experimentation, we find that the size of the lengthscale for a relevant feature is often much less than $50$, while the size of legthscale for an irrelevant feature is much greater than $50$.}, then feature $c_i$,  corresponding to one of the lagged vectors of each time series $X_a$ for $a = 1, \dots, n_t$ in $Z$, is determined to be relevant to the regression model of time series $X_b$. 
We use this criterion to determine if a time series Granger causes another, and refer to Algorithm \ref{alg:gpgc} for how to identify all (Granger) causal edges for a time series system using the $GP_{SIC}$ model. 
\begin{algorithm}[!ht]
    \label{alg:gpgc}
    \caption{$GP_{SIC}$-based Granger Causality}
    \begin{algorithmic}[1]
        \Require Phase-space reconstruction of time series system: $Z = [X_{a(t-m)}, \dots, X_{a(t-1)}, X_{b(t-m)},$ $\dots,X_{b(t-1)}, X_{n_t(t-m)}, \dots, X_{n_t(t-1)}]^\top \in \mathbb{R}^{n \times n_tm}$, lag order $m$
        \For {$b \in \{1,\dots,n_t\}$}
            \State Maximise the criterion \eqref{eq:mlsic} of the $GP_{SIC}$ model, where $\mathbf{y} = [\xi_{b(m+1)},\dots,\xi_{b(m+n)}]$
            \For {$ c_i \in \{1,\dots, n_tm\}, ~\text{s.t.}~ \ell_{c_i} < 50$}
                \State The time series $X_a$ associated with feature $c_i$ Granger Causes $X_b$
            \EndFor
        \EndFor
    \end{algorithmic}
\end{algorithm}

\paragraph{Contemporaneous Causal Identification with GC}
Identifying causal interactions at an instantaneous time point presents a more challenging task as we cannot rely on the forward relationship in time to orient the causal link. The PCMCI+ method \citep{runge_discovering_2022} detects contemporaneous causal interactions by adding conditional independence testing between observations at contemporaneous time points, i.e. $X_{at} \ci X_{bt} | S$, to the PCMCI framework. Contemporaneous adjacencies are oriented using the identified separating sets $S$ and several rules based on Markov Equivalence Class relations and assumptions such as acyclicity. 

As noted in the Introduction, standard GC cannot estimate causal relationships between time series at an instantaneous time point. In this section, we prove that GC can nevertheless be used to identify contemporaneous causal interactions under certain restrictions on the causal graph.  
\begin{algorithm}[!ht]
    \label{alg:contemp}
    \caption{Contemporaneous Edge Identification \& Orientation}
    \begin{algorithmic}[1]
        \Require Phase-space reconstruction of time series system: $X = [X_{a(t-m)}, \dots, X_{a(t-1)}, X_{at}, X_{b(t-m)},$ $\dots,X_{b(t-1)}, X_{bt}, X_{n_t(t-m)}, \dots, X_{n_tt}]^\top \in \mathbb{R}^{n \times n_t(m+1)}$, lag order $m$
        \State \textbf{Adjacency Finding Phase}
        \State Estimate graph of contemporaneous and lagged edges $\hat{\mathcal{G}'}$ with $GP_{SIC}$ by regressing on each $b = 1, \dots, n_t$ using the covariates $X \backslash X_{bt}$, where $t$ is the current time step
        \State From $\hat{\mathcal{G}'}$, identify the set of lagged parents $\hat{\mathcal{P}}(X_{bt}) = \{X_{a(t-\tau)}\}$  where $X_{a(t-\tau)} \rightarrow X_{bt}$ and lag $
        m \geq \tau > 0$
        \State Estimate graph of lagged edges $\hat{\mathcal{G}''}$ with $GP_{SIC}$ using only the covariates $\hat{\mathcal{P}}(X_{bt})$  for each $b = 1, \dots, n_t$
        \State Define graph $\hat{\mathcal{G'''}}$ as all lagged edges $e_m := X_{a(t-\tau)} \rightarrow X_{bt} \in \hat{\mathcal{G}''}$ and contemporaneous edges $e_0 := X_{at} - X_{bt} \in \hat{\mathcal{G}'}$
        \State \textbf{Orientation Phase}
        \State Identify the set $U$ of unshielded lagged triples in $\hat{\mathcal{G'''}}$, i.e. $X_{a(t-\tau)} \rightarrow X_{ct} - X_{bt}$ and $X_{a(t-\tau)}\not\rightarrow X_{bt}$
        \State Filter U to remove any triples where $X_{a(t-\tau)} \rightarrow X_{ct} - X_{bt}$ and $X_{a(t-\tau)} \rightarrow X_{dt} - X_{bt}$
        \For{$u \in U$}
            \If {$X_{a(t-\tau)} \rightarrow X_{bt} \in \hat{\mathcal{G}'}$ and $X_{a(t-\tau)} \rightarrow X_{bt} \notin \hat{\mathcal{G}'''}$}
            \State $X_{ct}$ is a collider variable of $(X_{a(t-\tau)}, X_{bt})$. Orient the triple as $X_{a(t-\tau)} \rightarrow X_{ct} \leftarrow X_{bt}$. 
            \State Mark conflicting orientations. 
            \ElsIf {$X_{a(t-\tau)} \rightarrow X_{bt} \in \hat{\mathcal{G}'}$ and $X_{a(t-\tau)} \rightarrow X_{bt} \in \hat{\mathcal{G}'''}$}
                \State $X_{ct}$ is not a collider of $(X_{a(t-\tau)}, X_{bt})$. Orient the triple $X_{a(t-\tau)} \rightarrow X_{ct} \rightarrow X_{bt}$.
                \State Mark conflicting orientations.
            \EndIf 
        \EndFor
        \For{$e_0 \in \hat{\mathcal{G'''}}$ that remain undirected}
        \If {$e_0$ is in a contemporaneous triple $X_{at} \rightarrow X_{ct} - X_{bt}$}
            \State $X_{ct}$ is not a collider of $(X_{at}, X_{bt})$. Orient $X_{ct} \rightarrow X_{bt}$.
            \State Mark conflicting orientations. 
            \EndIf
        \EndFor
    \end{algorithmic}
\end{algorithm}

 In our proposed approach, described in Algorithm \ref{alg:contemp}, the lagged and contemporaneous causal adjacencies are learned via the score-based method  $GP_{SIC}$ described in the previous section, but in two phases. In phase 1 (the adjacency-finding phase), we estimate the causal graph $G'$, which is over-conditioned on all contemporaneous and lagged variables, and thus expected to contain false positives in the set of lagged adjacencies. This graph will also contain contemporaneous edges. 
 Then, from the lagged parents estimated in $G'$, another graph $G''$ is estimated, which excludes contemporaneous variables to block spurious lagged pathways. Orientations for lagged adjacencies are automatically oriented forward in time. 
 
 Phase 2 (orientation phase), following the structure of constraint-based methods like PCMCI+, uses rules based on conditional independence relations and Markov Equivalence classes to orient as many contemporaneous adjacencies as possible. Under the assumptions of Causal Sufficiency, Causal Faithfulness, and the Causal Markov condition \citep{pearl_2000,sprites_2000}, our proposed approach is able to identify causal adjacencies consistent within a restricted class of Structural Causal Models (SCMs). Indeed, in order to consistently identify the true causal adjacencies, we must make the assumption\footnote{This restriction on SCMs can be relaxed by including additional testing to filter out false positives induced by conditioning on the contemporaneous colliders.}  that there are no fully contemporaneous colliders in the underlying SCM, i.e. the structure $X_{at} \rightarrow X_{bt} \leftarrow X_{ct}$ is not present in the ground-truth causal graph. With this assumption, due to the use of maximal conditioning sets and score-based approach, our method requires less statistical tests than the PCMCI+ method and  is fully based on GC.
 
Under these assumptions, we state the following theorems:
 \begin{theorem}
    \label{theorem:adj}
    The causal adjacencies identified by the $GP_{SIC}$-based contemporaneous algorithm,  see Algorithm \ref{alg:contemp}, are asymptotically consistent with the true causal graph (underlying SCM). 
\end{theorem}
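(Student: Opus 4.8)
The plan is to decompose the claim into two layers: first, that the $GP_{SIC}$ regression behaves asymptotically as a consistent conditional-independence oracle, and second, that routing this oracle through the two-phase adjacency-finding procedure of Algorithm \ref{alg:contemp} returns exactly the true adjacency set under the stated assumptions. For the first layer, I would argue that, as $n\to\infty$, maximising the SIC-penalised marginal likelihood \eqref{eq:mlsic} over the ARD lengthscales is selection-consistent: the smooth-$L_0$ penalty converges (via the telescoping $\omega\rightharpoonup 0$) to the $L_0$ penalty, whose minimum-information-criterion form is selection-consistent, inheriting this property from \citep{su2018sparse,oneill_variable_2023}. Concretely, a lagged or contemporaneous covariate receives a finite (sub-threshold) lengthscale in the limit \emph{iff} the response is conditionally dependent on that covariate given the remaining covariates in the design matrix. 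Under Causal Faithfulness and the Causal Markov condition (with Causal Sufficiency ensuring every relevant variable is observed), conditional dependence is equivalent to $d$-connection in the unrolled time-series DAG, so the selected feature set coincides with the graphical Markov-blanket-type set determined by the conditioning set used in each regression.

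With the oracle in hand, I would analyse each phase by $d$-separation in the unrolled DAG, whose edges are either forward-in-time or contemporaneous, with the contemporaneous subgraph acyclic and containing no fully contemporaneous collider. For phase 1 (estimation of $\hat{\mathcal{G}'}$), regressing $X_{bt}$ on all other contemporaneous and lagged variables selects the Markov blanket of $X_{bt}$: its parents, its children, and the co-parents of its children. The key observation is that a \emph{contemporaneous} co-parent of $X_{bt}$ would require a structure $X_{bt}\rightarrow X_{ct}\leftarrow X_{dt}$, i.e.\ a fully contemporaneous collider at $X_{ct}$, which the restriction forbids; hence the contemporaneous part of the blanket reduces exactly to the contemporaneous neighbours of $X_{bt}$. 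This shows the contemporaneous adjacencies recorded in $\hat{\mathcal{G}'}$ (and carried into $\hat{\mathcal{G}'''}$) are asymptotically correct, with neither false positives nor false negatives. The lagged part of the blanket, by contrast, equals the true lagged parents together with lagged co-parents $X_{d(t-\tau)}$ sharing a contemporaneous child with $X_{bt}$ (a permitted structure $X_{d(t-\tau)}\rightarrow X_{ct}\leftarrow X_{bt}$); these are precisely the spurious lagged adjacencies anticipated in the text, while every true lagged parent is retained because a parent can never be $d$-separated from its child.

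The second phase re-estimates the lagged edges by regressing $X_{bt}$ only on the candidate lagged parents $\hat{\mathcal{P}}(X_{bt})$, deliberately excluding contemporaneous variables, and I would show this removes exactly the spurious co-parents while preserving the true parents. For a spurious co-parent $X_{d(t-\tau)}$, its only active connection to $X_{bt}$ in phase 1 ran through the now-unconditioned contemporaneous collider $X_{ct}$; since $X_{ct}$ is no longer in the conditioning set the collider path is blocked, and because every lagged variable sits at an earlier time it is a non-descendant of $X_{bt}$, so by the local Markov property it is $d$-separated from $X_{bt}$ given the remaining lagged parents and correctly dropped. A true lagged parent, having a direct edge into $X_{bt}$, stays $d$-connected under any conditioning set and is kept. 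Forming $\hat{\mathcal{G}'''}$ from the contemporaneous edges of $\hat{\mathcal{G}'}$ and the lagged edges of $\hat{\mathcal{G}''}$ therefore yields the true adjacency set in the limit.

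The main obstacle I anticipate is the phase-2 $d$-separation bookkeeping: I must rule out that a non-parent lagged candidate remains $d$-connected to $X_{bt}$ through some alternative route once contemporaneous variables are dropped, for instance a chain $X_{d(t-\tau)}\rightarrow X_{ct}\rightarrow X_{bt}$ through a contemporaneous parent of $X_{bt}$. The resolution is that any such variable would already have been $d$-separated from $X_{bt}$ in phase 1 (where the intermediate $X_{ct}$ \emph{is} conditioned on, blocking the chain), so it never enters $\hat{\mathcal{P}}(X_{bt})$; making this exhaustive requires enumerating the admissible path types (chains, forks, colliders) between a lagged candidate and $X_{bt}$ and checking each against the two conditioning sets, and this is where Faithfulness (to exclude cancellation-induced independencies) and the no-contemporaneous-collider restriction (to pin down which co-parents can arise) do the essential work. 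Upgrading the single-regression selection-consistency to a uniform statement over all $n_t$ regressions is then routine.
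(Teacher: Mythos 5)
Your overall architecture is the same as the paper's: a lemma making $GP_{SIC}$ an asymptotic conditional-independence oracle (the paper's Lemma \ref{lemma:conditional_independence}, which it argues via risk minimisation and transfer entropy rather than your selection-consistency route for the SIC penalty --- a cosmetic difference), then a phase-1 characterisation of $\hat{\mathcal{G}}'$ as the true adjacencies plus collider-induced spurious lagged parents (the paper's Lemma \ref{lemma1}, which your Markov-blanket framing reproduces, including the observation that forbidden fully contemporaneous colliders are what makes the contemporaneous part exact), and finally a phase-2 filtering step. Your phase-1 analysis is sound and equivalent to the paper's.

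The genuine gap is in your phase 2. The step ``because every lagged variable sits at an earlier time it is a non-descendant of $X_{bt}$, so by the local Markov property it is $d$-separated from $X_{bt}$ given the remaining lagged parents'' misapplies the local Markov condition: that condition gives independence of non-descendants given \emph{all} parents of $X_{bt}$, whereas the phase-2 conditioning set $\hat{\mathcal{P}}(X_{bt})\setminus\{X_{d(t-\tau)}\}$ excludes the contemporaneous parents. Your subsequent patch for the ``alternative route'' obstacle covers only candidates whose sole connection is a chain through a contemporaneous parent (those indeed never enter $\hat{\mathcal{P}}(X_{bt})$), but not candidates possessing \emph{both} a phase-1-active collider path and a path that opens only in phase 2. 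Concretely, consider the edges $X_{e(t-2)} \rightarrow X_{d(t-1)}$, $X_{e(t-2)} \rightarrow X_{ct}$, $X_{ct} \rightarrow X_{bt}$, $X_{d(t-1)} \rightarrow X_{ft} \leftarrow X_{bt}$; no fully contemporaneous collider is present. In phase 1, $X_{d(t-1)}$ enters $\hat{\mathcal{P}}(X_{bt})$ through the conditioned collider $X_{ft}$, while $X_{e(t-2)}$ is correctly excluded (its chain through $X_{ct}$ is blocked). In phase 2 the collider path closes, but the fork path $X_{d(t-1)} \leftarrow X_{e(t-2)} \rightarrow X_{ct} \rightarrow X_{bt}$ opens, since neither $X_{e(t-2)}$ nor $X_{ct}$ is conditioned on; hence $X_{d(t-1)}$ is retained and a false lagged edge survives into $\hat{\mathcal{G}}'''$. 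The paper's own proof of Theorem \ref{theorem:adj} handles this point only by asserting that collider conditioning is the ``only'' source of dependence for $\mathcal{F}^-(X_{bt})$ variables, so it does not supply the missing argument either; but in your write-up the local-Markov invocation is an outright incorrect step rather than an unproved assertion, and the exhaustive path enumeration you defer is exactly where the argument breaks.
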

\begin{theorem}
    \label{theorem:cont}
    The orientations identified by the $GP_{SIC}$-based contemporaneous algorithm are correct. 
\end{theorem}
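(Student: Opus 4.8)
The plan is to establish soundness of every orientation rule in the population limit, where by Theorem \ref{theorem:adj} the $GP_{SIC}$ relevance criterion is consistent: a variable is retained as a predictor of $X_{bt}$ in a given regression exactly when it is conditionally dependent on $X_{bt}$ given the covariate set of that regression. The two intermediate graphs then admit a clean reading: $\hat{\mathcal{G}'}$ records conditional (in)dependence given \emph{all} remaining contemporaneous and lagged variables, whereas $\hat{\mathcal{G}''}$ (the lagged part of $\hat{\mathcal{G}'''}$) records it given only the estimated lagged parents. Since the latter set never contains a contemporaneous node, the pair $(\hat{\mathcal{G}'},\hat{\mathcal{G}''})$ realises precisely the ``middle node in the conditioning set'' versus ``middle node excluded'' dichotomy that underlies collider detection in the PC algorithm. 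I would then verify the three orientation mechanisms — temporal, collider, and Meek propagation — in turn.

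First, temporal orientation of lagged edges is immediate: an edge incident to $X_{a(t-\tau)}$ with $\tau>0$ must point forward in time, since a strictly earlier variable can only be a cause of a contemporaneous one. This orients every lagged edge of $\hat{\mathcal{G}'''}$ correctly and fixes the source endpoint of each unshielded triple.

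The central step is the collider rule. Consider an unshielded triple $X_{a(t-\tau)} \to X_{ct} - X_{bt}$ of $\hat{\mathcal{G}'''}$, so that $X_{a(t-\tau)}$ is not a retained lagged parent of $X_{bt}$ in the pruned graph. Under faithfulness and the Markov condition, only two structures can produce such a triple: the collider $X_{a(t-\tau)} \to X_{ct} \leftarrow X_{bt}$ and the mediating chain $X_{a(t-\tau)} \to X_{ct} \to X_{bt}$ (a fork is impossible, since the contemporaneous $X_{ct}$ cannot be a parent of the earlier $X_{a(t-\tau)}$). The two are separated exactly by the conditioning regimes: conditioning on $X_{ct}$ (as $\hat{\mathcal{G}'}$ does) \emph{opens} the collider but \emph{blocks} the chain, whereas excluding $X_{ct}$ (as $\hat{\mathcal{G}''}$ does) blocks the collider and opens the chain. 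Hence $X_{a(t-\tau)}$ appears as a (spurious) lagged parent in $\hat{\mathcal{G}'}$ if and only if $X_{ct}$ is a collider, so the algorithm's test — edge present in $\hat{\mathcal{G}'}$ but absent from the pruned $\hat{\mathcal{G}'''}$ — is equivalent to colliderhood, while absence from both identifies the chain and, because $X_{a(t-\tau)} \to X_{ct}$ is already oriented, forces $X_{ct} \to X_{bt}$. This matches the two orientations emitted by the inner loop.

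The main obstacle is making this ``straddling'' argument rigorous once \emph{all} the other conditioned variables are present, rather than the single node $X_{ct}$. One must rule out that the change in dependence of $X_{a(t-\tau)}$ and $X_{bt}$ between the two regimes is produced by a second contemporaneous collider on another path, or by a second contemporaneous route from $X_{a(t-\tau)}$ into $X_{bt}$. This is exactly what the two structural restrictions buy: the \emph{no fully contemporaneous collider} assumption removes the first source of spurious opening, and the filtering of $U$ (discarding triples in which one lagged source reaches $X_{bt}$ through two distinct contemporaneous neighbours) removes the second. I would argue that under these restrictions every path whose activation status differs between $\hat{\mathcal{G}'}$ and $\hat{\mathcal{G}''}$ must pass through $X_{ct}$, so the Boolean signature is unambiguous. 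Finally, the closing loop is Meek's first rule restricted to contemporaneous edges: in a triple $X_{at} \to X_{ct} - X_{bt}$ the orientation $X_{bt} \to X_{ct}$ would create the fully contemporaneous collider $X_{at} \to X_{ct} \leftarrow X_{bt}$, which the standing assumption forbids, so $X_{ct} \to X_{bt}$ is forced. Collecting the cases, every oriented edge is fixed either by temporal precedence, by a faithful collider test, or by the no-contemporaneous-collider assumption, and since conflicting orientations are flagged rather than committed, all orientations returned are correct.
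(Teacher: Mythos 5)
Your proposal is correct and takes essentially the same route as the paper's own proof: temporal orientation of lagged edges, collider detection via the dichotomy between the over-conditioned graph $\hat{\mathcal{G}'}$ (conditioning on the contemporaneous node opens collider paths and blocks chains) and the pruned graph $\hat{\mathcal{G}'''}$ (excluding it does the reverse), exclusion of the ambiguous triples where one lagged source reaches $X_{bt}$ through two contemporaneous neighbours, and the no-fully-contemporaneous-collider assumption forcing the remaining contemporaneous orientations. The differences are presentational rather than substantive — you ground the argument in Theorem \ref{theorem:adj} where the paper leans directly on Lemma \ref{lemma1}, and you name the final step as Meek's first rule.
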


\section{Results}
\label{sec:results}
In Section \ref{sec:simulations}, we assess the performance of the aforementioned nonlinear GC discovery methods as well as the contemporaneous adjacency-finding algorithms on simulated datasets. We further assess the performance of the contemporaneous algorithms on a real-world application in Section \ref{sec:realdata}.

\subsection{Numerical Simulations}
\label{sec:simulations}

We compare the performance of the constraint-based Granger causality methods, KGC \citep{marinazzo_kernel_2008}, lsNGC \citep{wismuller_large-scale_2021}, to our proposed kernel-based methods: (i) the KPCR method discussed in Section \ref{sec:kpcr}; (ii) the $GP_{SIC}$-based method. Additionally, we include a comparison to the constraint-based causal discovery method PCMCI \citep{runge_detecting_2019}. We also include a comparison between $GP_{SIC}$ and the other GP-based methods \citep{cui_gaussian_2022,amblard_gaussian_2012,zaremba_statistical_2022} for Granger causality. 
The implementations of these models are discussed in the supplementary material. The models are tested on several simulated benchmark time series systems (many of these benchmarks  were first considered in \cite{martinez-sanchez_decomposing_2024,wismuller_large-scale_2021}), representing various types of causal relationships as described in Table \ref{table:experiment_desc}. The simulated examples vary in the number of variables, $n_t$, such that we can assess the performance of the models for both small and large time-series systems. All methods were tested using the true lag order $m$ for the simulated experiments, following \cite{cui_gaussian_2022, marinazzo_kernel_2008}. We evaluate uncertainty across $100$ Monte Carlo (MC) simulations for each simulated time-series system, with differing noise and/or initial conditions. The true models for each benchmark system  are reported in detail in the supplementary material. 

\begin{table}[!h]
\caption{Description of simulated experiments.}
\label{table:experiment_desc}
\centering
{\footnotesize
  \begin{tabular}{|c|c|c|c|c|c|}
    \hline
    \rowcolor{gray!40}
    \bf{$n_t = 2$} & \bf{$n_t = 3$} & \bf{$n_t = 5$} & \bf{$n_t = 8$} & \bf{$n_t = 20$} & \bf{$n_t = 30$} \\
    \hline
    \rowcolor{gray!20}
    \textbf{m=1} & \textbf{m=1} & \textbf{m=3} & \textbf{m=1} & \textbf{m=1} &  \textbf{m=1}\\
    1-way Logistic & Confounder & 5 Linear & 8 Nonlinear & 20 Nonlinear & 30 Nonlinear \\
    2-way Logistic & Mediator & 5 Nonlinear & & &\\
     & Synergistic Collider & \cellcolor{gray!20}\textbf{m=5} & & & \\
     & Redundant Collider & Moran Effect & & & \\
     & 3 Fan-in  & & & & \\
     & 3 Fan-out & & & & \\
     & Sync. (1-way int.) & & & & \\
     & Sync. (1-way strong) & & & & \\
     & Sync. (2-way strong) & & & & \\
     & \cellcolor{gray!20}\textbf{m=2} & & & & \\
     & Stochastic Linear & & & & \\
     & Stochastic Nonlinear & & & & \\
    \hline
    \end{tabular}
    }
\end{table}

The F1 score is used to compare the performance of the models. The F1 score is the harmonic mean of the Precision and Recall, defined as $\frac{2 \text{Precision}\text{Recall}}{\text{Precision} + \text{Recall}}$, where $\text{Precision} = \frac{\text{TP}}{\text{TP} + \text{FP}}$ and $\text{Recall} = \frac{\text{TP}}{\text{TP} + \text{FN}}$ \citep{murphy2012machine}. The F1 score is thus on a scale of $0.00$ to $1.00$, where a score of $1.00$ indicates all positive and negative causal relationships were correctly identified. Self-causal links were set to $0$ and thus not considered for correct identification for all methods, to be consistent with \cite{marinazzo_kernel_2008}.

Figure \ref{fig:gp_comparison} compares the GP-based GC methods -- GP$_{SIC}$, $GP$ \citep{cui_gaussian_2022}, $GP \Delta_\ell$ \citep{amblard_gaussian_2012}, $GP~GLRT $ \citep{zaremba_statistical_2022}. These methods were compared across a representative selection of the simulated experiments, where the Confounder, Mediator, and Synergistic Collider experiments are $n_t=3, m=1$ variable systems, representing the common structural relationships. The $5$ Nonlinear system has the parameters $n_t=5, m=3$, and the 20 Nonlinear system has $n_t=20, m=1$. The GP$_{SIC}$ method has a consistently high median F1 score across the experiments, and demonstrates the best performance in the experiment with the largest number of variables $n_t = 20$. This is consistent with what is shown in \cite{oneill_variable_2023,su2018sparse}, where SIC outperforms other information-based criteria and prior-based regularisation. For this reason, in the following comparisons, we report only GP$_{SIC}$, as it is the best method among the GP-GC approaches.

Figure \ref{fig:boxplots_lagged} compares the F1 score for the five nonlinear causal discovery methods, $GP_{SIC}$, lsNGC, KGC, KPCR, and PCMCI, across 19 simulated experiments. In Figure \ref{fig:boxplots_lagged}, it is seen that the $GP_{SIC}$ method performs comparatively or better than the other methods across the simulations. The $GP_{SIC}$ method consistently has high median F1, often with a small spread. Additional results tables comparing the average F1 score and standard deviation across the MC runs are also included in the supplementary material, in which the $GP_{SIC}$ method is shown to have the highest average F1 score in several of the simulated experiments, especially when there is less data available ($n = 250$).

From Figure \ref{fig:boxplots_lagged}, the KPCR method performs comparatively to the lsNGC and KGC methods in most simulations, and improves the F1 in several simulations (e.g. increased median in 3 Fan-out, Synchronous 2-way Strong, Moran Effect). The KGC method appears to perform slightly better for time series systems with a large number of variables ($n_t = \{20, 30\})$ when more data is available ($n = 500$).\footnote{Some of the simulated experiments are deterministic (3 Fan-in and 3 Fan-out), or nearly deterministic (Synchronized systems). In these examples, we observe that the lsNGC, KGC, and to some extent, KPCR methods have decreased performance for increasing $n$ due to the use of fixed parameter values (lengthscale in KGC, KPCR and $c_f$, $c_g$, lengthscale in lsNGC). Using fixed parameters or heuristics to determine parameter values can lead to numerical instabilities in the kernel when variance of noise is zero, resulting in increased identification of spurious edges.  The KPCR method reduces this effect in some examples as a result of employing the more conservative f-test and Bonferroni correction, while the $GP_{SIC}$ method is able to avoid this by optimizing the values of the lengthscales. For large values of $n_t$, KGC performs better than KPCR. We verified that this is due to the heuristic currently used to select the lengthscale in KPCR, see Section \ref{sec:KPCR}, and we leave a more principled investigation and improvement of this heuristic to future work.}

These claims are supported by a statistical analysis. We compared the five methods using the nonparametric Bayesian signed-rank test \citep{benavoli2017time,benavoli2014bayesian} applied to the median F1 score computed in each of the 19 simulated experiments. For each pair of causal discovery methods $(C_1, C_2)$ (for example, $C_1 = GP_{SIC}$ and $C_2 = \text{PCMCI}$), this test returns the posterior probability 
that $C_1 > C_2$, $C_1 \equiv C_2$ and $C_1 < C_2$ denoted respectively as $[p(C_1),\; p(rope),\; p(C_2)]$,
which can be visualised in the   probability simplex (Figure \ref{fig:baycomp500} for $n=500$, while Figure \ref{fig:baycomp} for $n=250$ is in the supplementary material). We declare two methods practically equivalent, $(C_1 \equiv C_2$, when the difference in F1 is less than 0.01 (1\%). The interval $[-0.01, 0.01]$ therefore defines a region of practical equivalence (rope).
From Figure \ref{fig:baycomp500}, we observe that $GP_{SIC}$ is practically superior to PCMCI with probability one. It also outperforms lsNGC with probability $0.99$, and it is better than KGC  with probability $0.85$. The statistical comparison between KPCR and KGC/lsNGC (Figure \ref{fig:baycomp}, second row)  shows that KPCR is generally better than either KGC or lsNGC. Its performance is comparable to KGC, but unlike KGC, KPCR can be efficiently scaled to large datasets.

\begin{figure}
    \centering
    \includegraphics[width=0.7\linewidth]{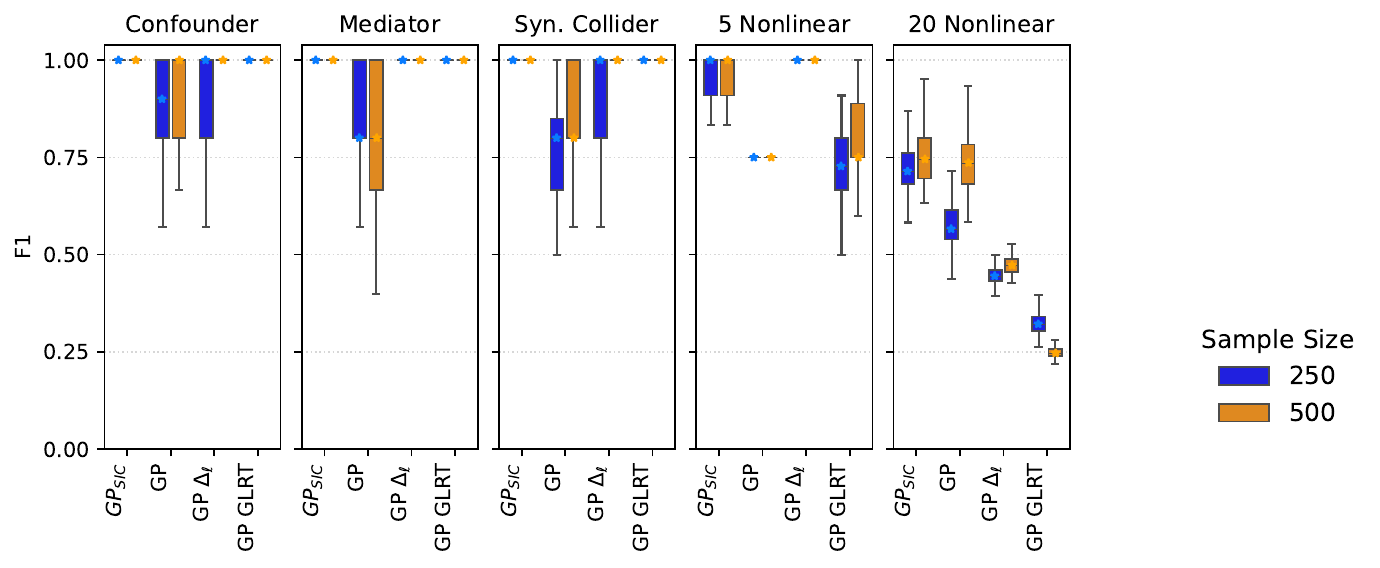}
    \caption{Comparison of four GP-based Granger Causality Methods -- the \emph{$GP_{SIC}$} method, single model \emph{GP} method without SIC \citep{cui_gaussian_2022}, two-model likelihood difference method \emph{$GP \Delta_\ell$} \citep{amblard_gaussian_2012}, and the GP GLRT method \citep{zaremba_statistical_2022}. The methods were compared across a selection of the simulated experiments.}
    \label{fig:gp_comparison}
\end{figure}

\begin{figure}
    \centering
    \includegraphics[width=0.95\linewidth]{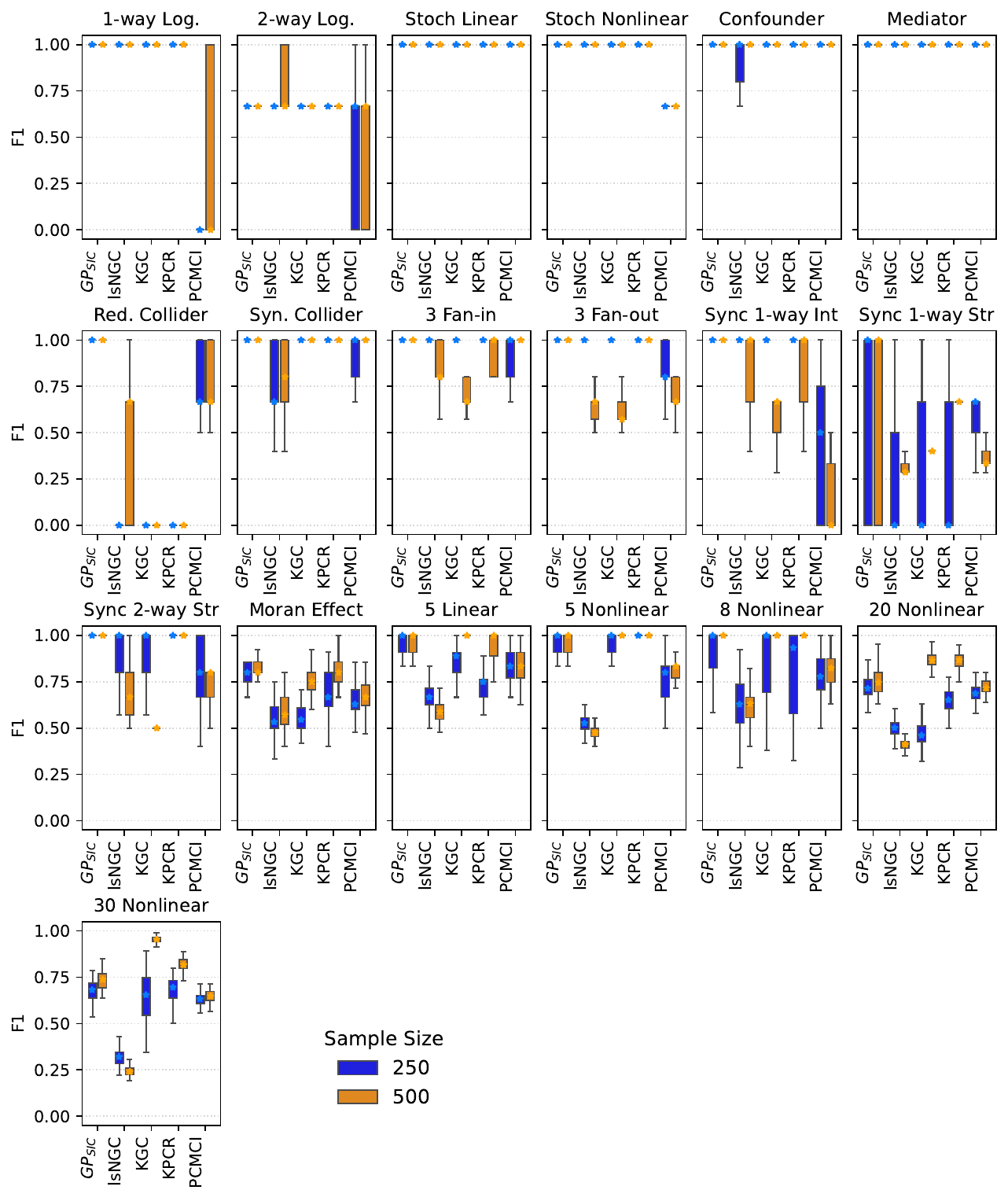}
    \caption{F1 score for five nonlinear causal discovery methods: $GP_{SIC}$, lsNGC, KGC, KPCR method, and PCMCI. 100 MC experiments were run for each simulated dataset for $n=250, 500$. The median is marked with a $\star$.}
    \label{fig:boxplots_lagged}
\end{figure}

We additionally compare the performance of the contemporaneous $GP_{SIC}$ causal algorithm in \ref{alg:contemp} to the PCMCI+ algorithm \citep{runge_discovering_2022}. Figure \ref{fig:contemp} shows results for two simulated examples, an $n_t=5, m=3$ linear system with Gaussian noise, and an $n_t =5, m=3$ nonlinear Gaussian system. Results for additional simulated experiments are included in the supplementary material. We calculate the F1 score for lagged adjacencies among only lagged connections. The F1 score for contemporaneous adjacencies is evaluated among only contemporaneous connections, and assesses whether a correct adjacency was found without considering orientation. The F1 for contemporaneous orientation then considers how many contemporaneous edges were correctly oriented, where Precision is defined as the ratio of correctly oriented edges to all estimated contemporaneous adjacencies and Recall is the ratio of correctly oriented edges to all contemporaneous adjacencies in the ground-truth causal graph. We test time series systems for an increasing number of time series $n_t=\{5, 10, 20\}$, varying autocorrelation strengths $a = \{0.1, 0.4, 0.8\}$, differing sample sizes $n = \{250, 500\}$, and differing functional types (linear or nonlinear).

In the linear Gaussian setting, both methods have similar performance in identification of lagged adjacencies. The PCMCI$+$ method has higher F1 for contemporaneous adjacency identification and orientation across the experiments. We did not test high autocorrelation ($a = 0.8$) for the linear Gaussian setting as the systems were non-stationary. In the nonlinear Gaussian setting, the $GP_{SIC}$ method consistently has higher F1 for lagged adjacency identification. 
For contemporaneous adjacency and orientation identification, the PCMCI$+$ performs better across the experiments when autocorrelation strength $a$ is lower, while the $GP_{SIC}$ method performs better for higher autocorrelation. As a GC method, the $GP_{SIC}$ contemporaneous algorithm relies on a maximal conditioning set, and as such will have lower power in comparison to methods that use reduced conditioning sets. However, the $GP_{SIC}$-based algorithm is fully defined within the GC framework, as it performs only $2n_t$ tests, compared to PCMCI$+$, which performs $n_t^2(m+1)$ tests, as shown in the below table.

\begin{center}
{\footnotesize
  \begin{tabular}{|c|c|c|}
    \hline
    \rowcolor{gray!40} 
    & $GP_{SIC}$ & PCMCI$+$ \\
    \hline 
    $n_t=5,m=1$ & 10 & 50 \\
    \hline 
    $n_t=5,m=3$ & 10 & 100\\ 
    \hline 
    $n_t=10,m=2$ & 20 & 300\\ 
    \hline 
    $n_t=20,m=1$ & 40 & 800\\ 
    \hline 
    \end{tabular}}\end{center}

\begin{figure}
    \includegraphics[width=0.31\linewidth,trim={2cm 1cm 2cm 2cm},clip]{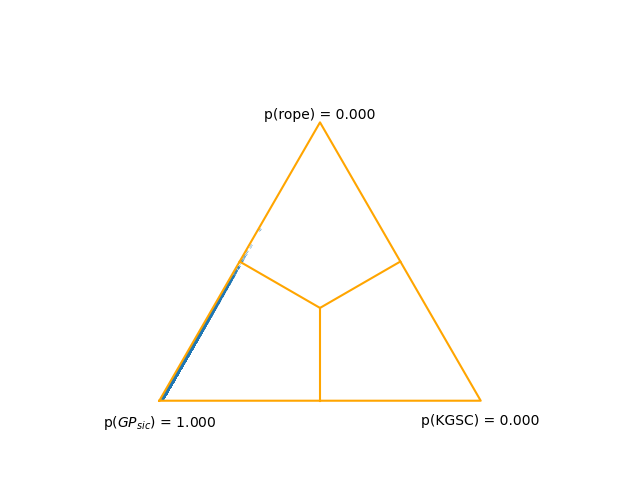}            \includegraphics[width=0.31\linewidth,trim={2cm 1cm 2cm 2cm},clip]{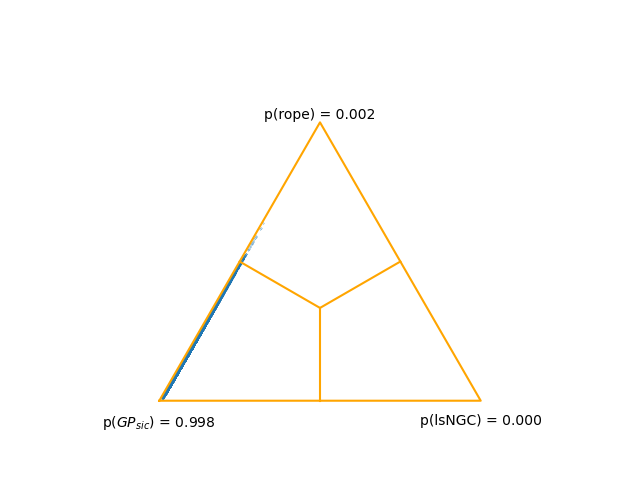}
            \includegraphics[width=0.31\linewidth,trim={2cm 1cm 2cm 2cm},clip]{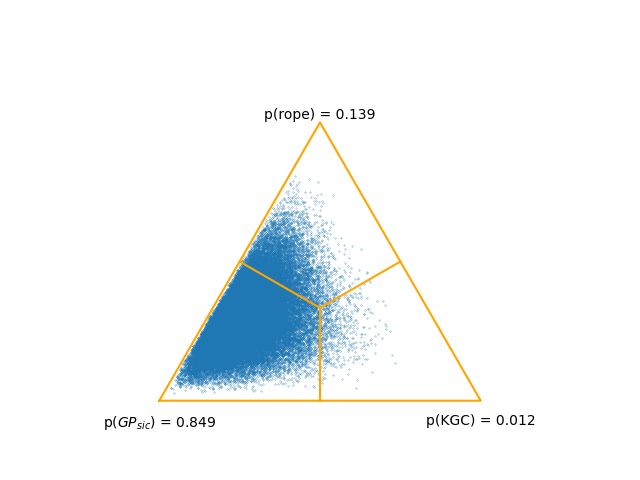}
            \\
        \includegraphics[width=0.31\linewidth,trim={2cm 1cm 2cm 2cm},clip]{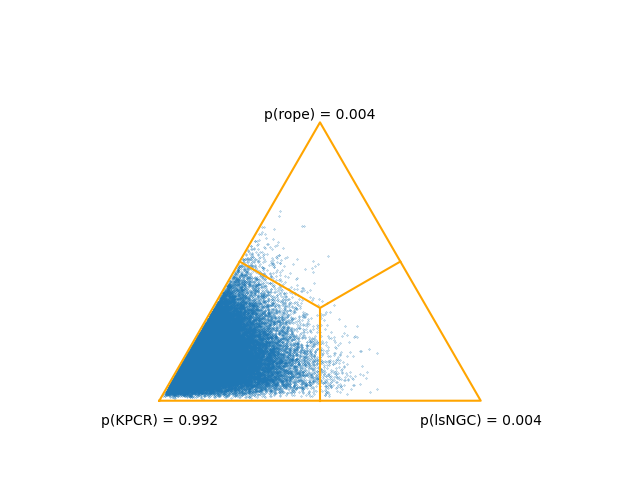}
            \includegraphics[width=0.31\linewidth,trim={2cm 1cm 2cm 2cm},clip]{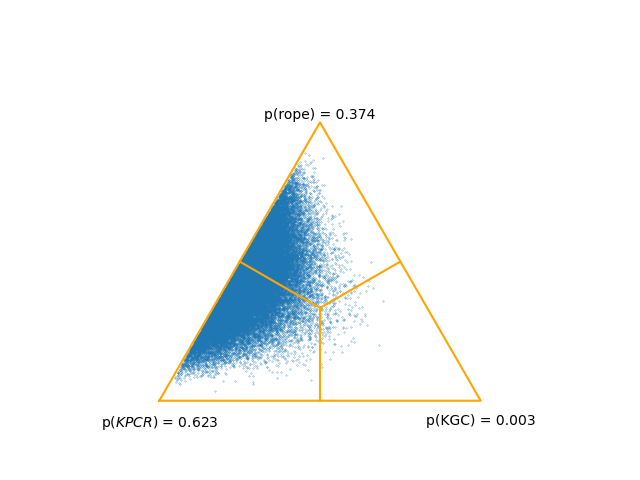}

    \caption{$n=500$ -- For each pair of causal discovery methods $(C_1, C_2)$ (e.g., $C_1 = GP_{SIC}$ and $C_2 = PCMCI$), we report the posterior probability vector
that $C_1 > C_2$, $C_1 \equiv C_2$ and $C_1 < C_2$ denoted respectively as $[p(C_1),\; p(rope),\; p(C_2)]$,
obtained via the Bayesian Wilcoxon signed-rank test. The cloud of points in the figure corresponds to samples from this posterior distribution: each point is a probability vector
$[p(C_1),\; p(rope),\; p(C_2)]$
which we plot in the probability simplex.}
    \label{fig:baycomp500}
\end{figure}

\begin{figure}
    \centering
    \includegraphics[width=0.48\linewidth]{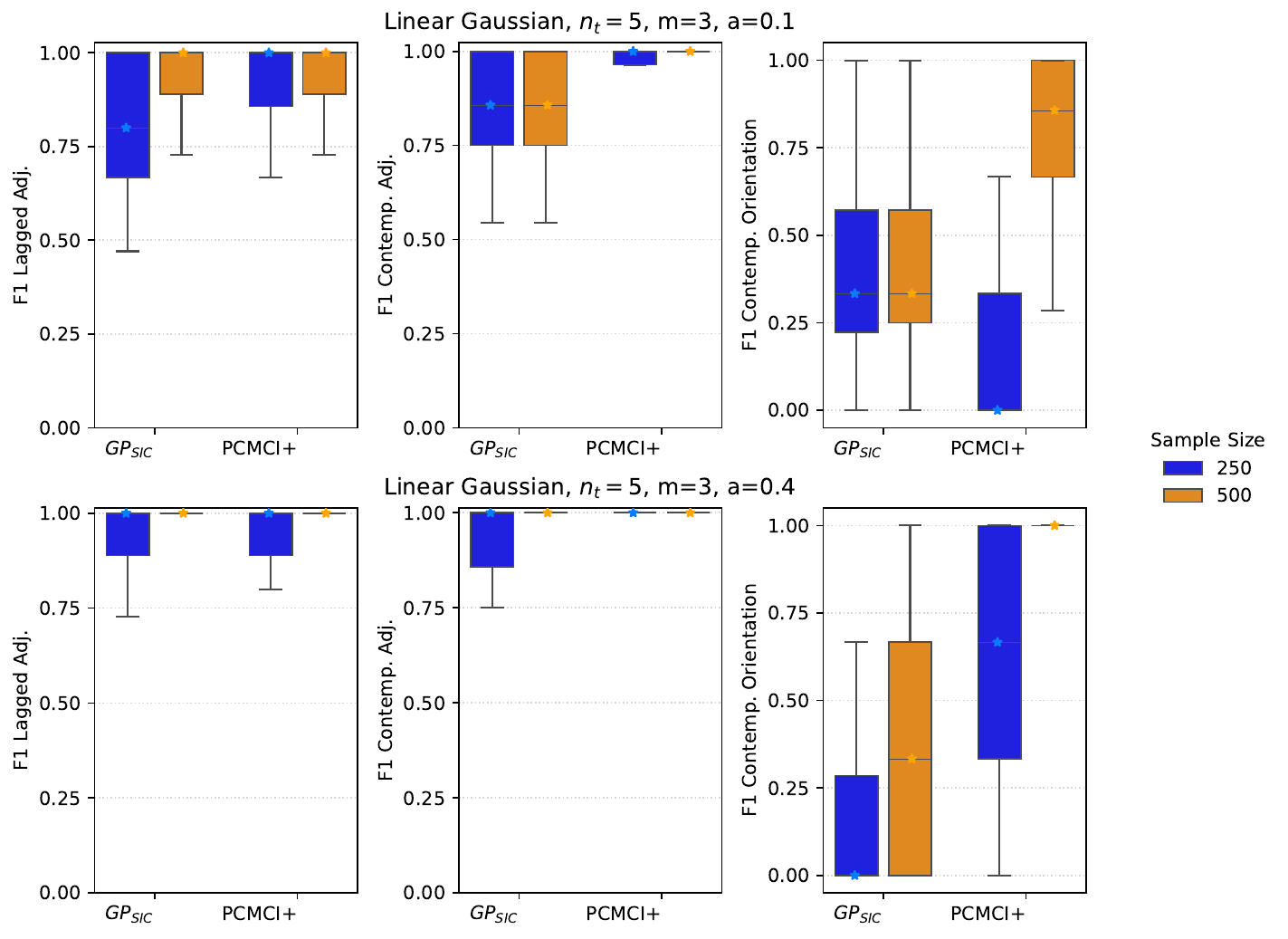}
    \includegraphics[width=0.48\linewidth]{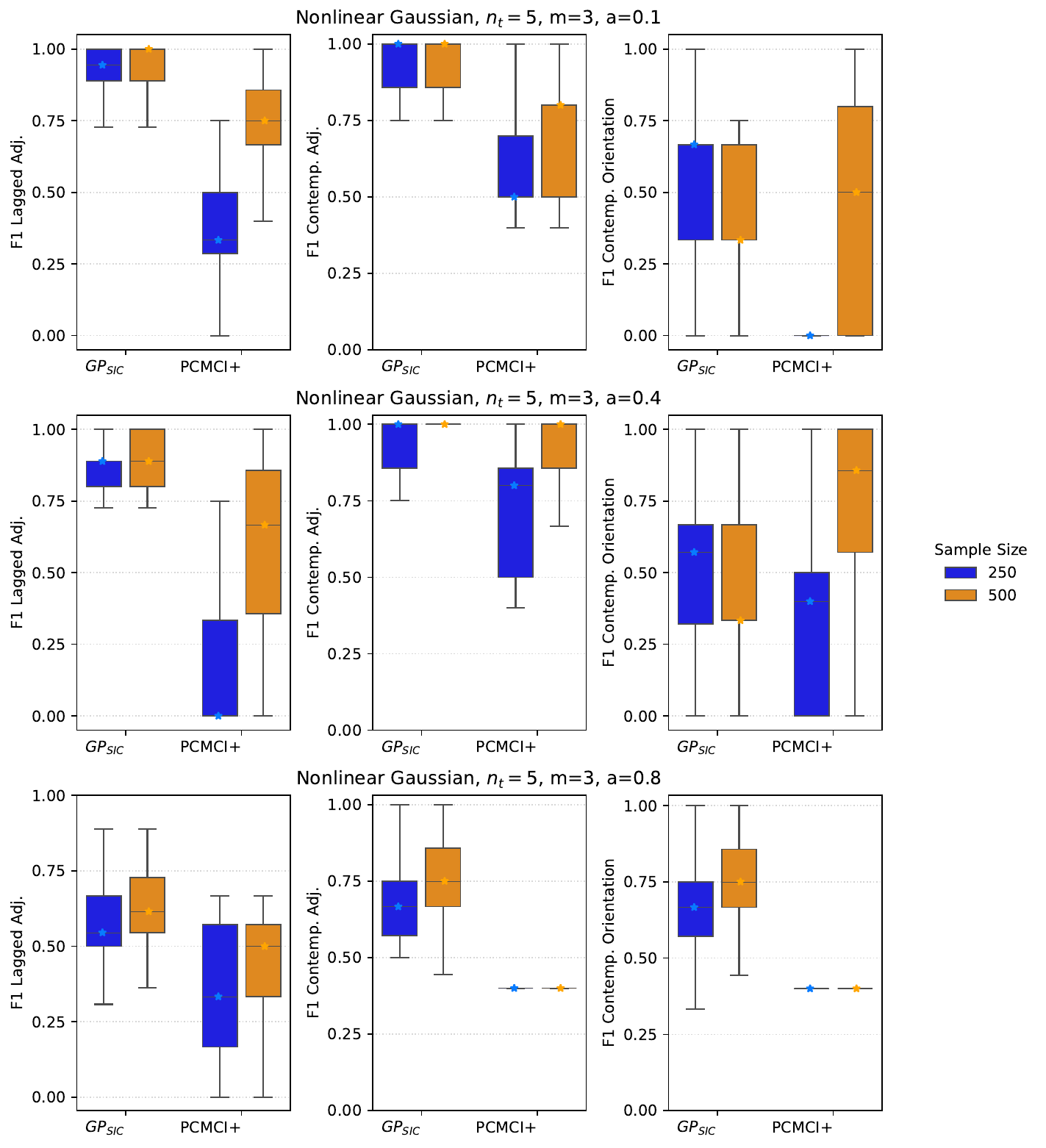}
    \caption{Box plots of the F1 score for the contemporaneous $GP_{SIC}$ algorithm and $PCMCI+$. The F1 score is evaluated separately for the lagged adjacencies, the contemporaneous adjacencies, and the contemporaneous orientations. The plots on the left are linear Gaussian systems, and the plots on the right are nonlinear Gaussian systems.}
    \label{fig:contemp}
\end{figure}

\subsection{Real-world Application}
\label{sec:realdata}

In control engineering, the process variables within a dynamical system are monitored and adjusted in order to drive the system to a desired state. Understanding causal dynamics within these systems can aid in the design of models that can effectively govern the changes between desired states. The work \cite{michalowski_modelling_2011} simulates a pH neutralisation plant, in which a Continuous Stirred Tank Reactor (CSTR) is used to maintain desired states (set points) for both level of fluid in the tank and pH. There is a complex interaction of variables between the level control loop and pH control loop, compounded by nonlinear causal relationships, feedback within the control loops, and instantaneous causal effects. 

Due to the presence of instantaneous causal effects, we attempt to recover the causal graph associated with the data from the simulated plant in \cite{michalowski_modelling_2011} using the $GP_{SIC}$-based contemporaneous algorithm and PCMCI$+$. We simulated data with the pump enabled across two different modes: \emph{servo} mode uses constant acid flow with fluctuating pH set point, whereas \emph{regulatory} mode maintains a constant pH set point while varying the acid flow disturbance. As such, we expect to observe differing causal dynamics between the two modes, where the variable that is held constant should be eliminated as a driver from the causal graph. We simulate 50 datasets with $n=1000$ data points for both modes using different seeds, and compare the performance between the two models in Table \ref{phplant_results}. We additionally show the rate of recovered causal connections across the simulations in the supplementary material.
\begin{table}[!h]
  \caption{\label{phplant_results} Comparison of $GP_{SIC}$-based contemporaneous algorithm and PCMCI$+$ on pH neutralisation plant data. The average across 50 simulated datasets is reported, and standard deviation is included in parentheses.}
\centering
{\footnotesize
  \begin{tabular}{|c|c|c|c|c|}
    \hline
    \rowcolor{gray!40}
     & \bf{Precision} & \bf{Recall/TPR} & \bf{FPR} & \bf{F1} 
    \\
    \hline
    \rowcolor{gray!20}
    \bf{Servo} & & & &\\
    \hline
    $GP_{SIC}$ & 0.54 (0.15) &0.46 (0.06) &0.07 (0.04) &0.49 (0.07)\\
    \hline
    PCMCI$+$ & 0.46 (0.14) & 0.32 (0.11) & 0.06 (0.02) & 0.37 (0.11)\\
    \hline
    \rowcolor{gray!20}
    \bf{Regulatory} & & & &\\
    \hline
    $GP_{SIC}$ &0.55 (0.15) & 0.32 (0.07)&0.05 (0.02) &0.40 (0.07)\\
    \hline
    PCMCI$+$ & 0.34 (0.11) & 0.21 (0.07) & 0.08 (0.03) & 0.26 (0.08)\\
    \hline
    \end{tabular}}
\end{table}

Both methods are able to distinguish the control loops (level and pH) across the simulations, and have similar False Positive Rate ($FPR = \frac{FP}{FP + TN}$). Additionally, both methods detect the absence of a causal relationship when the acid flow is held constant in servo mode, or the pH set point is held constant in regulatory mode. However, only the $GP_{SIC}$ detects the relationship between acid flow and pH in some of the regulatory simulations; the PCMCI$+$ method never detects a causal relationship.

For both methods, some causal edges are rarely or never detected, e.g. $\{Acid Flow, Base Flow, Outflow\} \rightarrow pH$, and $\{Acid Flow, Base flow, Outflow\} \rightarrow Level$. The supplementary material includes a plot of the normalised time-series data for these subsystems, to show that there is a lack of identifiability. Specifically, the causal effect is negligible compared to the strong autocorrelation of the integrated variable (level or pH).
 While the causal interactions regarding the mechanics of the controllers are consistently identified, the lack of identifiability of these subsystem dynamics explains the low F1 score. However, we note in comparison to the PCMCI$+$ method, the $GP_{SIC}$ method has higher average F1 for both data modes.

\section{Conclusions}

Kernel-based methods for Granger Causality enable the identification of complex, nonlinear causal relationships. These GC methods fall into two main camps of either constraint or score-based approaches. In this work, we have reviewed two prominent kernel constraint-based approaches to nonlinear GC, and have shown that they can be unified under the framework of Kernel Principal Component Regression. Unifying these methods can improve causal identification, as shown in the simulated experiments. 

Furthermore, we introduce a Gaussian Process score-based approach to nonlinear GC with the incorporation of an information criterion. This approach has comparable or improved performance in many of the simulated experiments, and we show how it can be extended to identify contemporaneous causal relationships. 

The $GP_{SIC}$ model assumes Gaussian noise, while other causal identification methods can account for non-Gaussian noise terms. Future work may include investigating non-Gaussian likelihoods in the $GP_{SIC}$ model in order to account for non-Gaussian noise, which would entail approximate inference methods. 

\section*{Acknowledgments}
This publication has emanated from research
conducted with the financial support of Taighde
Éireann – Research Ireland Centre for Research Training in Artificial
Intelligence under Grant No. 18/CRT/6223.

\section*{Declarations}
The authors have no conflicts of interest to disclose. The data and source code are made available, as detailed in the supplementary material.

\bibliographystyle{abbrv}
\bibliography{refs}

\newpage
\section*{Supplementary Material}

\subsection*{Proof of Proposition 1}

\begin{proof}
    Kernel PCR solves a regression problem where the covariates are given by the data projected onto the principal components \eqref{eq:PCApc}, such that the full covariate matrix can be defined as $\widecheck{K}(X, X)A\Lambda^{-1/2}$, where $\widecheck{K}(X,X) \in \mathbb{R}^{n \times n}$ is the full kernel matrix,  $A = [\mathbf{a}_1, \dots, \mathbf{a}_s] \in \mathbb{R}^{n \times s}$ is the matrix defined by the $s$ leading eigenvectors of the kernel, and $\Lambda = diag(\lambda_i) \in \mathbb{R}^{s \times s}, i = 1, \dots, s$ is a diagonal matrix of the $s$ leading eigenvalues. The function $\mathbf{f}(X)$ can thus be defined as $\mathbf{f}(X) = \widecheck{K}(X, X)A\Lambda^{-1/2}$, such that the regression model in \eqref{eq:KGC1} is equivalent to 
    $$
    \begin{aligned}
        \mathbf{\hat{y}} = \widecheck{K}(X, X)A\Lambda^{-1/2}\hat{\boldsymbol{\beta}}.
    \end{aligned}
    $$
    The estimated coefficients are given by least-squares:
    $$
    \begin{aligned}
        \hat{\boldsymbol{\beta}} = \Big((\widecheck{K}(X, X)A\Lambda^{-1/2})^\top \widecheck{K}(X, X)A\Lambda^{-1/2}\Big)^{-1}(\widecheck{K}(X, X)A\Lambda^{-1/2})^\top \mathbf{y},
    \end{aligned}
    $$
    such that the regression model can be written as a projection of the observations $\mathbf{y}$ as it is done in the KGC method, where the orthogonal projection matrix is now equivalent to 
    $$
    \begin{aligned}
        P = \widecheck{K}(X, X)A\Lambda^{-1/2} \Big((\widecheck{K}(X, X)A\Lambda^{-1/2})^\top \widecheck{K}(X, X)A\Lambda^{-1/2}\Big)^{-1}(\widecheck{K}(X, X)A\Lambda^{-1/2})^\top.
    \end{aligned}
    $$
    In the KGC method, the projection matrix is defined by the $n_v$ eigenvectors of $K$ with non-zero eigenvalue, i.e. $P = \sum_{i=1}^{n_v} \mathbf{a}_i\mathbf{a}_i^\top = AA^\top$, which will coincide with establishing $s = n_v$ in Kernel PCR. By using the eigenvector-eigenvalue decomposition of the kernel matrix $\widecheck{K}(X, X) = A\Lambda A^\top$,
    the projection matrix from Kernel PCR is seen to be equivalent to the projection matrix in the KGC method 
    $$
    \begin{aligned}
         P = A\Lambda A^\top A\Lambda^{-1/2} ((\Lambda^{-1/2})^\top A^\top (A \Lambda A^\top)^\top A\Lambda A^\top A\Lambda^{-1/2})^{-1} (\Lambda^{-1/2})^\top A^\top (A\Lambda A^\top)^\top = AA^\top,
    \end{aligned}
    $$
    i.e. the regression problem solved in KGC is equivalent to that solved in Kernel PCR. The same rationale follows for the unrestricted regression model, where \eqref{eq:KGC2} is equivalent to $\mathbf{\hat{y'}} = \widecheck{K}(Z, Z)A'\Lambda'^{-1/2}\hat{\boldsymbol{\beta}'}$.
\end{proof}

\subsection*{Proof of Proposition 2}

\begin{proof}
    The lsNGC method uses $c_f$ cluster centroids, identified via k-means clustering, within the normalized Radial Basis Functions in \eqref{lsngc_kernel_f}, reducing the dimensionality of the covariates $c_f < (n_t - 1)m$. This results in a covariate matrix with dimensionality $n \times c_f$, where each element of the matrix is defined by the nonlinear transformation $f_j(X_i)$ for $j=1,\dots,c_f$ and $i=1,\dots,n$.
    
    Assume that the number of inducing points used in the Nystr{\"o}m approximation $\widetilde{K}$ is equivalent to the number of cluster centroids used in the lsNGC method, $n_J = c_f$. Using the proposed kernel \eqref{eq:rbfscaled}, we can write the components of $\widetilde{K}$ as $K(X, X_J) = D_X^{-1}K_0(X, X_J)D_J^{-1}$ and $K(X_J, X_J) =  D_J^{-1}K_0(X_J, X_J)D_J^{-1}$, where $K_0({\bf x}_i, {\bf x}_j) = e^{-\|{\bf x}_i - {\bf x}_j\|^2/l^2}$ is the standard SE kernel, $D_X$ is an $n$-dimensional diagonal matrix of the normalisation terms $\sum_{k=1}^{n_J} e^{-\|{\bf x}_i-{\bf x}_k\|^2/l^2}$ and $D_J$ is an $n_J$-dimensional diagonal matrix of the normalisation terms $\sum_{k=1}^{n_J} e^{-\|{\bf x}_k-{\bf x}_j\|^2/l^2}$. As such, $\widetilde{K}$ can be written as
    $$
    \begin{aligned}
        &\widetilde{K} = D_X^{-1}K_0(X, X_J)K_0^{-1}(X_J, X_J)K_0(X_J, X)D_X^{-1},
    \end{aligned}    
    $$
    such that its feature vectors can be written as $\phi(X) =  D_X^{-1}K_0(X, X_J)K_0^{-1/2}(X_J, X_J)$. It can be seen that $D_X^{-1}K_0(X, X_J)D_J^{-1}$ is equivalent to $\mathbf{f}(X)$ in the lsNGC framework. Thus, the covariates $\phi(X)V$ of a KPCR with Nystr{\"o}m approximation approach, where $V$ is the matrix of $s = n_J$ leading eigenvectors of the covariance matrix $\phi(X)^\top\phi(X)$, is equivalent to the approach used in lsNGC, up to the factor $D_JK_0^{-1/2}(X_J, X_J)V$. However, since this factor is multiplied by  $\boldsymbol{\beta}$ it is absorbed into the parameters during optimisation:
    $$
\begin{aligned}
   \tilde{\boldsymbol{\beta}} &= (D_JK_0^{-1/2}(X_J, X_J)V)^{-1}(\mathbf{f}(X)^\top\mathbf{f}(X))^{-1}(D_JK_0^{-1/2}(X_J, X_J)V)^{-\top} (D_JK_0^{-1/2}(X_J, X_J)V)^{\top}\mathbf{f}(X)^\top\mathbf{y}\\
   &=(D_JK_0^{-1/2}(X_J, X_J)V)^{-1}(\mathbf{f}(X)^\top\mathbf{f}(X))^{-1}\mathbf{f}(X)^\top\mathbf{y},\\
\end{aligned}
$$
and so the relationship $\hat{\boldsymbol{\beta}}=(D_J K_0^{-1/2}(X_J, X_J)V)\tilde{\boldsymbol{\beta}}$ proves the equivalence between lsNGC and Kernel PCR.
\end{proof}

\subsection*{Proof of Contemporaneous Algorithm}
\paragraph*{Assumptions} We consider the phase-space reconstruction of a time series system, $X = [X_{a(t-m)},\dots,$ $X_{a(t-1)}, X_{at}, X_{b(t-m)}, \dots,X_{b(t-1)}, X_{bt}, \dots, X_{n_t(t-m)}, \dots, X_{n_tt}]^\top$, which adheres to the underlying Structural Causal Model (SCM)
$$
\begin{aligned}
    X_{bt} = f(\mathcal{P}(X_{bt})) + \epsilon_{bt}, \epsilon_{bt} \sim \mathcal{N}(0, \sigma^2).
\end{aligned}
$$
The assumed SCM is a linear or nonlinear function of the (lagged and contemporaneous) parents $\mathcal{P}(X_{bt})$ of the variable with additive Gaussian noise that is independent across time points and between time series.

The assumptions made regarding causal graph identifiability \citep{sprites_2000, pearl_2000, runge_discovering_2022} include
\begin{enumerate}
    \item \emph{Causal Sufficiency}: All variables are observed within the system $X$, i.e. there are no hidden confounders.
    \item \emph{Causal Faithfulness}: 
    \begin{enumerate}
        \item Adjacency Faithfulness: The conditional dependence relations implied by the edges present in the underlying causal graph hold in the joint probability distribution. 
        \item Standard Faithfulness: The conditional independence relations implied by the $d$-separation structures in the underlying causal graph hold in the joint probability distribution.
    \end{enumerate}
    \item \emph{Causal Markov Condition}: Each variable $X_{bt}$ is independent of all non-descendants given its parents $\mathcal{P}(X_{bt})$.
\end{enumerate}
Regarding the time series, stationarity of time-series is assumed, although non-stationary time-series can be evaluated with GP-based methods by modifying the kernel function \citep{cui_inference_2024}. It is also assumed that the correct lag order $m$ is selected for the model. 

Furthermore, the proposed method assumes the absence of contemporaneous collider triples, i.e. $X_{at} \rightarrow X_{ct} \leftarrow X_{bt}$. It will be shown that spurious lagged parents due to conditioning on colliders within a lagged triple, i.e. $X_{a(t - \tau)} \rightarrow X_{ct} \leftarrow X_{bt}$, are identifiable and may be filtered, but the proposed method precludes consistent graph identification when fully contemporaneous colliders exist. 

\begin{lemma}
    \label{lemma:conditional_independence}
    The Granger non-Causal relationship $X_a \not\rightarrow X_b$ identified by the $GP_{SIC}$ method implies the conditional independence relationship $X_a \ci X_b | X_{\cdot(t-m)}$, where $X_{\cdot(t-m)}$ includes all variables conditioned on in the $GP_{SIC}$ model for each $b = 1, \dots, n_t$, up to lag $m$.
\end{lemma}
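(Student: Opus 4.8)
The plan is to exploit the structure of the squared--exponential ARD kernel together with the additive--noise form of the $GP_{SIC}$ model, turning the operational statement ``the driver $X_a$ is irrelevant'' into an exact factorisation of the conditional law of $X_{bt}$. Throughout I partition each design row $Z_i$ into the block $Z_i^a$ of lagged coordinates belonging to the driver $X_a$ and the block $Z_i^{-a}$ of the remaining lagged coordinates, so that $Z_i=[Z_i^a,Z_i^{-a}]$ and $Z^{-a}$ is precisely the conditioning set $X_{\cdot(t-m)}$ appearing in the statement (all conditioned lags other than those of $X_a$).

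First I would record what ``$X_a\not\rightarrow X_b$'' means for the fitted model. By Algorithm \ref{alg:gpgc}, the non-causal verdict is returned when every lengthscale attached to a coordinate of $Z^a$ is large; at the population level, and in the asymptotic regime targeted by Theorem \ref{theorem:adj}, the $L_0$--approximating SIC penalty in \eqref{eq:mlsic} drives the corresponding inverse lengthscales $1/l_{c_i}$ to zero, i.e.\ $l_{c_i}\to\infty$ for every coordinate of $Z^a$. I would therefore read the lemma in this idealised limit, in which the driver's coordinates are exactly switched off.

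Second I would show that infinite lengthscales on $Z^a$ render the kernel constant along those coordinates. Starting from
\[
k(Z_i,Z_j)=\tau^2\exp\!\Big(-\tfrac12\,(Z_i-Z_j)\,\mathrm{diag}(\boldsymbol{l})^{-2}\,(Z_i-Z_j)^\top\Big),
\]
the multiplicative factor contributed by any coordinate of $Z^a$ tends to $1$ as its lengthscale diverges, so $k(Z_i,Z_j)$ depends on $Z_i,Z_j$ only through $Z_i^{-a},Z_j^{-a}$. Because $f$ is a zero-mean GP with this covariance, for any $Z^a,Z^{a\prime}$ one has $\mathrm{Var}\big(f(Z^a,Z^{-a})-f(Z^{a\prime},Z^{-a})\big)=k(u,u)+k(v,v)-2k(u,v)=\tau^2+\tau^2-2\tau^2=0$, writing $u=(Z^a,Z^{-a})$, $v=(Z^{a\prime},Z^{-a})$. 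Hence almost surely $f(Z^a,Z^{-a})=\tilde f(Z^{-a})$ does not depend on $Z^a$; the same conclusion holds for the posterior mean, since the predictive equations \eqref{posterior} involve the inputs only through $k$.

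Finally I would combine this with the likelihood $X_{bt}=f(Z)+\epsilon_{bt}$, $\epsilon_{bt}\sim\mathcal N(0,\sigma^2)$ independent of the inputs. Substituting $f(Z)=\tilde f(Z^{-a})$ gives $p(X_{bt}\mid Z^a,Z^{-a})=\mathcal N\big(X_{bt};\tilde f(Z^{-a}),\sigma^2\big)$, which is free of $Z^a$, so $p(X_{bt}\mid Z^a,Z^{-a})=p(X_{bt}\mid Z^{-a})$; this is by definition the conditional independence $X_a\ci X_b\mid X_{\cdot(t-m)}$. The main obstacle is the first step: the practical rule uses the finite threshold $l\ge 50$, so the factorisation is only exact in the limit of diverging lengthscales. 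I expect to discharge this by appealing to the $L_0$ behaviour of SIC together with Causal Faithfulness, arguing that at the population level a genuinely irrelevant driver forces the inverse lengthscales to zero, so that the approximate ``switching off'' used in the algorithm coincides asymptotically with the exact conditional independence required here.
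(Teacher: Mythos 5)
Your core computation is sound and follows a genuinely different route from the paper's. You work directly with the ARD kernel: infinite lengthscales on the driver block make the kernel constant in $Z^a$, the variance identity $\mathrm{Var}\big(f(u)-f(v)\big)=k(u,u)+k(v,v)-2k(u,v)=0$ kills the dependence of $f$ (and of the posterior mean) on $Z^a$, and the additive Gaussian likelihood then factorises, giving $X_{bt}\ci Z^a \mid Z^{-a}$ \emph{under the fitted GP model}. The paper instead argues at the population level via statistical risk minimisation and transfer entropy: Granger non-causality means the risk-minimising regression functions with and without $X_a$ coincide, which (for the additive Gaussian SCM) is equivalent to vanishing conditional mutual information, i.e.\ conditional independence; it then identifies the GP posterior mean with the kernel-ridge risk minimiser to carry this over to $GP_{SIC}$. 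Your version is more concrete and makes explicit where the ARD structure enters, something the paper leaves implicit.

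There are, however, two genuine gaps in how you close the argument, and they are exactly the points the paper handles by explicit assumption rather than proof. First, your factorisation yields conditional independence under the \emph{model}, whereas the lemma asserts conditional independence in the \emph{true} distribution (it is used downstream, in Lemma \ref{lemma1}, with faithfulness applied to the true joint law). Bridging the two requires the paper's assumption that the hyperparameters learned by maximising \eqref{eq:mlsic} are ``correct''/faithful to the underlying SCM; this cannot be derived from Causal Faithfulness, which relates true conditional independences to graph structure and says nothing about consistency of the estimator. Second, your proposed discharge of the finite-threshold issue is stated in the converse direction: you plan to show that ``a genuinely irrelevant driver forces the inverse lengthscales to zero'', i.e.\ CI $\Rightarrow$ switched off. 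The lemma, in contrapositive, requires that a genuinely \emph{relevant} driver keeps its lengthscale below the threshold, i.e.\ switched off $\Rightarrow$ CI; what you propose to prove would instead establish that detected edges are true dependencies (no false positives), not that missed edges are true independencies (no false negatives), which is the content of the lemma. As written, the last paragraph therefore does not close your gap; importing the paper's explicit hyperparameter-correctness assumption (or proving the detection-power direction) would.
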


\begin{proof}
    The connection between linear Granger Causality and the concept of transfer entropy can be understood based on statistical risk minimisation \citep{marinazzo_kernel_2008}. Consider the stochastic variables $X_b$ and $y$, where both variables are recorded across $t= 1, \dots, n$ time instances. The regression function which minimises the risk functional $R[f] = \int p(X_b, y)(y - f(X_b))^2dydX_b$ is the conditional expectation of the current observation given its past values, $f^*(X_b) = \int p(y|X_b)ydy$. The inclusion of another variable $X_a$, which may be multivariate and include observations at time instant $t$ that are contemporaneous to the observation $y$, yields the solution $g^*(X_b, X_a) = \int p(y|X_b,X_a)ydy$. The statement that $X_a$ does not Granger cause $X_b$ indicates that the inclusion of $X_a$ in the regression function does not improve the prediction of $y$, such that $p(y|X_b) = p(y|X_a,X_b)$ and thus $f^*(X_b) = g^*(X_b, X_a)$. Transfer entropy evaluates whether $p(y|X_b) = p(y|X_b,X_a)$ via
    the assessment of the conditional mutual information between $X_a$ and $X_b$,
    $$
    \begin{aligned}
        &I(y; X_a|X_b) = \int\int\int p(y, X_b, X_a)log(\frac{p(y|X_b,X_a)}{p(y|X_b)})dydX_bdX_a,\\
        &=\int\int\int p(y,X_b,X_a)log(\frac{p(y,X_a|X_b)}{p(y|X_b)p(X_a|X_b)})dydX_bdX_a,
    \end{aligned}
    $$
    which may be considered equivalent to the evaluation of the conditional independence relationship $y \ci X_a | X_b$, which tests $p(y,X_a|X_b) = p(y|X_b)p(X_a|X_b)$ and thus whether $TE_{X_a\rightarrow X_b} = 0$ \citep{runge_causal_2018}. As Granger non-causality implies conditional independence, Granger causality implies conditional dependence.   
    
    As linear Granger Causality is equivalent to Kernel Granger Causality with a linear kernel \citep{marinazzo_kernel_2008}, Kernel Granger Causality with a suitable choice of nonlinear kernel generalises GC to the nonlinear setting \citep{marinazzo_kernel_2008}. The regression equations $f^*(X_b)$ and $g^*(X_b,X_a)$ remain the solutions to the minimisation of the risk functional, and so a nonlinear GC relationship identified via KGC can also be interpreted in terms of conditional independence relations. When using kernel (ridge) regression, the solution to the risk-minimising function at one realisation, $f^*(X_{bt^*})$, with the addition of noise has the analytical form $y^*(X_{bt^*}) = k(X_{bt^*})^\top(K(X_b, X_b) + \lambda I_n)^{-1}\mathbf{y}$, where $X_{bt^*}$ is a new test point, $k$ and $K$ are kernel functions, $\lambda$ is the regularisation parameter, and $\mathbf{y}$ is the vector of $n$ observations.
    
   As the Bayesian approach to kernel regression, the solution to the risk-minimising function a GP regression model coincides with that in kernel regression, i.e. the expected value of the GP posterior predictive distribution, 
    $$
    \begin{aligned}
        \mathbb{E}[y^*(X_{bt^*})|X_b] = k(X_{bt^*})^\top(K(X_b, X_b) + \sigma^2 I_n)^{-1}\mathbf{y},
    \end{aligned}
    $$
where $\sigma^2$ is the variance of the noise.
   Under the assumption that the correct hyperparameters are learned when maximising the marginal-likelihood, which are faithful to the underlying SCM, we can draw an equivalence between a Granger non-causal relationship identified by $GP_{SIC}$ and a conditional independence relationship between variables.
\end{proof}

\begin{lemma}
    \label{lemma1}
    The set of estimated adjacencies $\hat{\mathcal{A}}(X_{bt})$, for each $b = 1, \dots, n_t$, identified as graph $G'$ is a superset of the true adjacencies, containing exactly
    $\hat{\mathcal{A}}(X_{bt}) = \mathcal{P}^-(X_{bt}),\mathcal{F}^-(X_{bt}),\mathcal{A}_t(X_{bt})$,  where $\mathcal{A}_t(X_{bt})$ are the true contemporaneous adjacencies and $\mathcal{P}^-(X_{bt})$ are the true lagged parents of $X_{bt}$. The remaining lagged conditions $X_{a(t - \tau)} \in \mathcal{F}^-(X_{bt})$ are, at most, the lagged parents of a collider that is contemporaneous to $X_{bt}$, i.e. $X_{a(t - \tau)} \rightarrow X_{ct} \leftarrow X_{bt}$.
\end{lemma}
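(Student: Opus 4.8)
The plan is to translate the adjacency-selection rule of the $GP_{SIC}$ model into a statement about conditional independence, and then run a $d$-separation argument that exploits the fact that, when estimating $G'$, we condition on essentially every variable in the system. First I would invoke Lemma~\ref{lemma:conditional_independence}. When regressing $X_{bt}$ on $X\setminus\{X_{bt}\}$, the $GP_{SIC}$ model assigns a small lengthscale to the feature $W$ — and hence declares an adjacency $W-X_{bt}\in G'$ — precisely when that feature remains relevant given all the others. By Lemma~\ref{lemma:conditional_independence} this is equivalent to the conditional dependence $W\nindep X_{bt}\mid S$, where $S = X\setminus\{W,X_{bt}\}$ is the maximal conditioning set of all remaining lagged and contemporaneous variables up to lag $m$. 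Thus $\hat{\mathcal{A}}(X_{bt})$ is exactly the set of variables conditionally dependent on $X_{bt}$ given everything else.

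Next I would show every true adjacency is recovered. If $W\in\mathcal{P}^-(X_{bt})$ is a genuine lagged parent, or $W\in\mathcal{A}_t(X_{bt})$ is a genuine contemporaneous adjacency, then the corresponding edge is present in the ground-truth graph, so by Adjacency Faithfulness the two endpoints remain conditionally dependent under any conditioning set, in particular under $S$. Hence $\mathcal{P}^-(X_{bt})\cup\mathcal{A}_t(X_{bt})\subseteq \hat{\mathcal{A}}(X_{bt})$, giving the superset claim.

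The crux is characterising the false positives. Suppose $W\in\hat{\mathcal{A}}(X_{bt})$ is not adjacent to $X_{bt}$ in the true graph. By Faithfulness, $W\nindep X_{bt}\mid S$ forces a $d$-connecting path between $W$ and $X_{bt}$ given $S$. Here I would use the maximality of $S$: every interior node of such a path is a variable other than the two endpoints, hence belongs to $S$. An interior non-collider lying in $S$ would block the path, so every interior node must be a collider; since two colliders cannot be adjacent on a path and any intervening node would be a blocking non-collider in $S$, the path can contain at most one interior node. Consequently the only admissible $d$-connecting path has a single interior collider $C$, giving $W\rightarrow C\leftarrow X_{bt}$. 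Because the edge $X_{bt}\rightarrow C$ makes $C$ an effect of $X_{bt}$, and causal influence runs forward in time while the reconstruction contains no future variables, $C$ must sit at time $t$, i.e. $C=X_{ct}$ is contemporaneous to $X_{bt}$. If $W=X_{at}$ were itself contemporaneous this would produce the fully contemporaneous collider $X_{at}\rightarrow X_{ct}\leftarrow X_{bt}$, excluded by assumption; therefore $W=X_{a(t-\tau)}$ with $\tau>0$ is a lagged parent of the contemporaneous collider $X_{ct}$. This is exactly $\mathcal{F}^-(X_{bt})$, and the same argument shows $\mathcal{A}_t(X_{bt})$ admits no false contemporaneous adjacencies, yielding the decomposition $\hat{\mathcal{A}}(X_{bt})=\mathcal{P}^-(X_{bt})\cup\mathcal{F}^-(X_{bt})\cup\mathcal{A}_t(X_{bt})$.

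I expect the main obstacle to be the $d$-separation bookkeeping in the last step: making the ``all interior nodes are colliders, at most one of them'' argument airtight (the descendant-of-collider case is immediate here, since each collider itself lies in $S$), and cleanly combining the temporal ordering with the absence of future variables to pin the collider to time $t$, so that the no-fully-contemporaneous-collider assumption is precisely what removes the contemporaneous false positives while still permitting the lagged ones collected in $\mathcal{F}^-(X_{bt})$.
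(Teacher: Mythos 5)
Your proposal is correct and follows essentially the same route as the paper's proof: it translates $GP_{SIC}$ adjacencies into conditional (in)dependence via Lemma~\ref{lemma:conditional_independence}, uses adjacency faithfulness to guarantee recovery of $\mathcal{P}^-(X_{bt})$ and $\mathcal{A}_t(X_{bt})$, and then runs a $d$-separation argument with the maximal conditioning set $S$ to show the only spurious unblocked paths are single-collider paths $W \rightarrow X_{ct} \leftarrow X_{bt}$ with the collider pinned to time $t$, so that the no-fully-contemporaneous-collider assumption leaves exactly the lagged false positives $\mathcal{F}^-(X_{bt})$. Your packaging of the key step (``every interior node lies in $S$, hence must be a collider; no two colliders can be adjacent on a path; hence at most one interior node'') is in fact tidier than the paper's case enumeration of mediator, confounder, child-collider and non-child-descendant paths, and the only quibble is attributional: deducing a $d$-connecting path from observed dependence uses the contrapositive of the Causal Markov condition rather than faithfulness (a conflation the paper's own proof also makes).
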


\begin{proof}
   Under Lemma \ref{lemma:conditional_independence} and the Adjacency faithfulness assumption, the adjacencies $X_{a(t - \tau)} -  X_{bt}$ for $
   \tau \geq 0$ in $G'$ returned by $GP_{SIC}$ imply the corresponding conditional dependence relations $X_{a(t - \tau)} \nindep X_{bt} | S$, where $S = X \backslash X_{bt}$, such that $S$ is a maximal conditioning set. We show for an arbitrary pair of variables $X_{a(t - \tau)} , X_{bt} \in X$ with $\tau \geq 0$:
        \begin{enumerate}[(a)]
            \item $X_{a(t - \tau)} - X_{bt} \notin G' \implies X_{a(t - \tau)} \notin \mathcal{P}^-(X_{bt}),\mathcal{F}^-(X_{bt}),\mathcal{A}_t(X_{bt})$
        \end{enumerate}
        Assume we observe the absence of a lagged edge in $G'$, i.e. $\tau > 0$ and $X_{a(t - \tau)} \not
        \rightarrow X_{bt}$, resulting from Granger non-causality/the conditional independence relation $X_{a(t - \tau)} \ci X_{bt} | S$. This implies $X_{a(t - \tau)} \notin \mathcal{P}^-(X_{bt})$ by Causal Markov condition. Furthermore, due to the maximal conditioning set $S$, we may condition on contemporaneous colliders, i.e. $X_{a(t - \tau)} \rightarrow X_{ct} \leftarrow X_{bt}$, where it may also be $c=a$. This opens the pathway from $X_{a(t - \tau)} \rightarrow X_{bt}$, such that $X_{a(t - \tau)} \in \mathcal{F}^-(X_{bt})$, corresponding to a spurious dependency $X_{a(t - \tau)} \nindep X_{bt} | S$. However, since we observe the contrary conditional independence relation, this implies that $X_{a(t - \tau)} \notin \mathcal{F}^-(X_{bt})$. 

         The remaining case to consider is the contemporaneous non-adjacencies. If $GP_{SIC}$ returns a Granger non-causal relationship, this implies the conditional independence relation $X_{a(t - \tau)} \ci X_{bt} |S$, and by adjacency faithfulness
         the absence of a contemporaneous edge in $G'$, i.e. $\tau = 0$ and $X_{a(t - \tau)} \notminus X_{bt}$. Thus, the lack of a contemporaneous adjacency returned in $G'$ indicates that $X_{a(t - \tau)}$ is truly conditionally independent/not adjacent to $X_{bt}$, $X_{a(t - \tau)} \notin \mathcal{A}_t(X_{bt})$.
        
        \begin{enumerate}[(b)]
        \item $X_{a(t - \tau)} - X_{bt} \in G' \implies X_{a(t - \tau)} \in \mathcal{P}^-(X_{bt}),\mathcal{F}^-(X_{bt}),\mathcal{A}_t(X_{bt})$
        
        \end{enumerate}
        By (a), the presence of an edge $X_{a(t - \tau)} - X_{bt} \in G'$ implies that $X_{a(t - \tau)}$ is in a superset of $\mathcal{P}^-(X_{bt}),$ $\mathcal{F}^-(X_{bt}),\mathcal{A}_t(X_{bt})$. 
        Under the assumptions of standard faithfulness and no fully instantaneous colliders in the SCM, we can proceed with the rules of \emph{d}-separation. We begin simply with direct dependencies, i.e. $X_{a(t - \tau)} - X_{bt}$, for $\tau \geq 0$, are unblocked. This accounts for $X_{a(t - \tau)} \in \mathcal{P}^-(X_{bt})$ for $\tau > 0$. Similarly, for $\tau = 0$, a true parent or child relationship is unblocked, s.t. $X_{a(t - \tau)} \in \mathcal{A}_t(X_{bt})$.

        Then, the crux of this first step is that conditioning on a collider (or descendant of a collider) unblocks the path between parent nodes. 
        When conditioning on all contemporaneous variables, we may condition on a child of $X_{bt}$, $\mathcal{D}_t(X_{bt})$, which is a collider between a lagged node and another contemporaneous node, $X_{a(t - \tau)} \rightarrow \mathcal{D}_t(X_{bt}) \leftarrow X_{bt}$, opening the path $X_{a(t - \tau)} \rightarrow X_{bt}$. This is exactly the case where $X_{a(t - \tau)} \in \mathcal{F}^-(X_{bt})$.
        Conditioning on a descendant of a collider also opens the path $X_{a(t - \tau)} \rightarrow X_{bt}$, but as we condition on all nodes except $X_{bt}$, this effect is already considered by conditioning on the collider itself. 
        
        Now, to show that no other adjacencies are estimated, we consider that if a path $p$ contains a mediator or confounder as an internal node on the path, then this path is blocked, even if a collider is also conditioned on. Time orientation restricts the structure of the causal graph such that for $\tau > 0$, any path $X_{a(t - \tau)} \rightarrow \dots \rightarrow \mathcal{D}_t(X_{bt}) \leftarrow X_{bt}$ is blocked by conditioning on the internal variables preceding $\mathcal{D}_t(X_{bt})$. The internal nodes are guaranteed mediators as all lagged links are oriented forward in time. The other possibility is a path such as
        $X_{a(t - \tau)} \rightarrow \mathcal{D}_t(X_{bt}) \leftarrow \dots \leftrightarrow X_{bt}$, where $\mathcal{D}_t(X_{bt})$ is a collider that is a non-child descendant of $X_{bt}$. The path $X_{a(t - \tau)} \rightarrow X_{bt}$ is still blocked by the internal variables between $X_{bt}$ and $\mathcal{D}_t(X_{bt})$.
        As such, only true lagged parents or unmediated collider pathways are unblocked, corresponding to a conditional dependency $X_{a(t - \tau)} \nindep X_{bt} |S$. 

        Finally, we consider contemporaneous conditions. As we have assumed the absence of colliders within an instantaneous time point, no false pathways may be unblocked by $X_{at} \rightarrow \mathcal{D}_t(X_{bt}) \leftarrow X_{bt}$. This ensures that the only unblocked paths are direct adjacencies, as non-adjacent contemporaneous variables are mediated as a result of the maximal conditioning set, and there is no pathway backward through lagged conditions due to time-orientation. 
        
\end{proof}

\paragraph{Proof of Theorem 1}

\begin{proof}

By Lemma \ref{lemma1}, we have shown that the adjacencies recovered in graph $G'$ are a superset of the true adjacencies, where the extra conditions are at most $\mathcal{F}^-(X_{bt})$. The contemporaneous adjacencies were shown to be consistently estimated in $G'$, and so these are retained in the final graph $G'''$.

The extra conditions are filtered by fitting a second graph, $G''$, which is estimated by conditioning only on the lagged parents identified in $G'$, i.e. $G''$ retests all lagged adjacencies $X_{a(t - \tau)} \rightarrow X_{bt}$ by assessing $X_{a(t - \tau)} \ci X_{bt}| S$, where $S = \mathcal{P}^-(X_{bt}), \mathcal{F}^-(X_{bt})$. We show that any lagged adjacency retained in $G''$ is in the true parent set $\mathcal{P}^-(X_{bt})$, s.t. the final graph $G'''$ is asymptotically consistent with the true adjacencies, $\hat{\mathcal{A}}(X_{bt}) = \mathcal{P}^-(X_{bt}), \mathcal{A}_t(X_{bt})$.
We show for an arbitrary pair of variables $X_{a(t - \tau)}, X_{bt} \in X$ with $\tau > 0$:

\begin{enumerate}[(a)]
    \item  $X_{a(t - \tau)} - X_{bt} \notin G'' \implies X_{a(t - \tau)} \notin \mathcal{P}^-(X_{bt})$
\end{enumerate}

This follows similarly from Lemma \ref{lemma1} (a), in that the  conditional independence relation $X_{a(t - \tau)} \ci X_{bt} |S$ implies the absence of a lagged adjacency in $G''$. By the Causal Markov condition, $X_{a(t - \tau)} \notin \mathcal{P}^-(X_{bt})$.

\begin{enumerate}[(b)]
    \item  $X_{a(t - \tau)} - X_{bt} \in G'' \implies X_{a(t - \tau)} \in \mathcal{P}^-(X_{bt})$
\end{enumerate}

By (a), the edges in $G''$ are a superset of $\mathcal{P}^-(X_{bt})$. As we retest only lagged edges from $G'$, the only possible extra conditions are $\mathcal{F}^-(X_{bt})$. By Lemma \ref{lemma1}, a condition in $\mathcal{F}^-(X_{bt})$ is only opened due to conditioning on a contemporaneous collider. Thus, by removing the contemporaneous conditions from $S$, we block these false dependencies. It follows that any edge left in $G''$ must be in the set of true parents $\mathcal{P}^-(X_{bt})$.

The contemporaneous adjacencies from $G'$ and the lagged parents from $G''$ yield an estimated graph $G'''$ that is consistent with the adjacencies in the underlying SCM. 
\end{proof}

\paragraph{Proof of Theorem 2}

\begin{proof}
    Due to time-ordering constraints, any lagged edge is automatically oriented forward in time, i.e. $X_{a(t - \tau)} \rightarrow X_{bt}$. Given the assumption of stationarity, contemporaneous orientations are assumed to be consistent across $t = 1, \dots,n$.

    The first orientation step orients colliders based on false edges that are identified in graph $G'$. Lemma \ref{lemma1} shows that the lagged parents identified in graph $G'$ are at most the true parents $\mathcal{P}^-(X_{bt})$ or the false parents $\mathcal{F}^-(X_{bt})$ that are a result of conditioning on a collider, i.e. $X_{a(t - \tau)} \rightarrow X_{ct} \leftarrow X_{bt}$ for $\tau > 0$ and it may be $c=a$. These false parents are filtered in $G''$, s.t. only the true parents remain in $G'''$. Under consistent identification of the true parents \emph{and} false parents induced by the aforementioned collider structure, for an unshielded lagged triple $X_{a(t - \tau)} \rightarrow X_{ct} - X_{bt}$ where $X_{a(t - \tau)} \rightarrow X_{bt} \in G'$ and $X_{a(t - \tau)} \rightarrow X_{bt} \notin G'''$, we can identify $X_{ct}$ as a collider, and orient $X_{ct} \leftarrow X_{bt}$. Under our assumptions and Lemma \ref{lemma1}, all false parents due to colliders appear in $G'$ and no additional false parents appear. Thus, any unshielded lagged triple $X_{a(t - \tau)} \rightarrow X_{ct} - X_{bt}$ in $G'''$, where $X_{a(t - \tau)} \rightarrow X_{bt} \notin G'$ and $X_{a(t - \tau)} \rightarrow X_{bt} \notin G'''$, implies $X_{ct}$ is a non-collider, such that we can orient $X_{ct} \rightarrow X_{bt}$.
    
    We note one case where orientation is not identifiable given this schema. In the case described by $X_{a(t-\tau)} \rightarrow X_{ct} - X_{bt}$ and $X_{a(t-\tau)} \rightarrow X_{dt} - X_{bt}$, where $c \neq d$, a false edge from $X_{a(t-\tau)} \rightarrow X_{bt}$ has ambiguous source. This is because it may be that either one or both lagged triples are colliders. Hence, we exclude these lagged triples from being used for orientation.

    Finally, in the assumed SCM, there are no contemporaneous colliders. This allows us to orient any partially oriented fully contemporaneous triples $X_{at} \rightarrow X_{ct} - X_{bt} $ as $X_{ct} \rightarrow X_{bt}$ to avoid colliders. If the assumption of no contemporaneous colliders is relaxed by adding more tests to filter out false pathways, this orientation rule can no longer be applied.

    The arrows oriented by the $GP_{SIC}$ algorithm are asymptotically correct, but not necessarily complete. As we use a maximal conditioning set $S$, we cannot orient based on conditioning sets, and so lose power in some cases. For example, much of the power of this method comes from orienting contemporaneous edges based on lagged triples, including the use of autocorrelated edges. If there are not many lagged edges or strong, identifiable autocorrelation, we expect this method to lose power. 
    
\end{proof}

\subsection*{Supplementary Results Tables}

Tables \ref{f1_sim250} and \ref{f1_sim500} show the average F1 score and standard deviation across 100 MC runs for each simulated experiment and for $n = \{250, 500\}$, coinciding with Figure \ref{fig:boxplots_lagged}. In Figure \ref{fig:baycomp}, we report the comparison between the algorithms performed by using the Bayesian Wilcoxon signed-rank test for the case $n_t=250$.

\begin{table}
  \caption{\label{f1_sim250} Average F1 score and standard deviation (reported in parentheses) across 100 MC simulations for the five models: $GP_{SIC}$, lsNGC, KGC, KPCR, and PCMCI. The name of the time series system and true lag order $m$ are listed. Each simulation was tested on $n+m=250$ samples.}
\centering
  \begin{tabular}{|c|c|c|c|c|c|c|}
    \hline
    \bf{Time Series System} &{$\bf{GP}_{SIC}$} & \bf{lsNGC} & \bf{KGC} & \bf{KPCR} & \bf{PCMCI} \\
    \hline
     1-way Logistic & 0.98 (0.07) & 0.99 (0.07) & 0.99 (0.05) & 0.98 (0.09) & 0.24 (0.42)\\
    \hline
    2-way Logistic & 0.69 (0.09) & 0.71 (0.11) & 0.68 (0.07) & 0.70 (0.10)  & 0.41 (0.33)\\
    \hline
     Stochastic Linear & 0.93 (0.14) & 0.95 (0.12) & 0.95 (0.12) & 0.94 (0.13) & 0.97 (0.10)\\
    \hline
     Stochastic Nonlinear & 0.96 (0.14) & 0.81 (0.39) &  0.99 (0.05) & 1.00 (0.03) & 0.64 (0.27)\\
    \hline
    3 Fan-in & {0.99 (0.05)} & 0.94 (0.12) & 0.96 (0.10) & 0.96 (0.11) & 0.94 (0.11) \\
    \hline
    3 Fan-out & 0.98 (0.07) & 0.95 (0.11) & 0.98 (0.07) & 0.98 (0.07) & 0.85 (0.15)\\
    \hline
    Confounder & 0.97 (0.07) & 0.94 (0.10) & 0.97 (0.08) & 0.96 (0.09) & 0.95 (0.09) \\
    \hline
    Mediator & 0.99 (0.05) & 0.98 (0.07) & 0.99 (0.04) & 0.98 (0.06) & 0.96 (0.08) \\
    \hline
    Redundant Collider & {0.99 (0.04)} & 0.14 (0.28) & 0.02 (0.11) & 0.03 (0.13) & 0.76 (0.27) \\
    \hline
    Synergistic Collider & 0.97 (0.07) & 0.70 (0.31) & 0.98 (0.05) & 0.99 (0.03) & 0.92 (0.12) \\
    \hline
    Sync. (1-way intermediate) & {0.96 (0.14)} & 0.93 (0.18) & 0.90 (0.21) & 0.90 (0.22) & 0.48 (0.40) \\
    \hline
    Sync. (1-way strong) & 0.67 (0.46) & 0.24 (0.31) & 0.24 (0.34) & 0.26 (0.34) & 0.62 (0.21) \\
    \hline
    Sync. (2-way strong) & 0.96 (0.09) & 0.92 (0.14) & 0.92 (0.15) & 0.89 (0.19) & 0.82 (0.21)\\
    \hline
     Moran Effect & {0.79 (0.05)} & 0.55 (0.11) & 0.57 (0.08) &  0.69 (0.10) & 0.65 (0.07)\\
    \hline
     5 Linear & {0.95 (0.06)} & 0.68 (0.09) & 0.86 (0.10) & 0.74 (0.12) & 0.84 (0.08) \\
    \hline
    5 Nonlinear & 0.97 (0.05) & 0.52 (0.05) & 0.96 (0.06) & {0.99 (0.03)} & 0.77 (0.12) \\
    \hline
     8 Nonlinear & {0.90 (0.13)} & 0.63 (0.14) & 0.84 (0.21) & 0.81 (0.23) & 0.78 (0.11)\\
    \hline
     20 Nonlinear & {0.73 (0.06)} & 0.50 (0.04) & 0.47 (0.07) & 0.65 (0.06) & 0.69 (0.05)\\
    \hline
    30 Nonlinear  & 0.68 (0.06) & 0.32 (0.05) & 0.65 (0.14) & 0.68 (0.08) & 0.63 (0.03) \\
    \hline
    \end{tabular}
\end{table}

\begin{table}
  \caption{\label{f1_sim500} Average F1 score and standard deviation (reported in parentheses) across 100 MC simulations for the five models: $GP_{SIC}$, lsNGC, KGC, KPCR, and PCMCI. The name of the time series system and true lag order $m$ are listed. Each simulation was tested on $n+m=500$ samples.}
\centering
  \begin{tabular}{|c|c|c|c|c|c|}
    \hline
    \bf{Time Series System} & {$\bf{GP}_{SIC}$} & \bf{lsNGC} & \bf{KGC} & \bf{KPCR} & \bf{PCMCI} \\
    \hline
    1-way Logistic & 1.00 (0.00) & 0.99 (0.07) & 1.00 (0.03) & 1.00 (0.03) & 0.27 (0.44) \\
    \hline
    2-way Logistic & 0.67 (0.00) & {0.77 (0.15)} & 0.74 (0.14) & 0.72 (0.12) & 0.51 (0.32)\\
    \hline
    Stochastic Linear & 0.97 (0.10) & 0.97 (0.09) & 1.00 (0.03) & 0.99 (0.05) & 0.99 (0.03) \\
    \hline
     Stochastic Nonlinear & 0.95 (0.17) & 0.80 (0.40) & 0.99 (0.05) & 1.00 (0.03) & 0.68 (0.09)\\
    \hline
    3 Fan-in & {1.00 (0.00)} & 0.84 (0.13) & 0.71 (0.11) & 0.95 (0.09) & 0.97 (0.08) \\
    \hline
    3 Fan-out & {1.00 (0.00)} & 0.66 (0.12) & 0.61 (0.06) & 0.97 (0.07) & 0.73 (0.10) \\
    \hline
    Confounder & 0.99 (0.04) & 0.97 (0.08) &  0.99 (0.03) & 1.00 (0.03)  & 0.98 (0.04)\\
    \hline
    Mediator & 0.99 (0.03) & 0.97 (0.08) & 0.99 (0.03) & 0.99 (0.04) & 0.96 (0.08) \\
    \hline
    Redundant Collider & {0.99 (0.03)} & 0.41 (0.34) & 0.00 (0.00) & 0.01 (0.09) & 0.76 (0.27)\\
    \hline
    Synergistic Collider & 0.98 (0.06) & 0.70 (0.36) & 0.98 (0.06) & {1.00 (0.02)} & 0.95 (0.09)\\
    \hline
    Sync. (1-way intermediate) & {1.00 (0.00)} & 0.84 (0.19) & 0.66 (0.22) & 0.88 (0.18) & 0.16 (0.19) \\
    \hline
    Sync. (1-way strong) & 0.55 (0.50) & 0.31 (0.04) & 0.44 (0.10) & 0.62 (0.10) & 0.36 (0.06) \\
    \hline
    Sync. (2-way strong) & 1.00 (0.00) & 0.69 (0.13) & 0.52 (0.03) & 0.93 (0.14) & 0.78 (0.13) \\
    \hline
     Moran Effect & 0.82 (0.04) & 0.58 (0.09) & 0.74 (0.08) & 0.80 (0.08) & 0.68 (0.10) \\
    \hline
    5 Linear & 0.95 (0.06) & 0.59 (0.06) & {0.98 (0.04)} & 0.96 (0.07) & 0.82 (0.09)\\
    \hline
    5 Nonlinear & 0.97 (0.05) & 0.48 (0.04) & 0.99 (0.03) & 1.00 (0.02) & 0.82 (0.09) \\
    \hline
     8 Nonlinear & 1.00 (0.00) & 0.62 (0.09) & 0.99 (0.02) & 1.00 (0.01) & 0.81 (0.08)  \\
    \hline
    20 Nonlinear & 0.75 (0.08) & 0.41 (0.03) & 0.86 (0.04) & 0.86 (0.04) & 0.72 (0.04) \\
    \hline
    30 Nonlinear & 0.74 (0.06) & 0.24 (0.03) & {0.95 (0.02)} & 0.82 (0.04) & 0.65 (0.03) \\
    \hline
    \end{tabular}
\end{table}

\begin{figure}
    \includegraphics[width=0.33\linewidth,trim={2cm 1cm 2cm 2cm},clip]{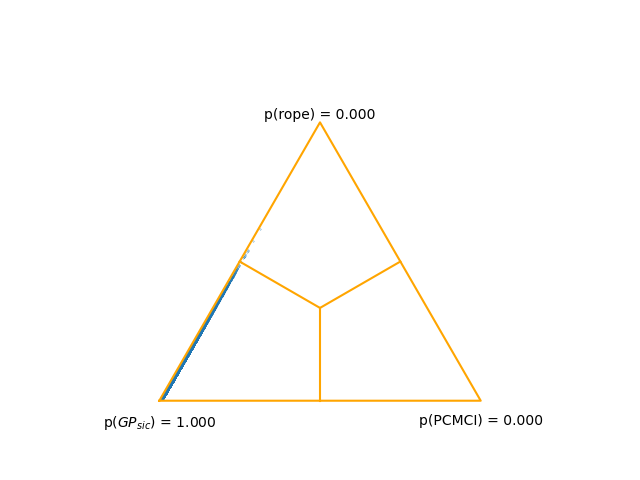}                   \includegraphics[width=0.33\linewidth,trim={2cm 1cm 2cm 2cm},clip]{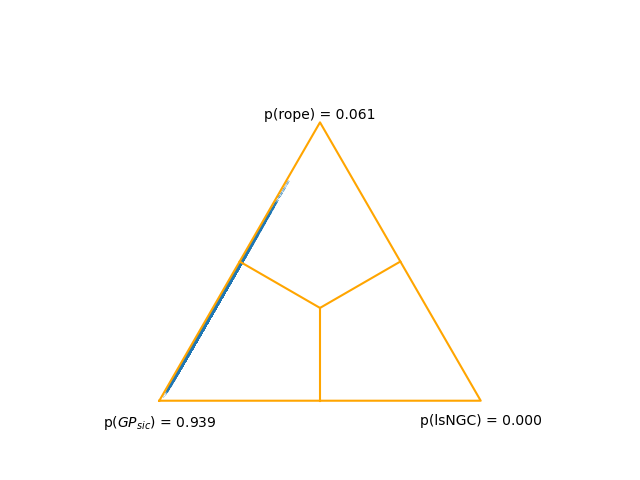}    \includegraphics[width=0.33\linewidth,trim={2cm 1cm 2cm 2cm},clip]{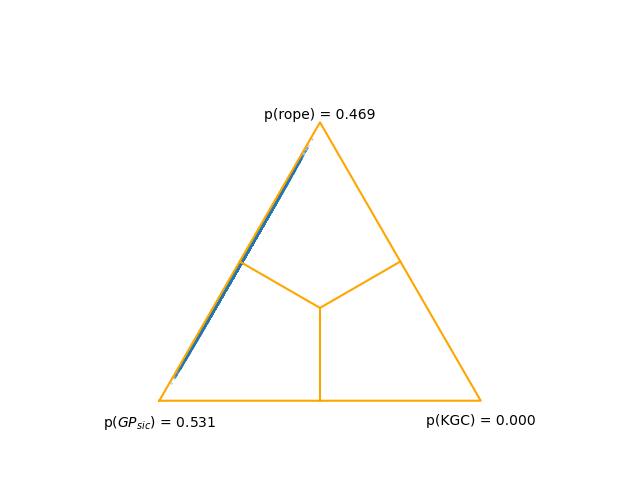}\\
        \includegraphics[width=0.33\linewidth,trim={2cm 1cm 2cm 2cm},clip]{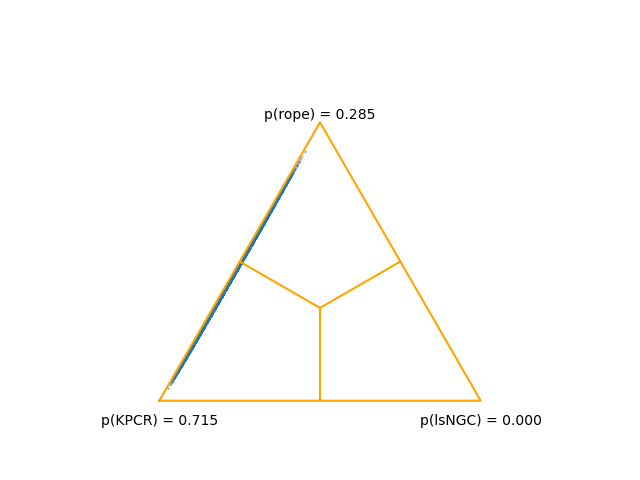}
            \includegraphics[width=0.33\linewidth,trim={2cm 1cm 2cm 2cm},clip]{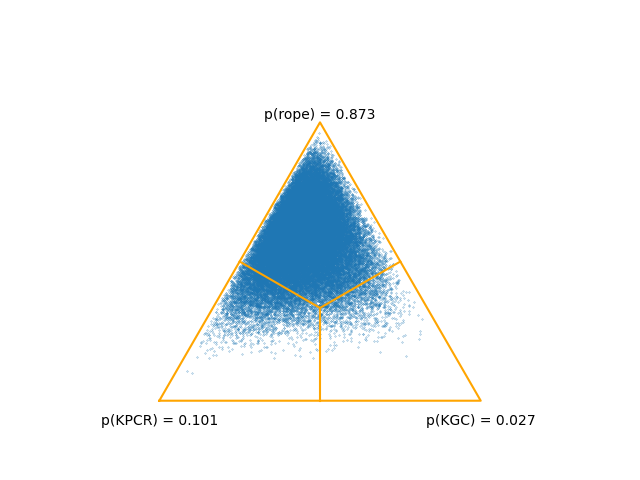}

    \caption{$n = 250$ -- For each pair of causal discovery methods $(C_1, C_2)$ (e.g., $C_1 = GP_{SIC}$ and $C_2 = PCMCI$), we report the posterior probability vector
that $C_1 > C_2$, $C_1 \equiv C_2$ and $C_1 < C_2$ denoted respectively as $[p(C_1),\; p(rope),\; p(C_2)]$,
obtained via the Bayesian Wilcoxon signed-rank test. The cloud of points in the figure corresponds to samples from this posterior distribution: each point is a probability vector
$[p(C_1),\; p(rope),\; p(C_2)]$
which we plot in the probability simplex.}
    \label{fig:baycomp}
\end{figure}

\clearpage

\subsection*{Contemporaneous Algorithm Experiments}

For contemporaneous adjacency and orientation identification, the PCMCI$+$ performs better across the experiments when autocorrelation strength $a$ is lower, while the $GP_{SIC}$ method performs better for higher autocorrelation. As a GC method the $GP_{SIC}$ contemporaneous algorithm relies on a maximal conditioning set, and as such will have lower power in comparison to methods that use reduced conditioning sets. For contemporaneous orientation, this is especially true in cases where there is not strong, identifiable autocorrelation, as the $GP_{SIC}$ method relies on orienting contemporaneous edges based on correctly identified lagged triples, which may include autocorrelated edges. When there is weak autocorrelation and autocorrelated edges go undetected, this limits the $GP_{SIC}$-based algorithm's ability to orient contemporaneous edges. 
However, the $GP_{SIC}$-based algorithm is fully defined within the GC framework, as it performs only $2n_t$ tests, compared to PCMCI$+$, which performs $O(n_t^2)$ tests.

\begin{figure}
    \centering
    \includegraphics[width=0.48\linewidth]{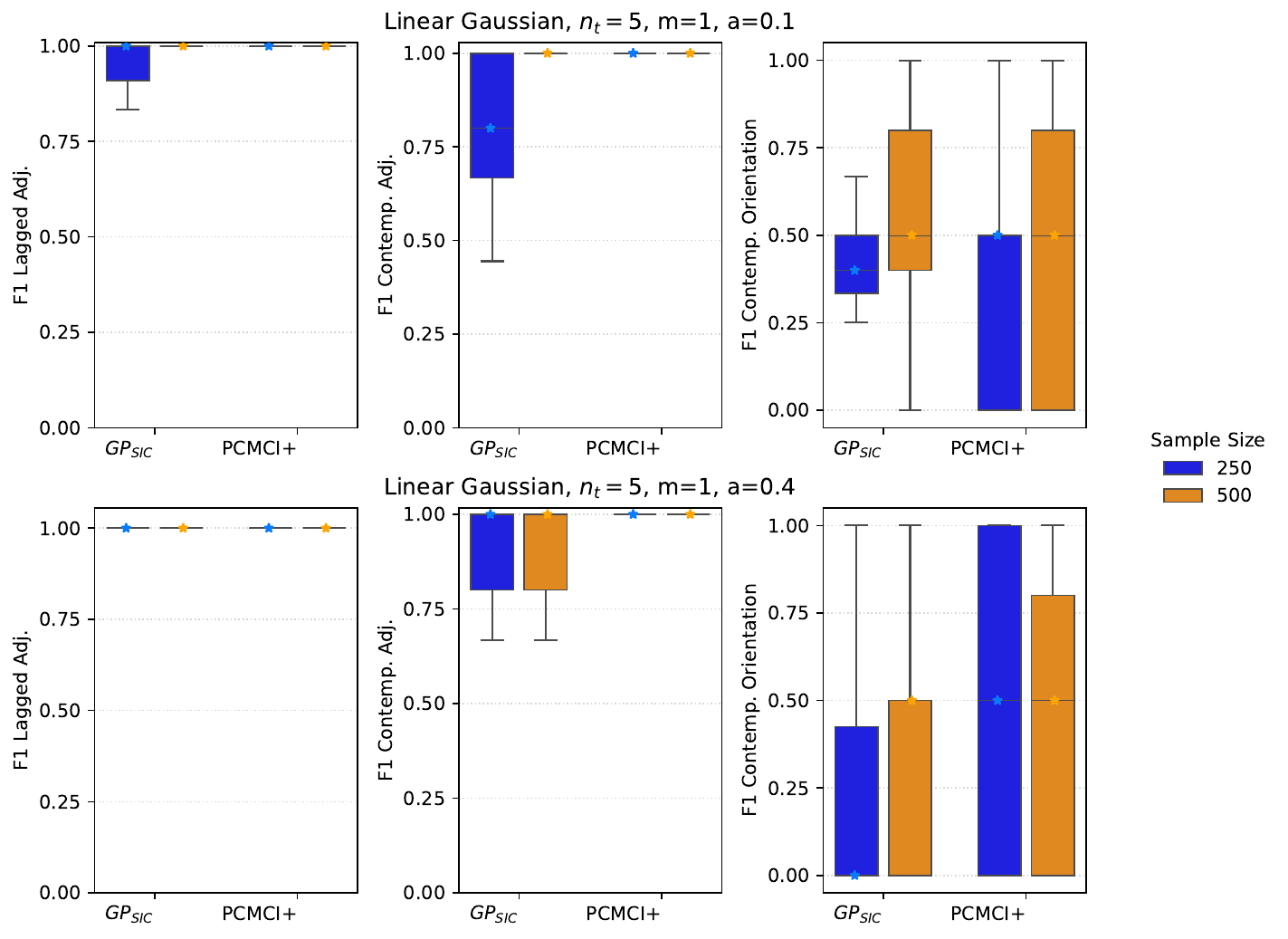}
    \includegraphics[width=0.48\linewidth]{Fig4a.pdf}
    \includegraphics[width=0.48\linewidth]{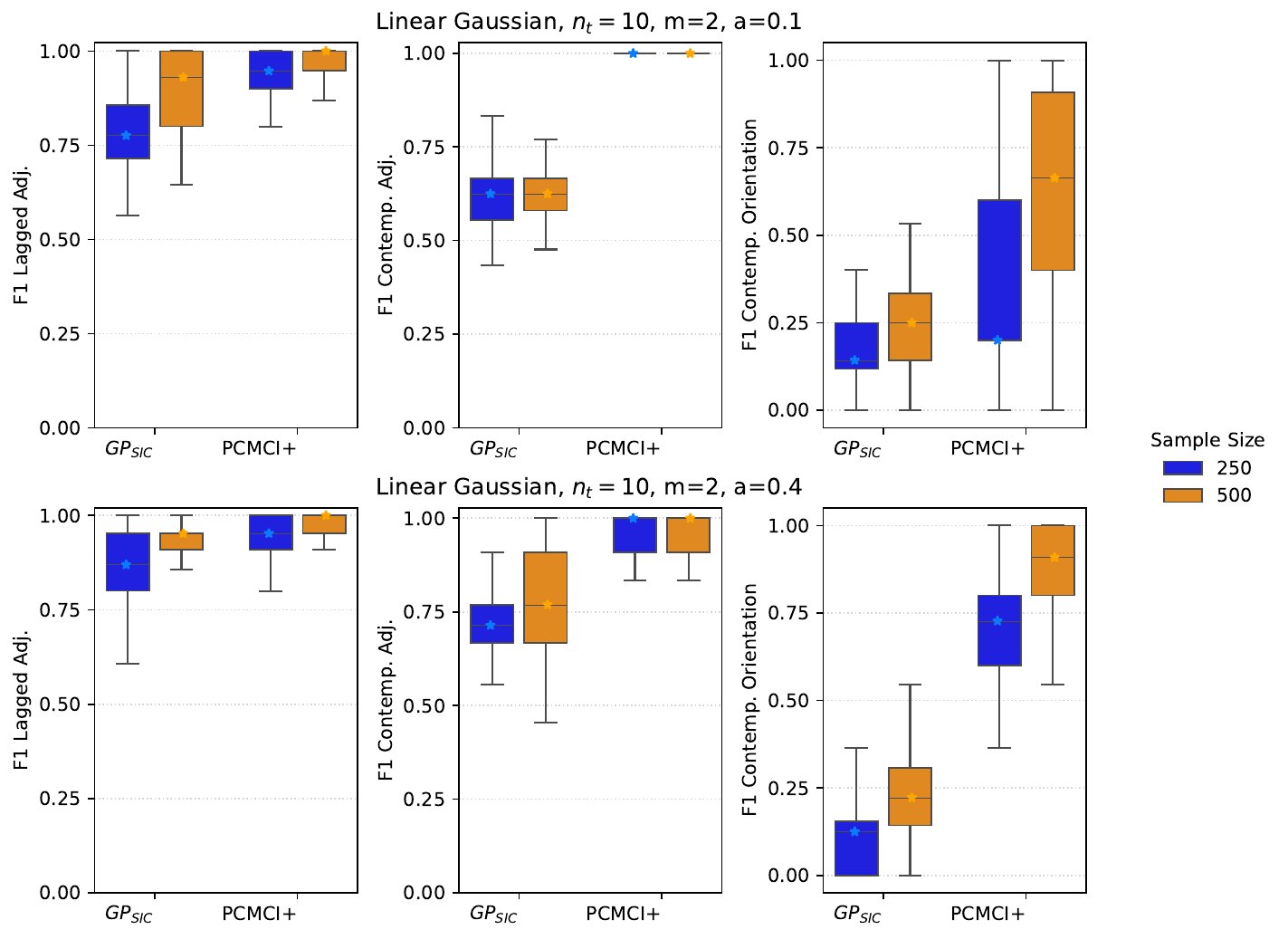}
    \includegraphics[width=0.48\linewidth]{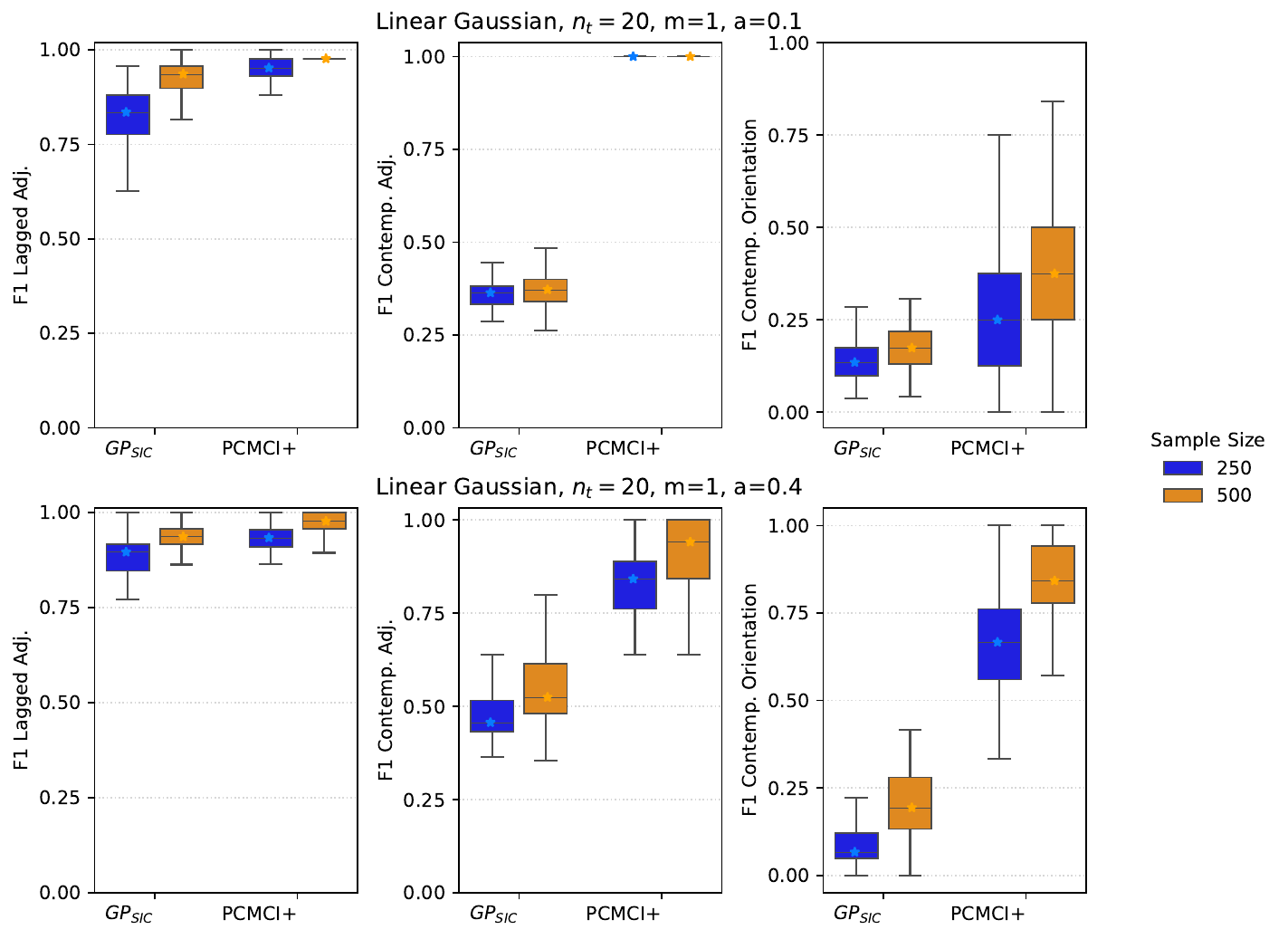}
    \caption{F1 scores for the contemporaneous linear, Gaussian noise experiments. }
    \label{fig:placeholder}
\end{figure}

\begin{figure}
    \centering
    \includegraphics[width=0.48\linewidth]{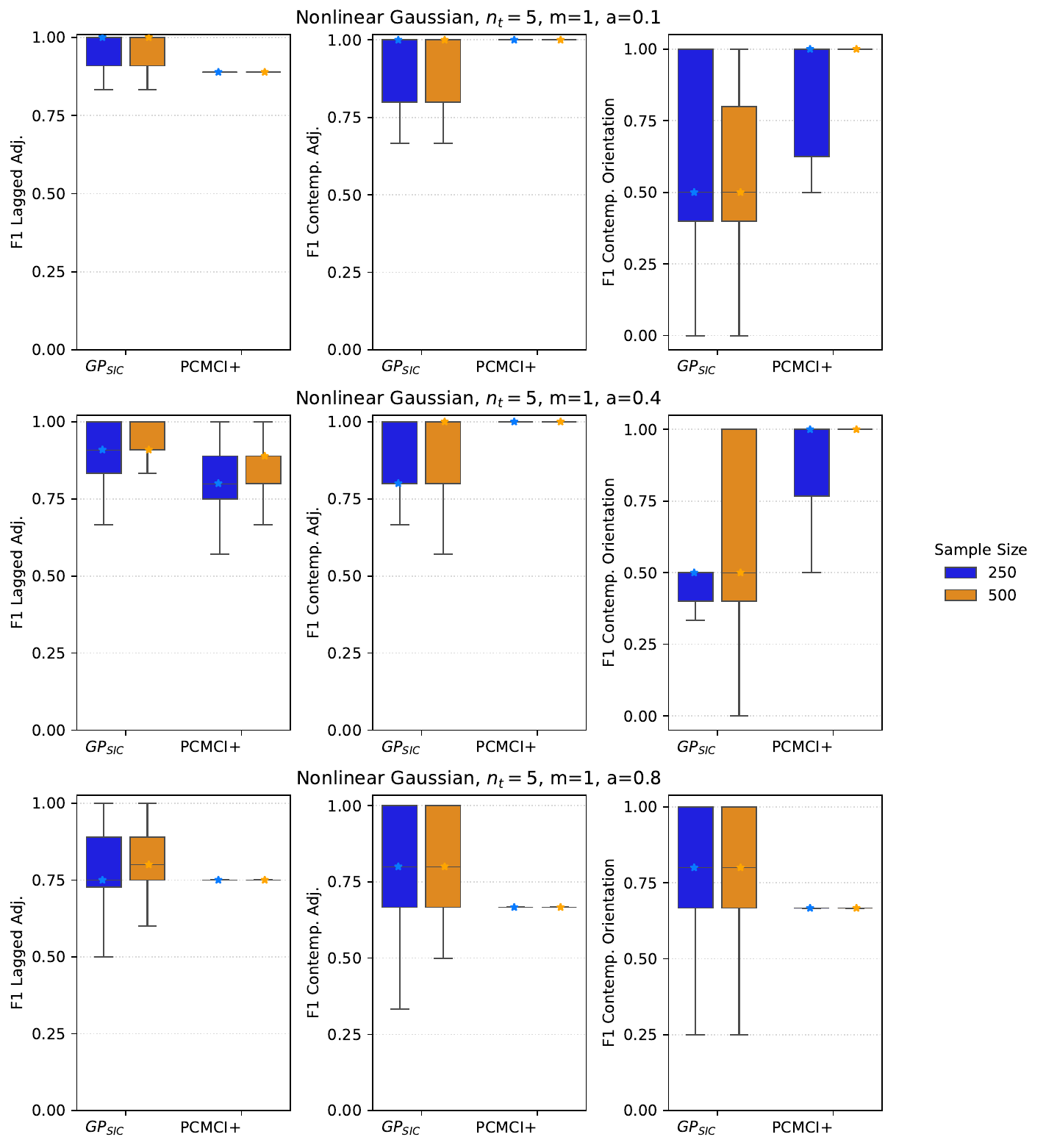}
    \includegraphics[width=0.48\linewidth]{Fig4b.pdf}
    \includegraphics[width=0.48\linewidth]{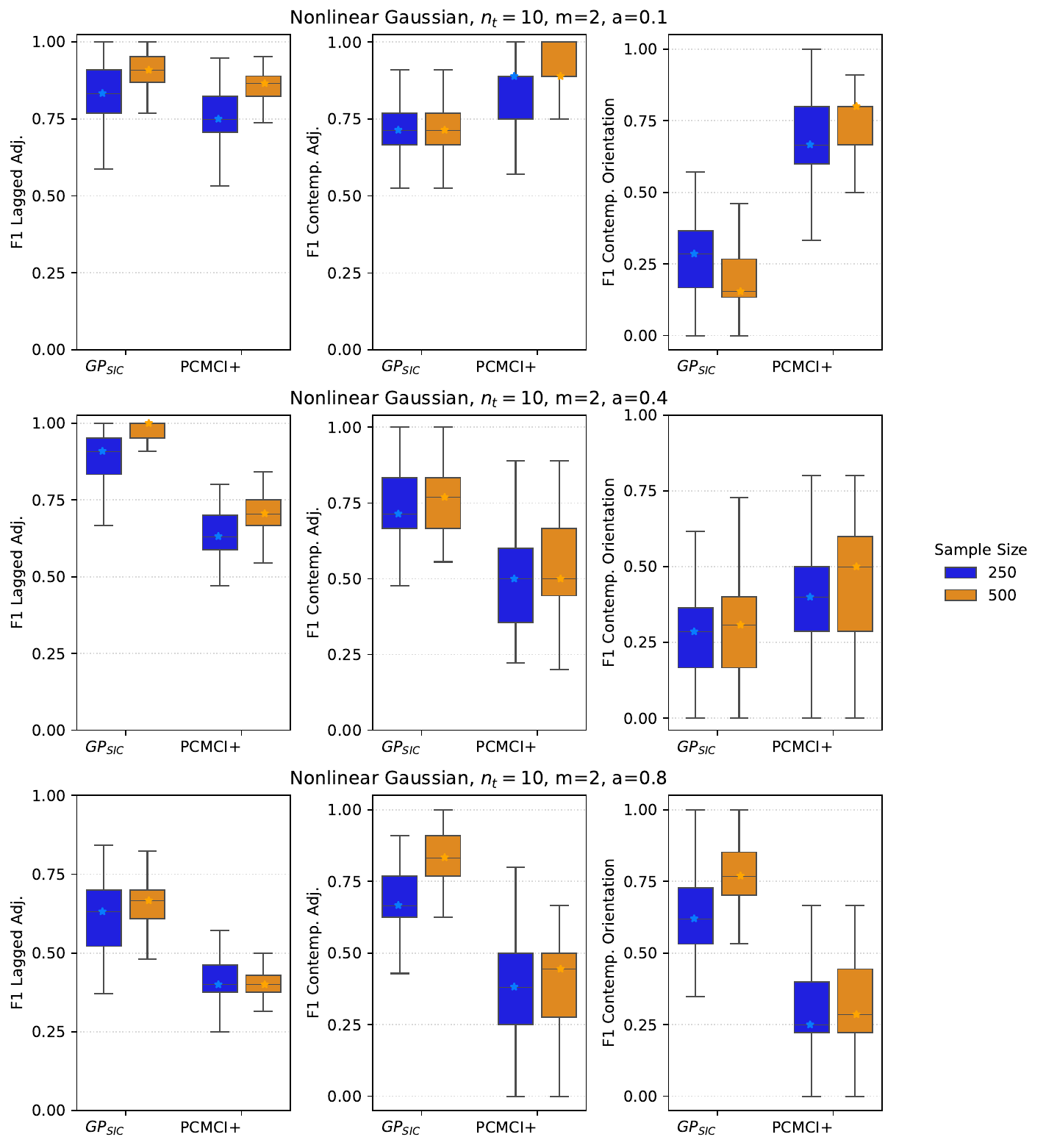}
    \includegraphics[width=0.48\linewidth]{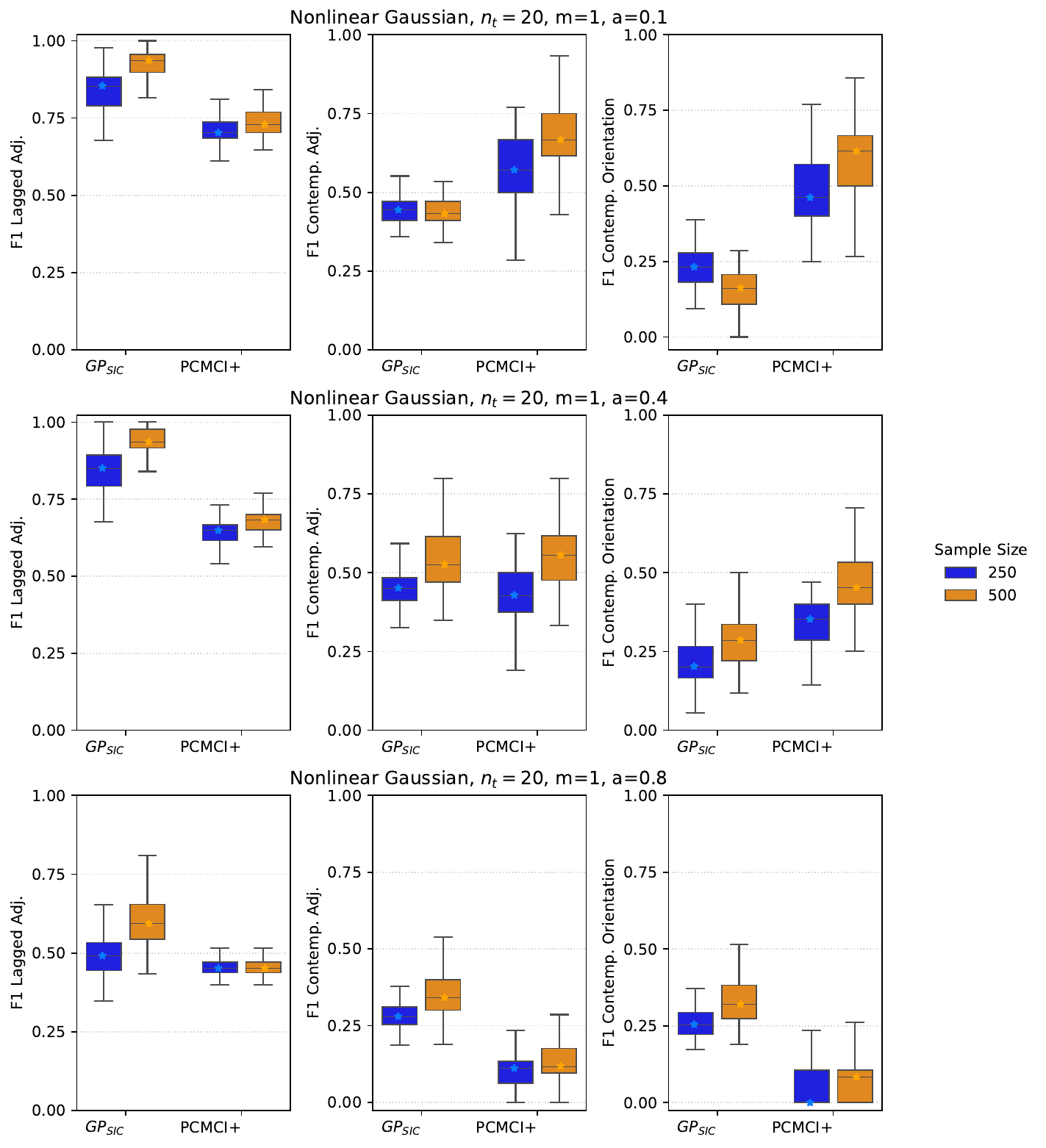}
    \caption{F1 scores for the contemporaneous nonlinear, Gaussian noise simulated experiments.}
    \label{fig:placeholder}
\end{figure}

\clearpage

\subsection*{pH Plant Application}
\begin{figure}[h]
    \centering
    \includegraphics[width=0.5\linewidth]{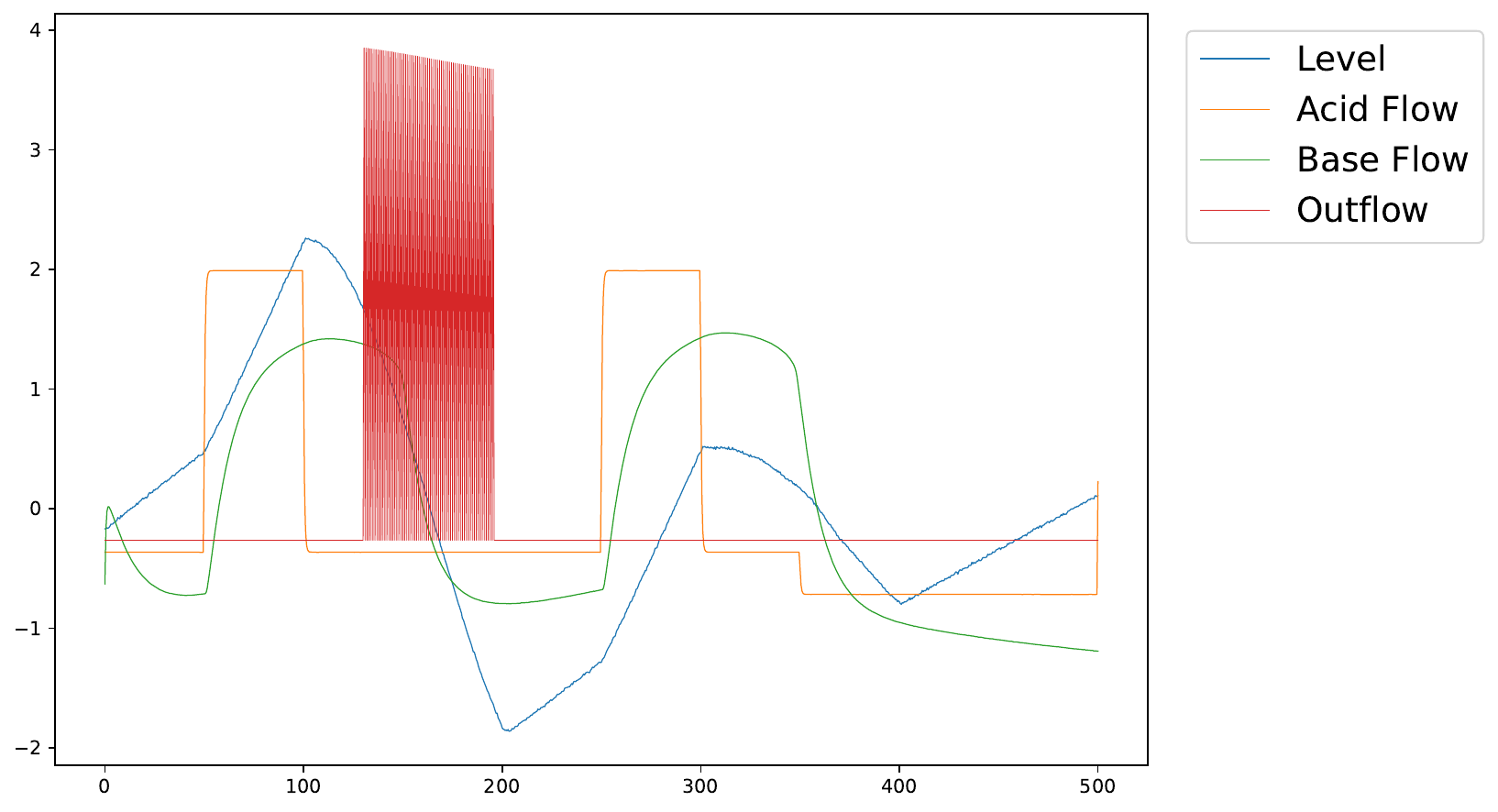}
    \includegraphics[width=0.5\linewidth]{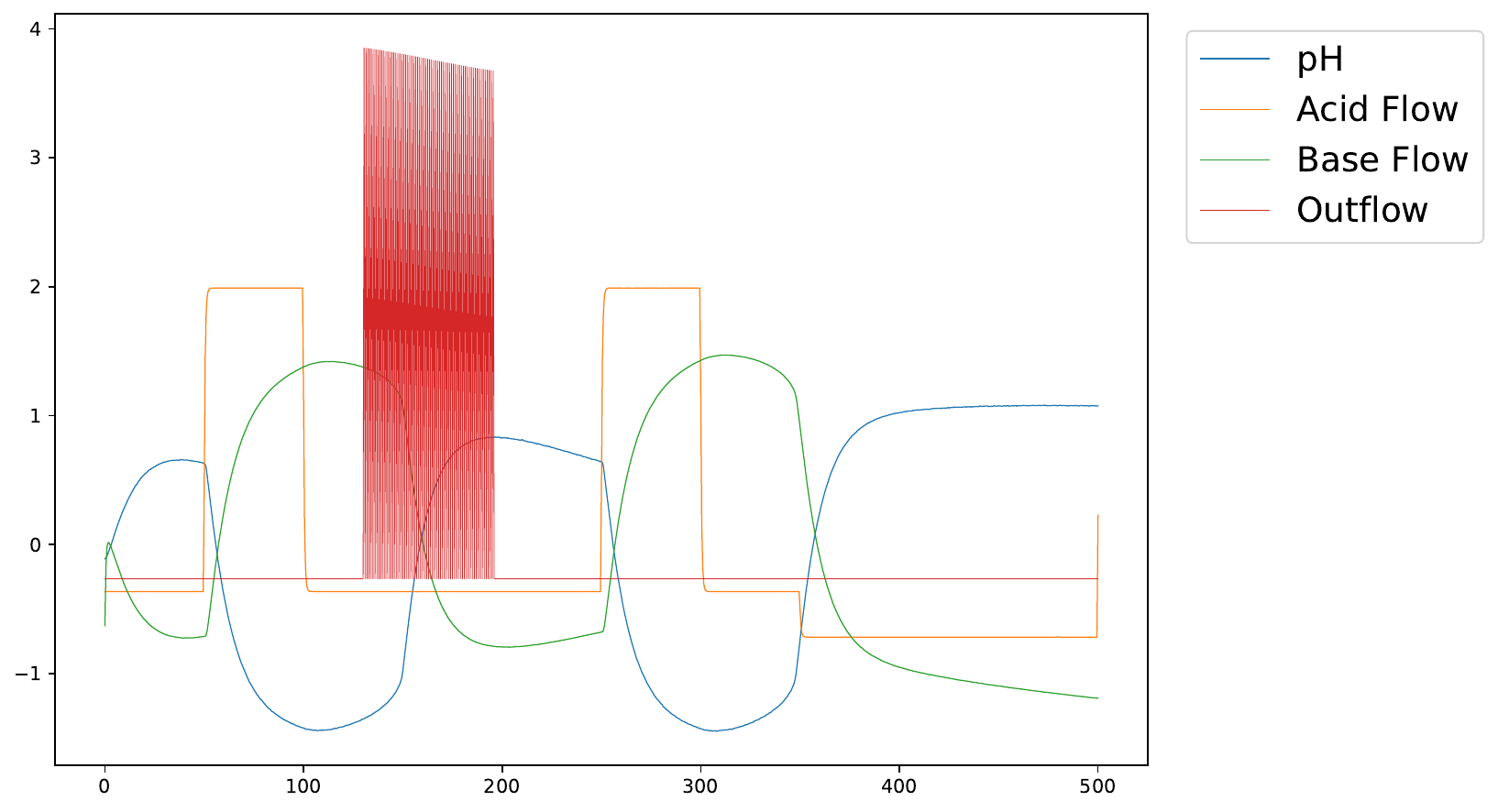}
    \caption{The normalised time series data for the level interactions (top) and pH interactions (bottom) for one MC run in the regulatory mode.}
    \label{fig:timeseries}
\end{figure}
\begin{figure}
    \centering
    \includegraphics[width=0.95\linewidth]{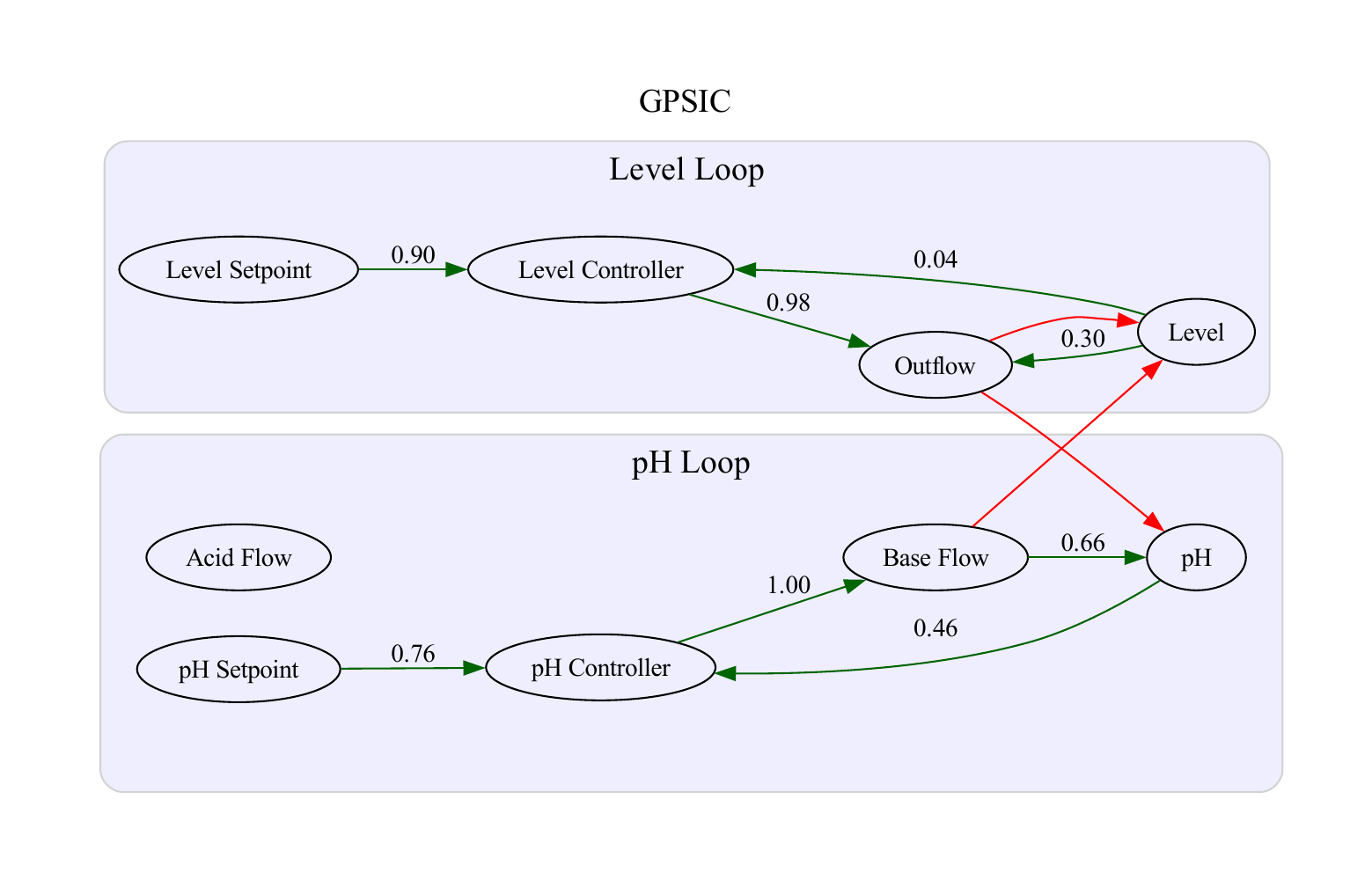}
   \includegraphics[width=0.95\linewidth]{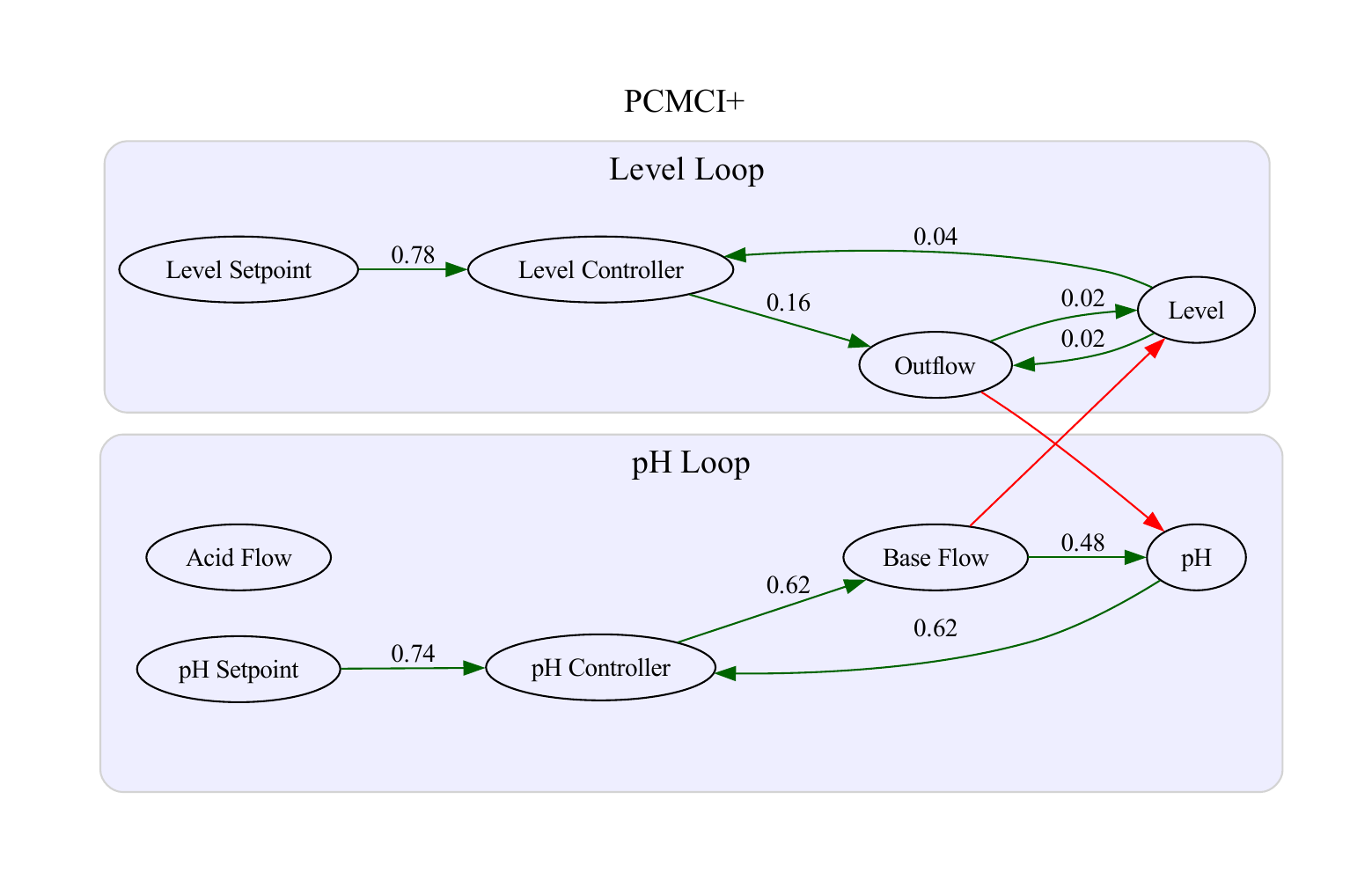}
    \caption{The causal diagram recovered from the $GP_{SIC}$-based algorithm and PCMCI$+$ for the pH neutralisation plant data in \emph{servo} mode. Green arrows correspond to true edges and are labeled with the rate of identified edges for 50 MC simulations. Red arrows correspond to false negatives. False positives were excluded from the diagrams for clarity.}
    \label{fig:phPlant_diagrams}
\end{figure}
\begin{figure}
    \centering
     \includegraphics[width=0.95\linewidth]{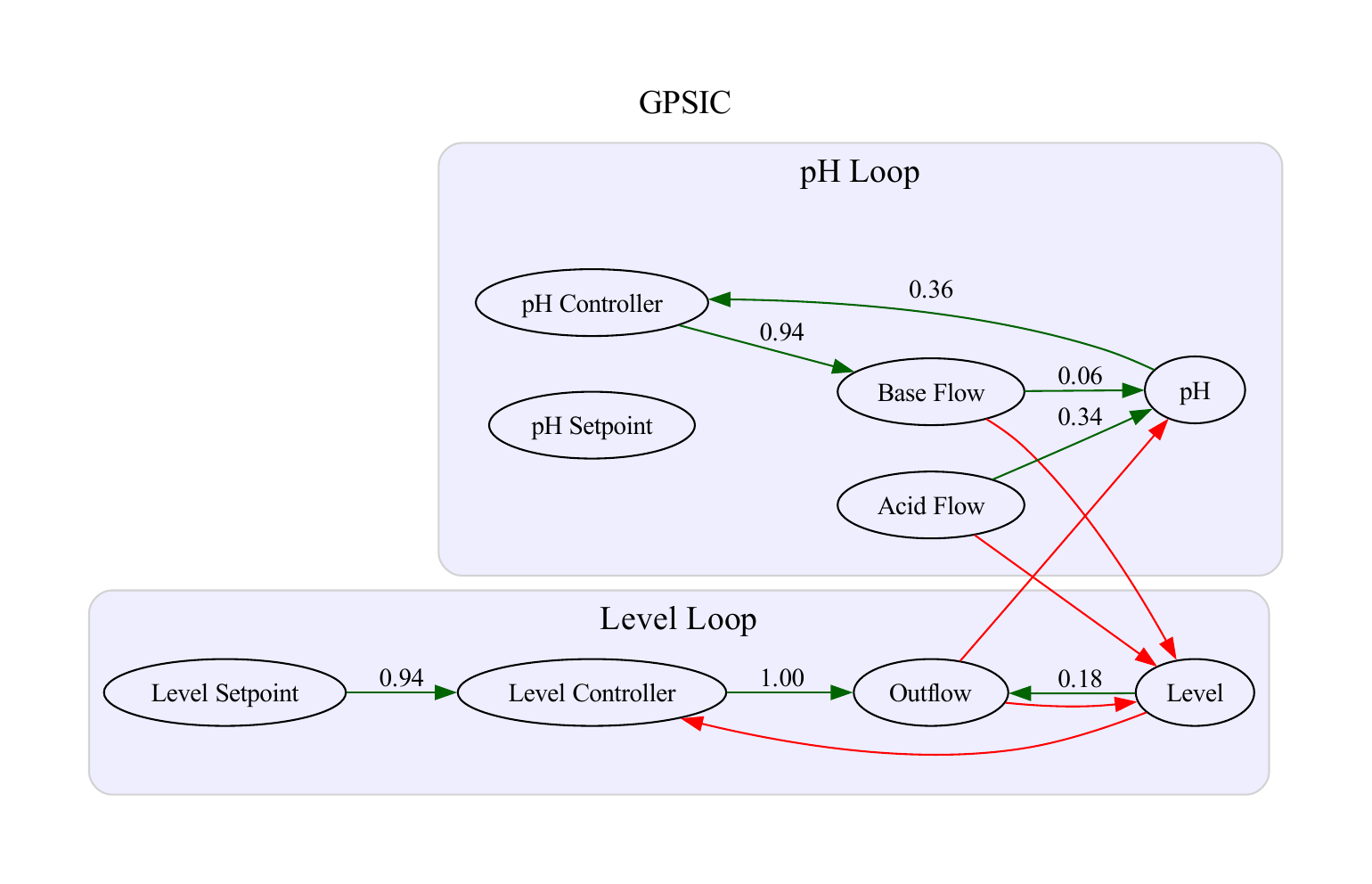}
   \includegraphics[width=0.95\linewidth]{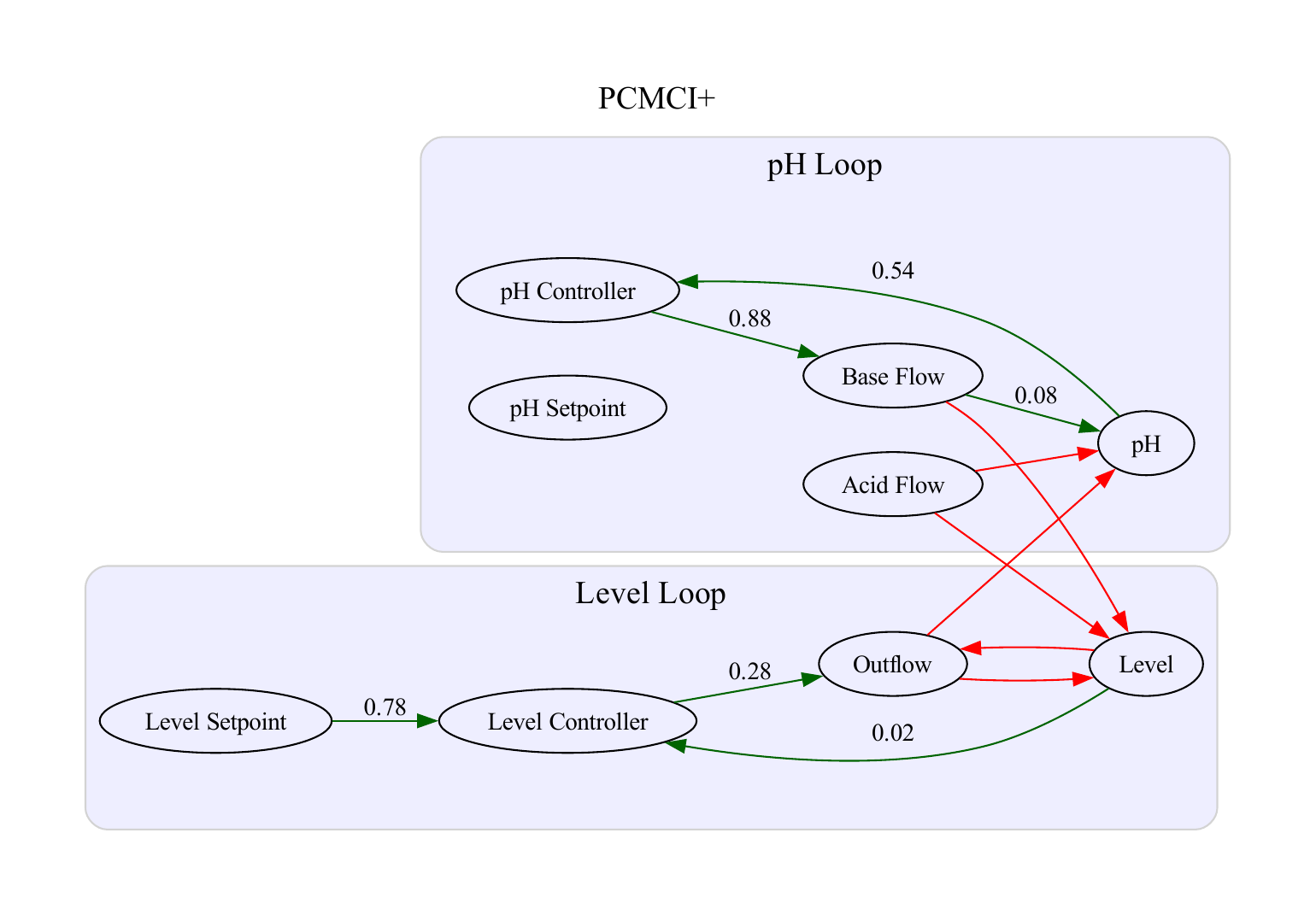}
    \caption{The causal diagram recovered from the $GP_{SIC}$-based algorithm and PCMCI$+$ for the pH neutralisation plant data in \emph{regulatory} mode. Green arrows correspond to true edges and are labeled with the rate of identified edges for 50 MC simulations. Red arrows correspond to false negatives. False positives were excluded from the diagrams for clarity.}
    \label{fig:phPlant_diagrams}
\end{figure}

\clearpage
\subsection*{Method Implementations}

\paragraph{LsNGC}
The Large-Scale Nonlinear Granger Causality method was evaluated following the implementation described in \citep{wismuller_large-scale_2021}.
That is, we used False Discovery Rate correction for multiple tests, and the number of cluster centers was set to $c_f = 25$ and $c_g = 5$ for each simulated dataset.

\paragraph{KGC}
The Kernel Granger Causality method was evaluated according to the implementation described in \citep{marinazzo_kernel_2008}. The method was evaluated with a Square Exponential kernel. The lengthscale for the SE kernel was set according to the heuristic $l = 2n_tm$.

\paragraph{PCMCI}
The PCMCI and PCMCI$+$ methods, proposed in \citep{runge_detecting_2019, runge_discovering_2022}, were evaluated using the Tigramite library. There are several different Conditional Independence Tests that can be selected as the first phase of the PCMCI algorithm for causal identification. We used the Gaussian Process Distance Correlation (GPDC) conditional independence test, which is suitable for identifying a variety of nonlinear (and linear) causal relationships \citep{runge_detecting_2019}. We used the default setting of no multiple testing correction for both PCMCI and PCMCI$+$.

\paragraph{KPCR and $\bf{GP_{SIC}}$}
We have made the code available for the KPCR and $GP_{SIC}$ methods. We have also included our implementations for the GP-based GC methods \citep{amblard_gaussian_2012, cui_gaussian_2022, zaremba_statistical_2022}. An analogous method to the GP method from \cite{cui_gaussian_2022} was implemented by removing the SIC penalisation from the $GP_{SIC}$ method. The GP GLRT \cite{zaremba_statistical_2022} was implemented with a critical value $\alpha=0.05$.

\subsection*{Simulation Setup}

\paragraph{Two Variable Systems}

The Logistic One-way and Logistic Two-way systems \citep{martinez-sanchez_decomposing_2024, sugihara_detecting_2012} are defined by 
\begin{align}
    \label{twospecies}
    &x_1(t+1) = x_1(t)(r_1 - r_1x_1(t) - \gamma_{1,2}x_2(t)) + \epsilon(t)~, \\
    &x_2(t+1) = x_2(t)(r_2 - r_2x_2(t) - \gamma_{2,1}x_1(t)) + \epsilon(t)~,
\end{align}
where $\epsilon(t)$ is additive Gaussian noise with variance $0.01$, and the initial values were set to $x_1(0) = 0.2$ and $x_2(0) = 0.4$. The first $50$ time points were excluded. The Logistic One-way system has the parameter values: $r_1 = 3.7$, $r_2 = 3.7$, $\gamma_{2,1} = 0.2$, $\gamma_{1,2} = 0$. The Logistic Two-way system has the parameter values: $r_1 = 3.5$, $r_2 = 3.9$, $\gamma_{2,1} = 0.2$, $\gamma_{1,2} = 0.01$.

The Stochastic Linear system \citep{martinez-sanchez_decomposing_2024, richardson_1922} is defined as 
 \begin{align}
     &x_1(t + 1) = 0.95\sqrt{2}x_1(t) - 0.9025x_1(t - 1) + \epsilon_1(t) ~, \\
     &x_2(t + 1) = 0.5x_1(t - 1) + \epsilon_2(t) ~, 
 \end{align}
 where $\epsilon_i \sim N (0, 1)$. The Stochastic Nonlinear system \citep{martinez-sanchez_decomposing_2024, bueso_explicit_2020} is defined as 
 \begin{align}
     &x_1(t + 1) = 3.4x_1(t)(1 - x_1(t)^2)\exp(-x_1(t - 1)^2) + \epsilon_1(t) ~, \\
     &x_2(t + 1) = 3.4x_2(t)(1 - x_2(t)^2)\exp(-x_2(t)^2) + \frac{x_1(t-1)x_2(t)}{2} + \epsilon_2(t)
 \end{align}
where $\epsilon_i \sim N (0, 1)$.

\paragraph{Three Variable Systems}
The Three Fan-in and Three Fan-out systems \citep{wismuller_large-scale_2021, ma_detecting_2014}  are described by
\begin{align}
    x_j(t+1) = x_j(t)(\gamma_{jj} - \sum_{i=1,2,3}\gamma_{ji}x_i(t)) ~, j=1,2,3 ~,
\end{align}
where the parameters for the fan-in system are $\gamma_{1,1}=4, \gamma_{2,2}=3.6, \gamma_{3,3} = 2.12, \gamma_{3,1}=0.636, \gamma_{3,2} = -0.636$, and all others zero. The fan-out case uses the parameters $\gamma_{1,1}=4, \gamma_{2,2}=3.1, \gamma_{3,3} = 2.12, \gamma_{2,1}=0.21, \gamma_{3,1} = -0.636$, and all others zero. We used initial conditions drawn from a uniform distribution on the interval $[0,1)$ for the Fan-in and Fan-out examples.We discard the first $50$ time points generated for each time series. 

The Mediator, Confounder, Synergistic Collider, and Redundant Collider systems are described in \citep{martinez-sanchez_decomposing_2024} as standard examples of different types of interactions that can occur in systems with more than two variables, where confounding effects may be present. The Mediator system is defined by 
\begin{align}
    &x_1(t + 1) = \sin(x_2(t)) + 0.001\epsilon_1(t) ~, \\
    &x_2(t + 1) = \cos(x_3(t)) + 0.01\epsilon_2(t) ~, \\
    &x_3(t + 1) = 0.5x_3(t) + 0.1\epsilon_3(t) ~. 
\end{align}
The Confounder system is defined by 
\begin{align}
    &x_1(t+1) = \sin(x_1(t) + x_3(t)) + 0.01\epsilon_1(t) ~, \\
    &x_2(t+1) = \cos(x_2(t) - x_3(t)) + 0.01\epsilon_2(t) ~, \\
    &x_3(t+1) = 0.5x_3(t) + 0.1\epsilon_3(t) ~. 
\end{align}
The Synergistic Collider system is defined by 
\begin{align}
    &x_1(t+1) = \sin(x_2(t)x_3(t)) + 0.001\epsilon_1(t) ~, \\
    &x_2(t + 1) = 0.5x_2(t) + 0.1 \epsilon_2(t) ~, \\
    &x_3(t+1) =  0.5x_3(t) + 0.1 \epsilon_3(t) ~.
\end{align}
The Redundant Collider system is defined by 
\begin{align}
    &x_1(t+1) = 0.3x_1(t) + \sin(x_2(t)x_3(t)) + 0.001\epsilon_1(t) ~, \\
    &x_2(t + 1) = 0.5x_2(t) + 0.1 \epsilon_2(t) ~, \\
    &x_3(t+1) \equiv x_2(t+1) ~.
\end{align}
In each example, $\epsilon_i(t) \sim N (0, 1)$, and the first 50 time points were excluded. 

The Synchronization system \citep{martinez-sanchez_decomposing_2024} with various coupling scenarios is defined by 
\begin{align}
    &x_1(t + 1) = r_1x_1(t)(1 - x_1(t))+ \epsilon_1(t), \\
    &x_2(t + 1) = r_2(\frac{x_2(t) + c_{1,2}x_1(t)}{1 + c_{1,2}})(1 - \frac{x_2(t) + c_{1,2}x_1(t)}{1 + c_{1,2}}) + \epsilon_2(t), \\
    &x_3(t+1) = r_3(\frac{x_3(t) + c_{1,2,3}x_1(t) + c_{1,2,3}x_2(t)}{1 + 2c_{1,2,3}})(1 - \frac{x_3(t) + c_{1,2,3}x_1(t) + c_{1,2,3}x_2(t)}{1 + 2c_{1,2,3}}) + \epsilon_3(t).
\end{align}
The parameters $r_1 = 3.68, r_2=3.67, r_3=3.78$ were used for every coupling scenario. The coupling parameters were varied by scenario: One-way intermediate uses $c_{1,2} = 0.1, c_{1,2,3}=0$, One-way strong uses $c_{1,2} = 1, c_{1,2,3} = 0$ and Two-way strong uses $c_{1,2} = 0, c_{1,2,3} = 1$. The noise terms are independently sampled from $\epsilon_i(t) \sim N(0, 1e-5)$.

\paragraph{Moran-Effect Model}
The Moran Effect \citep{moran_statistical_1953, martinez-sanchez_decomposing_2024} model is defined by the equations
\begin{align}
    &R_i(t + 1) = r_iN_i(t)(1 - N_i(t))e^{-\psi_iV(t)} ~, \\
    &N_i(t + 1) = s_iN_i(t) + \max(R_i(t - D_i), 0) ~,
\end{align}
for $i=1,2$. $N_i$ are the variables of the system, $R_i$ are mediators of a shared external influence $V \sim N(0,1)$. Thus, while correlated, the variables $N_1$ and $N_2$ maintain causal independence, representing the ecological phenomenon of simultaneous changes in the sizes of two non-interacting populations due to a shared environmental factor. The parameters we used to simulate the data are as follows: $r_1=3.4, r_2 = 2.9, s_1 = 0.4, s_2 = 0.35, D_i = 4, \psi_1 = 0.5, \psi_2 = 0.6$. The initial value for both $R_i$ was 1, and for both $N_i$ was $0.5$. We drop out the first $50$ generated time points. 

\paragraph{Five Variable Systems}
The Five Linear system \citep{wismuller_large-scale_2021, baccala_partial_2001} is defined as 
\begin{align}
    &x_1(t) = x_1(t) + 0.95\sqrt{2}x_1(t-1)-0.9025x_1(t-2) + \sigma\epsilon(t)~, \\
    &x_2(t) = x_2(t) + 0.5x_1(t - 2) + \sigma\epsilon(t) ~, \\
    &x_3(t) = -0.4x_1(t-3) + \sigma\epsilon(t)~, \\
    &x_4(t) = x_4(t) - 0.5x_1(t-2) + 0.5\sqrt{2}x_4(t-1) + 0.25\sqrt{2}x_5(t-1) + \sigma\epsilon(t)~,\\
    &x_5(t) = x_5(t) - 0.5\sqrt{2}x_4(t-1) + 0.5\sqrt{2}x_5(t-1) + \sigma\epsilon(t) ~,
\end{align}
where $\epsilon(t) \sim N (0, 1)$ and $\sigma = 0.01$. The Five Nonlinear system \citep{wismuller_large-scale_2021, baccala_partial_2001} incorporates nonlinear dependencies via 
\begin{align}
    &x_1(t) = x_1(t) + 0.95\sqrt{2}x_1(t-1)-0.9025x_1(t-2) + \sigma\epsilon(t)~, \\
    &x_2(t) = x_2(t) + 0.5x_1^2(t - 2) + \sigma\epsilon(t) ~, \\
    &x_3(t) = -0.4x_1(t-3) + \sigma\epsilon(t)~, \\
    &x_4(t) = x_4(t) - 0.5x_1^2(t-2) + 0.5\sqrt{2}x_4(t-1) + 0.25\sqrt{2}x_5(t-1) + \sigma\epsilon(t)~,\\
    &x_5(t) = x_5(t) - 0.5\sqrt{2}x_4(t-1) + 0.5\sqrt{2}x_5(t-1) + \sigma\epsilon(t) ~,
\end{align}
where $\epsilon(t) \sim N (0, 1)$ and $\sigma = 0.01$. We dropped out the first $50$ time points that were generated, and used initial conditions drawn from the standard normal distribution $N(0,1)$.

\paragraph{Eight Variable System}
We adapt the Eight Species system from \citep{martinez-sanchez_decomposing_2024, leng_partial_2020} via the model
\begin{align}
    &x_1(t + 1) = x_1(t)(3.9 - 3.9x_1(t)) + \epsilon_1(t) ~,\\
    &x_2(t + 1) = x_2(t)(3.5 - 3.5x_2(t)) + \epsilon_2(t) ~, \\
    &x_3(t + 1) = x_3(t)(3.62 - 3.62x_3(t) - 0.35x_1(t) - 0.35x_2(t)) + \epsilon_3(t) ~, \\
    &x_4(t + 1) = x_4(t)(3.75 - 3.75x_4(t) - 0.35x_2(t)) + \epsilon_4(t) ~, \\
    &x_5(t + 1) = x_5(t)(3.65 - 3.65x_5(t) - 0.35x_3(t)) + \epsilon_5(t) ~, \\
    &x_6(t + 1) = x_6(t)(3.72 - 3.72x_6(t) - 0.35x_3(t)) + \epsilon_6(t) ~, \\
    &x_7(t + 1) = x_7(t)(3.57 - 3.57x_7(t) - 0.35x_6(t)) + \epsilon_7(t) ~, \\
    &x_8(t + 1) = x_8(t)(3.68 - 3.68x_8(t) - 0.35x_6(t)) + \epsilon_8(t) ~,
\end{align}
where $\epsilon_i(t) \sim N(0, 0.005)$. The first $50$ time points are dropped out. 

\paragraph{20 and 30 Nonlinear Systems}
The 20 and 30 node nonlinear systems were simulated according to the model 
\begin{align}
    x_j(t+1) = ax_j(t) + \sum_icf(x_i(t - \tau_i)) + \epsilon_j(t), \epsilon_j(t) \sim N(0, 1),
\end{align}
where $a=0.4, c=0.4$ and $f(x) = (1 + 5xe^{-\frac{x^2}{20}})x$. The number of edges in the true causal graph was selected according to $L = \lfloor 1.5n_t\rfloor$, and edges were randomly assigned. 

\paragraph{Contemporaneous Systems}

The Contemporaneous systems were simulated in a similar manner to the numerical experiments in \cite{runge_discovering_2022}. We simulated the data according to the model
\begin{align}
    x_j(t+1) = ax_j(t) + \sum_icf(x_i(t - \tau_i)) + \epsilon_j(t), \epsilon_j(t) \sim N(0, 1),
\end{align}
where the autocorrelation and causal magnitude parameters were selected as either $(a, c) = (0.4, 0.4)$ or $(a, c) = (0.8, 0.4)$ for the nonlinear examples to assess the effect of varying degrees of autocorrelation. For the linear case with $n_t=5, m=3$, we tested $a=\{0.4, 0.5\}$.
The nonlinear function used for the contemporaneous systems was $f(x) = (1 + 5xe^{-\frac{x^2}{20}})$. The number of edges in the true causal graph was selected according to $L = \lfloor 1.5n_t\rfloor$, with  $30\%$ contemporaneous edges, and edges were randomly assigned. 

\end{document}